\documentclass[10pt, journal]{IEEEtran}

\ifCLASSOPTIONcompsoc
  \usepackage[nocompress]{cite}
\else
  \usepackage{cite}
\fi

\hyphenation{op-tical net-works semi-conduc-tor}
\usepackage{array}
\usepackage{amsfonts}
\usepackage{amsthm}
\usepackage{times}
\usepackage{epsfig}
\usepackage{graphicx}
\usepackage{amsmath}
\usepackage{amssymb}
\usepackage{booktabs}
\usepackage{multirow}
\usepackage{makecell}
\usepackage{bbding}
\usepackage{upgreek}
\usepackage{bm}
\usepackage{enumitem}
\usepackage{paralist}
\usepackage{float}
\usepackage{subcaption}
\usepackage{threeparttable}
\usepackage{fontawesome}
\usepackage{bbm}
\usepackage{booktabs}
\usepackage[citecolor=blue, colorlinks]{hyperref}
\usepackage{soul}
\usepackage{tabularx}
\usepackage{tikz}
\usepackage{wrapfig}

\usepackage[ruled,linesnumbered]{algorithm2e}
\newtheorem{theorem}{Theorem}

\newcommand{\revise}[1]{\textcolor{black}{#1}}

\begin{document}

\title{
DDM:  A Metric for Comparing 3D Shapes Using Directional Distance Fields
}

\author{Siyu Ren, Junhui Hou,~\IEEEmembership{Senior Member,~IEEE,} Xiaodong Chen, Hongkai Xiong,~\IEEEmembership{Fellow,~IEEE,} \\and Wenping Wang,~\IEEEmembership{Fellow,~IEEE}                       
\IEEEcompsocitemizethanks{
\IEEEcompsocthanksitem This work was supported in part by the NSFC Excellent Young Scientists Fund 62422118, and in part by the Hong Kong
Research Grants Council under Grants 11219324 and 11219422. (\textit{Corresponding author: Junhui Hou})
\IEEEcompsocthanksitem S. Ren and J. Hou are with the Department
of Computer Science, City University of Hong Kong, Hong Kong SAR. Email: siyuren2-c@my.cityu.edu.hk, jh.hou@cityu.edu.hk\protect 
\IEEEcompsocthanksitem X. Chen is with the School of Precision Instruments and Opto-Electronic Engineering, Tianjin University, Tianjin 300072, China (e-mail: xdchen@tju.edu.cn)
\IEEEcompsocthanksitem Hongkai Xiong is with the Department of Electronic Engineering, Shanghai Jiao Tong University, Shanghai 200240, China (e-mail: xionghongkai@sjtu.edu.cn)
\IEEEcompsocthanksitem W. Wang is with the Department of Computer Science \& Engineering, Texas A\& M University, USA. Email: wenping@tamu.edu.

}

}


\maketitle 
\begin{abstract}
Qualifying the discrepancy between 3D geometric models, which could be represented with either point clouds or triangle meshes, is a pivotal issue with board applications. Existing methods mainly focus on directly establishing the correspondence between two models and then aggregating point-wise distance between corresponding points, resulting in them being either inefficient or ineffective. In this paper, we propose DDM, an efficient, effective, robust, and differentiable distance metric for 3D geometry data. Specifically, we construct DDM based on the proposed implicit representation of 3D models, namely directional distance field (DDF), which defines the directional distances of 3D points to a model to capture its local surface geometry.  
We then transfer the discrepancy between two 3D geometric models as the discrepancy between their DDFs defined on an identical domain, 
naturally establishing model correspondence. To demonstrate the advantage of our DDM, we explore various distance metric-driven 3D geometric modeling tasks, including template surface fitting, rigid registration, non-rigid registration, scene flow estimation and human pose optimization. Extensive experiments show that our DDM achieves significantly higher accuracy under all tasks. As a generic distance metric, DDM has the potential to advance the field of 3D geometric modeling. The source code is available at  \url{https://github.com/rsy6318/DDM}.
\end{abstract}

\begin{IEEEkeywords}
3D point clouds, 3D mesh, distance metric, geometric modeling,  shape registration, scene flow estimation
\end{IEEEkeywords}

\section{Introduction}\label{sec:introduction}
\IEEEPARstart{T}{hree}-dimensional (3D) geometric models, which could be represented with either 3D point clouds or triangle meshes, have found extensive applications in various fields, including computer vision/graphics and robotics. Quantifying the discrepancy between 3D geometry data is critical in these applications. For instance, in tasks such as self-supervised surface registration \cite{ICP,ICPPPP,ARL,AMM}, reconstruction \cite{FOLDINGNET, AE1, AE2, AE3}, generation \cite{TEXT2MESH, CLIPMESH}, and scene flow estimation \cite{POINTPWC,NSFP,SCOOP}, a typical distance metric needs to be employed to drive the optimization/learning process.  
Unlike 2D images, where the discrepancy between them (e.g., one being the ground-truth/reference and the other one reconstructed through a typical task) can be easily computed pixel-by-pixel, owing to the canonical pixel coordinate system, quantifying the discrepancy between two 3D geometric models is non-trivial. This is primarily because of their unstructured nature, where data are irregularly distributed, and the correspondence information is unknown. 

\begin{figure*}[h]
\centering
\subfloat[EMD]{\includegraphics[width=0.17\textwidth]{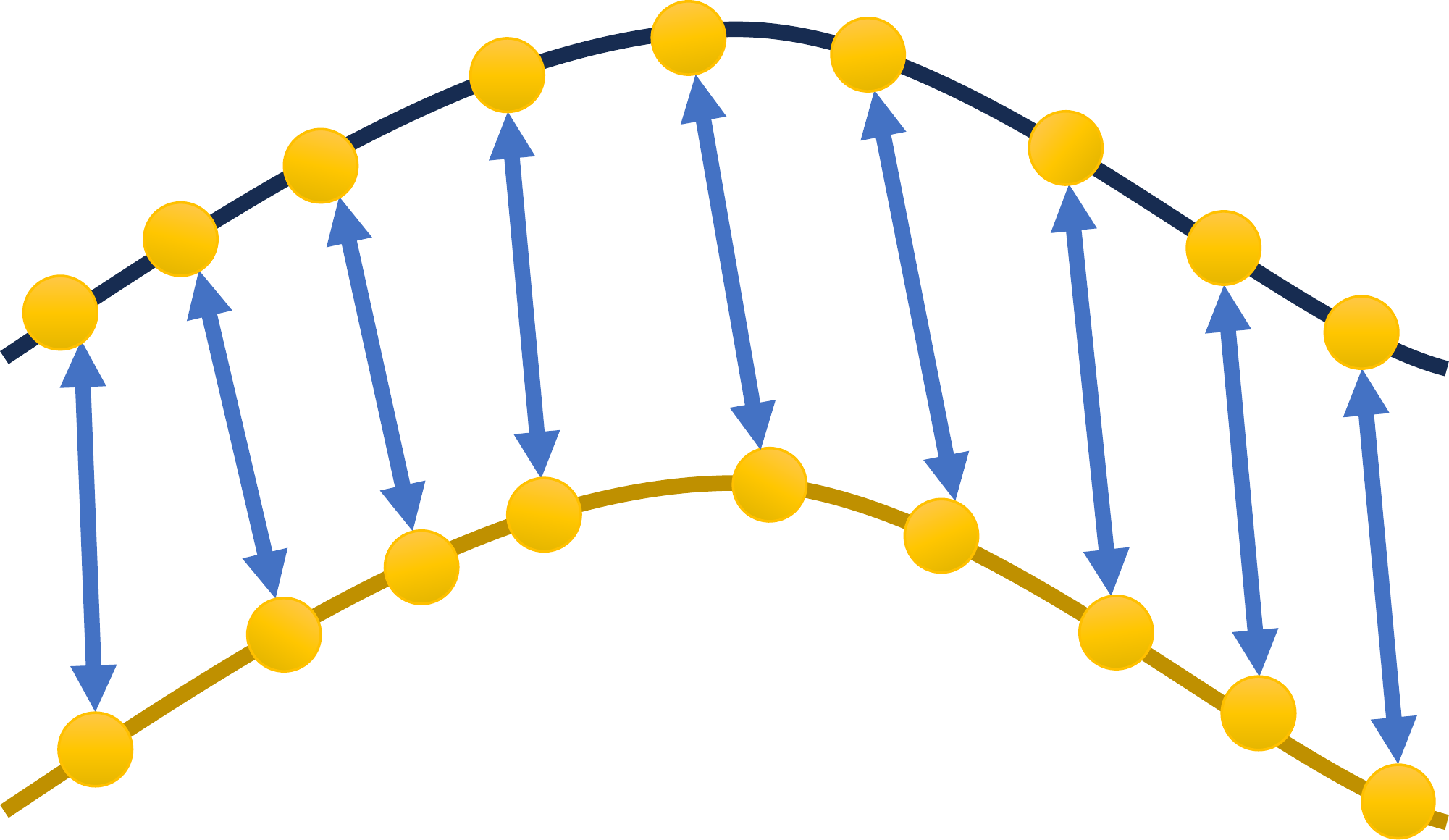}\label{P2PFIG}} \quad\
\subfloat[CD]{\includegraphics[width=0.17\textwidth]{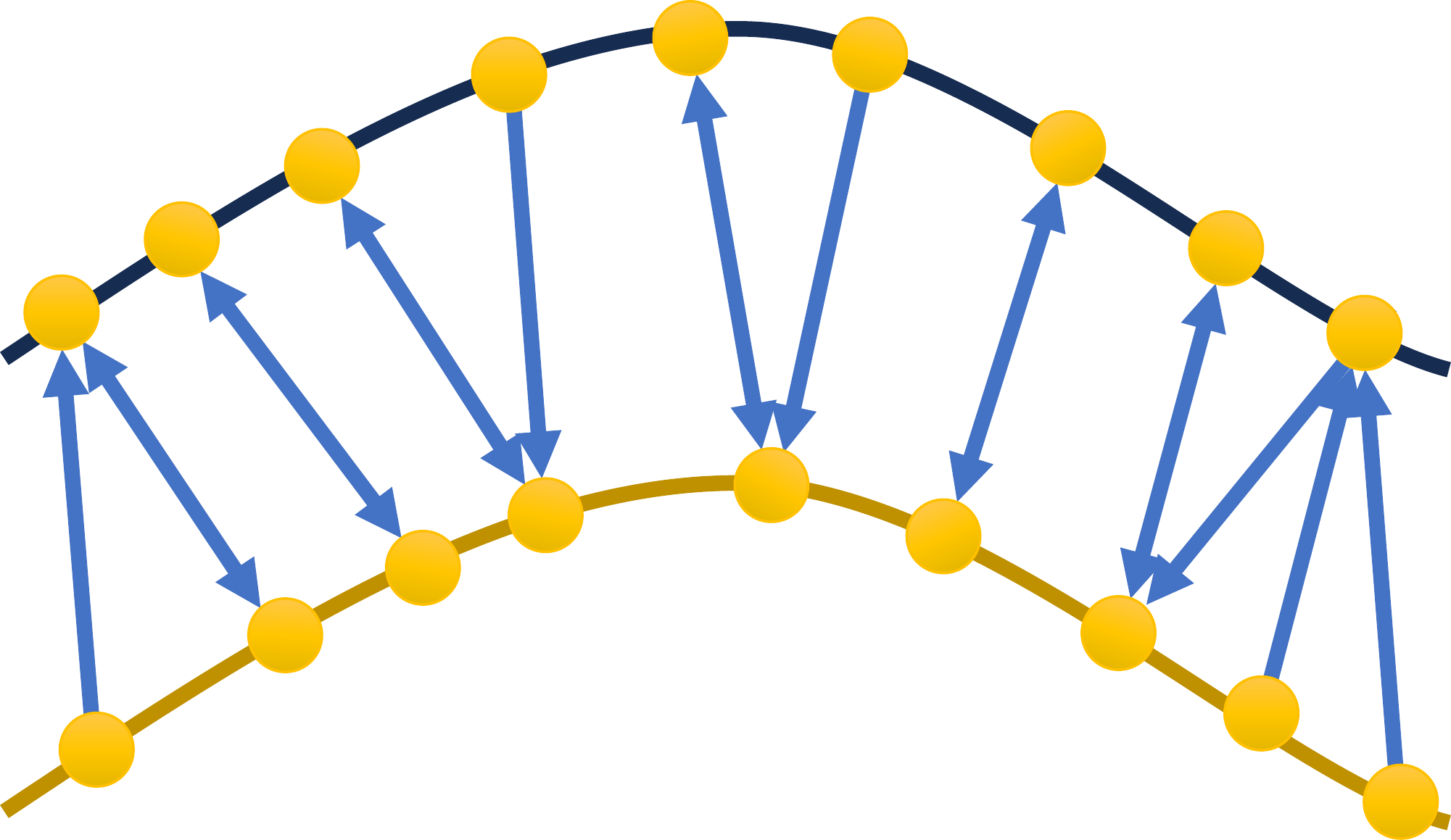}\label{P2P:CDFIG}} \quad\
\subfloat[P2F]{\includegraphics[width=0.17\textwidth]{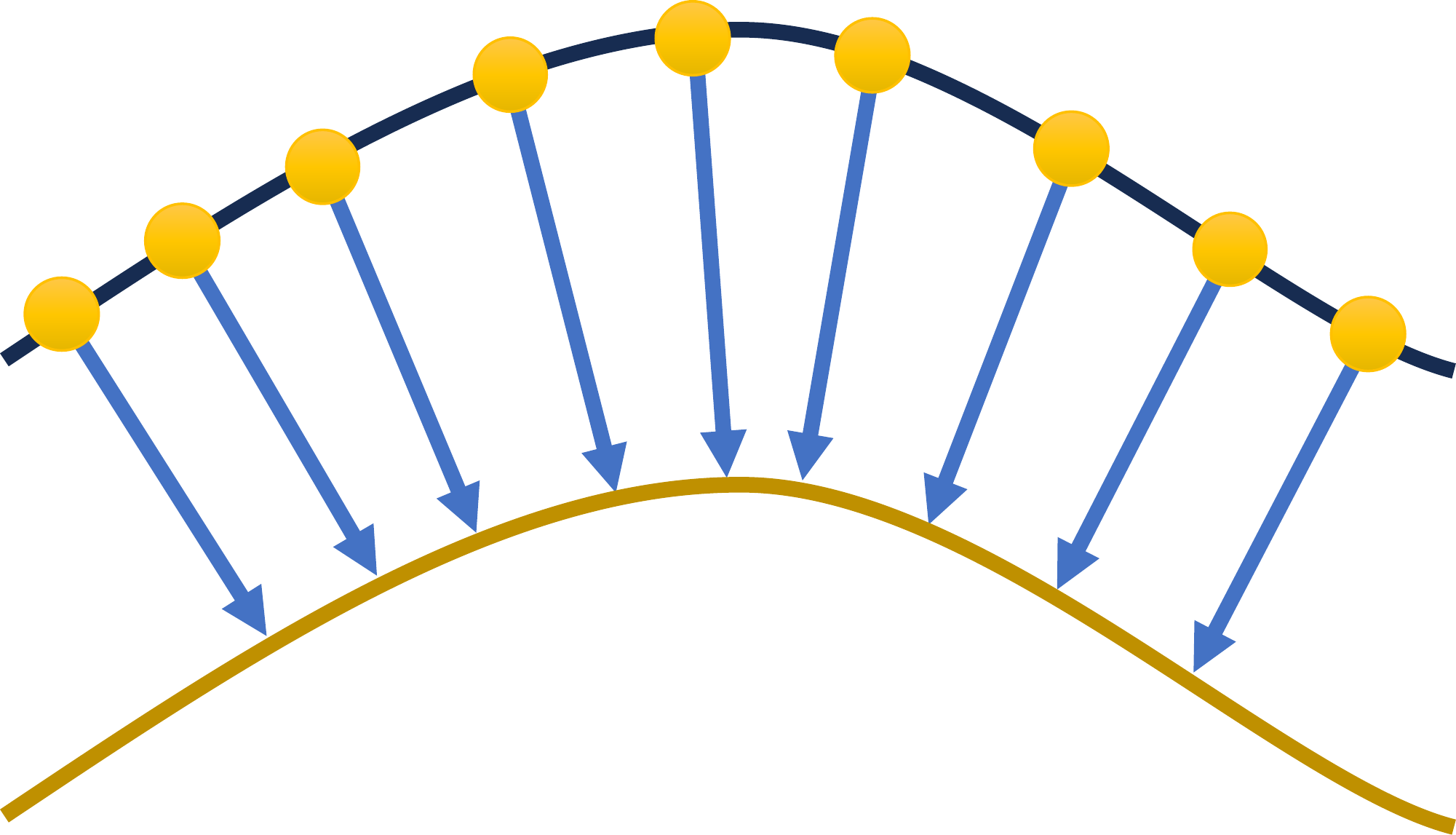}\label{P2FFIG}} \quad\
\subfloat[\revise{ARL}]{\includegraphics[width=0.17\textwidth]{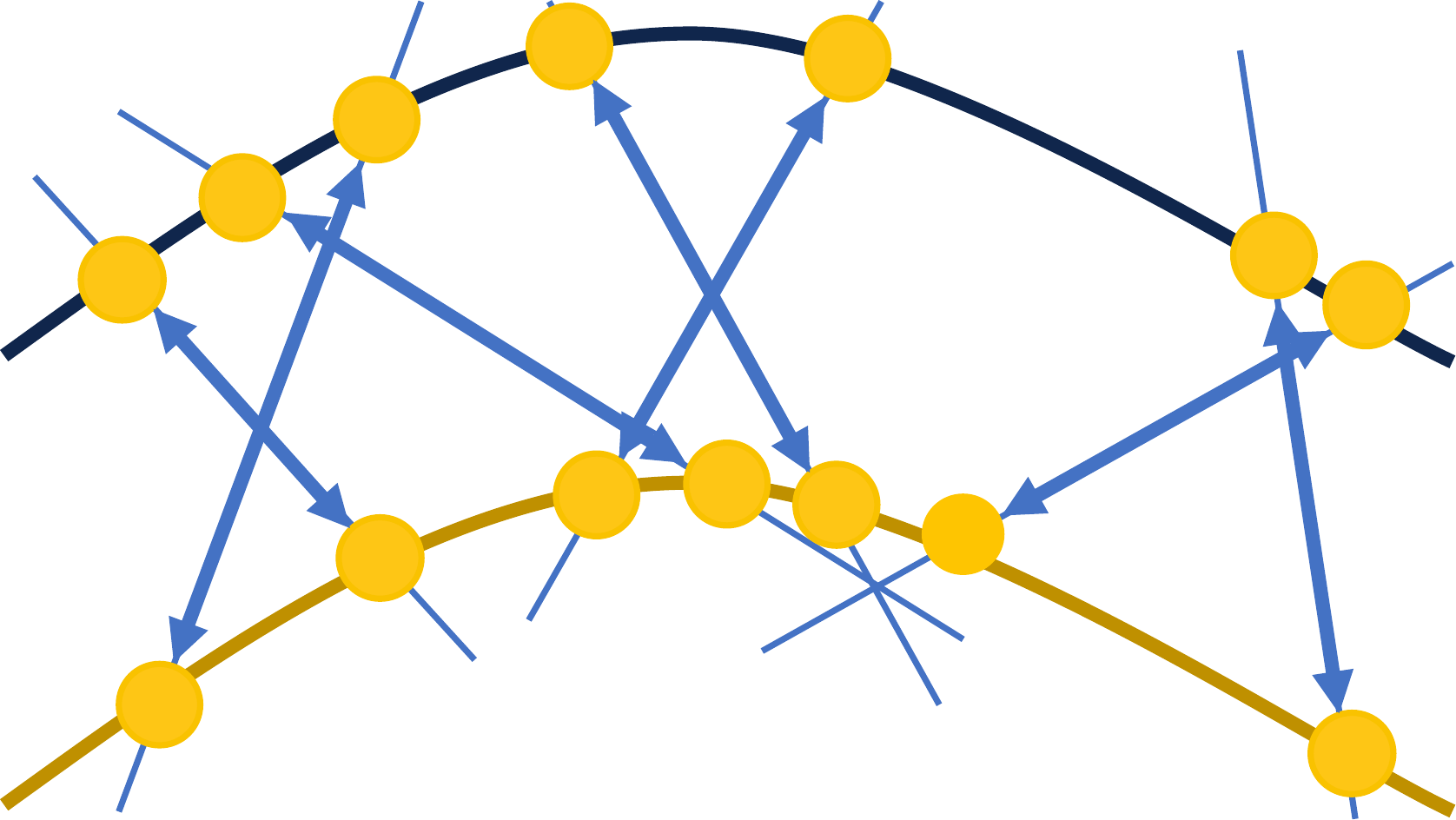}\label{ARLFIG}}\quad
\subfloat[Ours]{\includegraphics[width=0.17\textwidth]{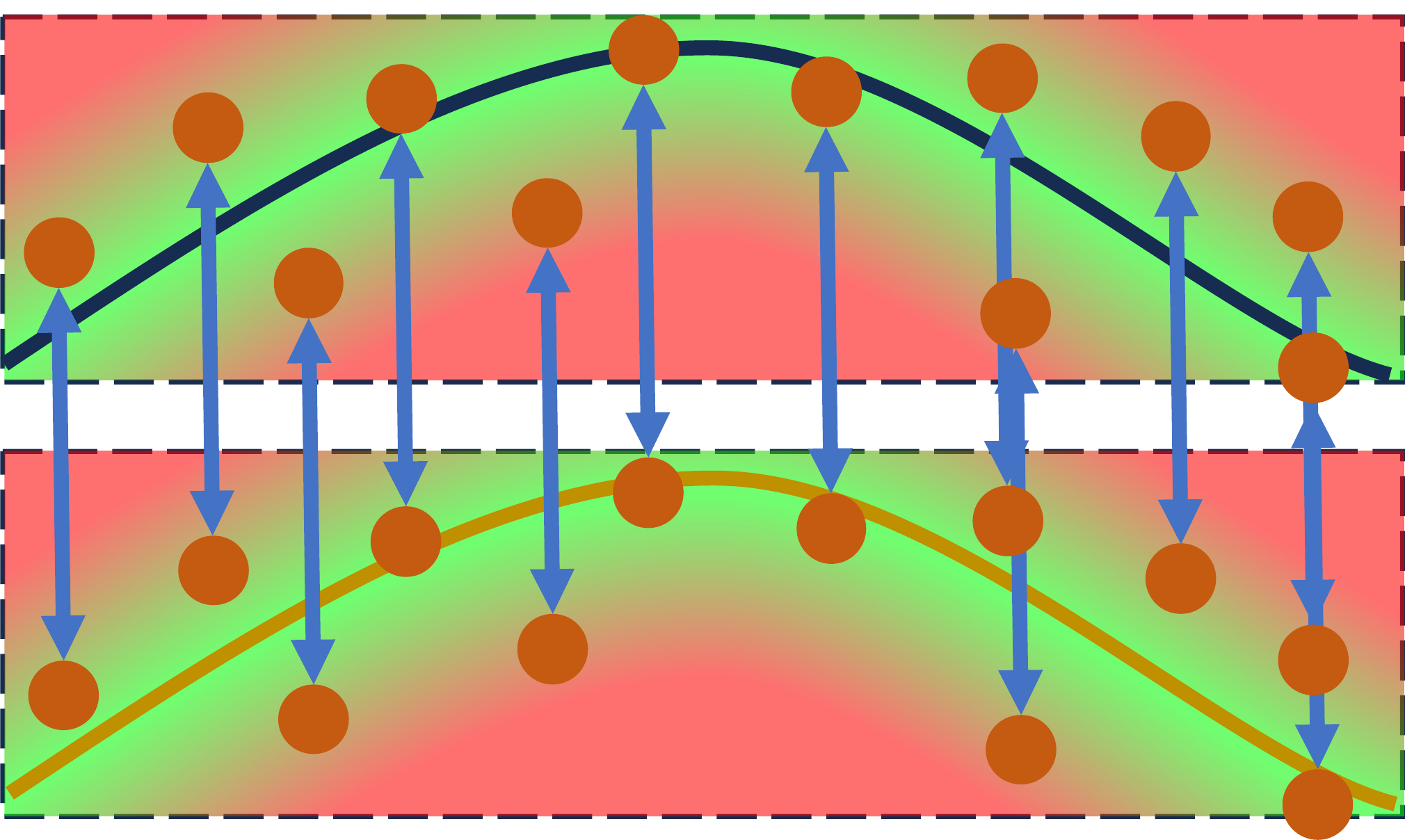}\label{OURSFIG}}
\caption{\revise{ Visual illustration of different distance metrics for 3D geometry data. For convenience, we use 2D illustration. The \textcolor[rgb]{1,0.752,0}{yellow} points in (a), (b), (c), and (d) refer to 3D points located on 3D surfaces indicated by curves, and the \textcolor[rgb]{0.772,0.352,0.066}{brown} points in (e) represent the generated reference points. The \textcolor[rgb]{0.266,0.447,0.768}{blue} arrows represent the established correspondence. The color in (e) changes from \textcolor{green}{green} to \textcolor[rgb]{1,0.478,0.478}{red} indicating the distance fields of the two surfaces indicated by curves (i.e., the set of the distances of arbitrary points in 3D space to the surfaces).  }
}
\end{figure*}

Existing distance metrics for 3D geometry data predominantly fall into two main categories, point-to-point (P2P) and point-to-face (P2F) distances.  In the P2P category, two representative and commonly employed metrics are Earth Mover's Distance (EMD) \cite{EMD} and Chamfer Distance (CD) \cite{CD}, as illustrated in Figs. \ref{P2PFIG} and \ref{P2P:CDFIG}, respectively. These metrics generally establish point-to-point correspondences between two point clouds\footnote{Directly measuring the discrepancy between two continuous 3D surfaces presents challenges. When dealing with continuous 3D surfaces, such as triangle meshes, a typical sampling method is often applied to sample a set of points on the surface for computation.}
and subsequently compute the aggregation of the point-wise distances between the corresponding points. However,  these metrics suffer from limitations as they operate on sampled points, disregarding the continuity of the surfaces, which renders them ineffective. Additionally, the correspondence establishment process can be time-consuming, such as the bijection adopted in EMD.  Alternatively, the P2F method \cite{METRO}, as illustrated in Fig. \ref{P2FFIG}, mitigates the shortcomings of P2F to some extent, which involves sampling points from one surface and then computing each sampled point's closest distance to the other surface to measure the discrepancy between two surfaces. However, the closest point search could make it prone to suboptimal solutions. \revise{In contrast, the ARL method \cite{ARL} employs randomly sampled lines to intersect with the two surfaces, utilizing the resulting intersection points to establish correspondences, as illustrated in Fig. \ref{ARLFIG}. However, the positions of these intersection points are inherently dependent on the spatial arrangement of the sampled lines. Consequently, the use of randomly sampled lines may introduce inaccuracies in the correspondence between the geometric data, potentially leading to erroneous alignments.}
Although several improved methods \cite{BCD, SWD, DPDIST, P2F_improved, P2F_improved2} have been proposed, they still suffer from inefficiency or ineffectiveness. See the detailed review in Sec. \ref{sec:RW}.

To address this fundamental and challenging issue of quantifying the discrepancy between 3D geometric models, we propose a novel robust, efficient, and effective distance metric called DDM.Distinguished from existing metrics, DDM emphasizes the implicit representation of 3D geometric models, as depicted in Fig. \ref{OURSFIG}. Methodologically, for any pair of 3D models, we begin by constructing their individual implicit fields, referred to as directional distance fields (DDFs). The discrepancy between these fields serves as an indicator of the distance between the corresponding 3D geometric models. To compute DDM, we generate a set of reference points that are distributed near the models and shared by both models. For each reference point, we calculate its directional distances to the two models, capturing the local surface geometry. By taking the weighted average of the directional distance discrepancies contributed by all reference points, we obtain the final DDM. The use of DDFs enables DDM to robustly handle optimization convergence issues by effectively capturing surface geometry. Moreover, DDM does not require a direct correspondence establishment process, resulting in high efficiency. 

Notably, DDM is differentiable, allowing seamless integration as a module in various tasks. We leverage DDM to develop methods for diverse 3D geometric modeling and processing tasks, including template surface fitting, rigid and non-rigid registration, scene flow estimation, and human pose optimization. Extensive experiments validate the superiority of DDM in terms of robustness, efficiency, and effectiveness

In summary, the main contributions of this paper are:
\begin{compactitem}
\item an efficient, effective, robust, and generic distance metric for 3D geometry data, dubbed DDM;
\item various state-of-the-art geometric modeling and processing methods driven by our DDM;
\end{compactitem}

The remainder of this paper is structured as follows. Section \ref{sec:RW} provides a brief overview of existing literature on 3D geometry representations, distance metrics for 3D geometry data, and distance metric-driven 3D geometry processing. Section \ref{sec:proposed method} introduces our proposed DDM in detail. Section \ref{sec:tasks} outlines the specifics of the five general distance metric-driven 3D geometric tasks, followed by comprehensive experiments on these tasks in Section \ref{sec:exp}. Finally, Section \ref{CONCLUSION} concludes this work.

\section{Related Work}
\label{sec:RW}
\subsection{Explicit and Implicit 3D Geometry Representations}
\noindent In the domain of 3D modeling and computer graphics, surface representations are predominantly categorized as either explicit or implicit, and these can be transformed into one another. 

\subsubsection{Explicit Representation} Voxelization \cite{IFNET} is the most intuitive representation method for surfaces, which utilizes the regularly distributed grids to represent the surface, converting the surfaces as 3D `images'. However, it requires large memory consumption, limiting its application. Point clouds, consisting of discrete points sampled from surfaces, have emerged as a predominant method for surface representation \cite{ICP,POINTNET,POINTNET2}. While they are widely adopted, point clouds have their constraints, particularly in downstream applications like rendering. Images rendered from point clouds can exhibit gaps or holes due to the inherent sparsity of the points. Triangle meshes are a more precise and efficient form of surface representation, using numerous triangles to approximate surfaces. Triangle meshes store the topologies of each triangular face, making them more accurate and efficient for some downstream tasks compared to point clouds. However, the inherent topologies make triangle meshes more challenging to process compared to point clouds, especially when inputting them into neural networks \cite{DIFFUSIONNET}. 

\subsubsection{Implicit Representation} Implicit representation leverages the isosurface of a function or field for surface depiction. Binary Occupancy Field (BOF) and Signed Distance Field (SDF) are two widely used implicit representations and used in many reconstruction methods \cite{PSR, SPSR, OCCNET, CONVOCCNET, SAP, POCO, SDFORIGINAL, IMLS, IMLS2, DEEPIMLS, DEEPSDF}. However, these representations require that the surfaces are watertight shapes and cannot represent more general shapes because they devide the whole space into inside and outside regions, limiting their applications. On the other hand, Unsigned Distance Field (UDF) is derived from the absolute value of SDF and does not categorize the space into inside and outside regions. This characteristic allows UDF to represent more general shapes than BOF and SDF. However, this also increase the difficulty when extracting the surfaces from UDF. Thus UDF is often used along with its gradient \cite{NDF, MESHUDF} or other attributes \cite{GIFS} to represent the surfaces.

\subsubsection{Conversion between Surface Representations 
}
Surface representations can be converted between explicit and implicit forms. For surfaces depicted as point clouds, there exists a suite of traditional methodologies \cite{PSR, SPSR, IMLS, IMLS2} in addition to data-driven approaches \cite{OCCNET, CONVOCCNET, DEEPIMLS, DEEPSDF, DOG, NDF, GIFS, GEOUDF} for their transformation into implicit fields. When dealing with surfaces characterized by triangle meshes, the process becomes relatively straightforward due to the inherent approximation of surfaces by the discrete triangle faces, and techniques such as \cite{MESH2SDF, MESH2SDF2} facilitate this conversion. Conversely, when converting from implicit to explicit forms, algorithms like Marching Cubes \cite{MARCHINGCUBE} and its derivatives \cite{MESHUDF, GEOUDF} can reconstruct triangle meshes from various implicit fields, encompassing BOF, SDF, and UDFs.\\

\subsection{Distance Metrics for 3D Geometry Data}
Based on the established correspondence category, the prevailing distance metrics for surfaces can be categorized into two primary types, point-to-point (P2P) and point-to-face (P2F) distances.

\subsubsection{P2P Distance} After sampling points on the surfaces to convert them into point clouds, the distance between surfaces can be represented by that between point sets. 
Earth Mover's Distance (EMD) \cite{EMD} and Chamfer Distance (CD) \cite{CD} are two most widely used distance metrics. Specifically, EMD establishes a comprehensive bi-directional mapping between two point sets, subsequently leveraging the summation or average of distances between associated points to determine their overall distance. However, this bijection computation proves to be computationally intensive, especially with a surge in point count. In contrast, CD, by determining the nearest point in the alternative set, achieves a more efficient local mapping, though at times it's susceptible to local minima or non-ideal results. The Hausdorff Distance (HD) \cite{HD}, an adaptation of CD, emphasizes outliers, which often compromises its ability to capture finer point cloud details, relegating its usage more to evaluation rather than primary computation. Balanced Chamfer Distance (BCD) \cite{BCD}, with its innovative approach, infuses density information as weights into CD, yielding a model with enhanced resilience to outliers. Meanwhile, the Sliced Wasserstein Distance (SWD) \cite{SWD} harnesses the prowess of sliced Wasserstein distances and its derivatives, showcasing efficiency and efficacy superior to both EMD and CD, especially in shape representation endeavors.

\subsubsection{P2F Distance} 
An alternative distance metric between surfaces employs the P2F distance \cite{P2FORIGINAL}. In this approach, points are sampled from one of the two given surfaces. Subsequently, the aggregate or mean distance between these sampled points and the opposing surface is computed, serving as a representation of the distance between the two surfaces.  
Numerous registration methods, as described in \cite{POINT2PLANE,SICP}, rely on measuring the distance to tangent planes of sampled points to approximate 
the P2F distance. Nevertheless, these approaches exhibit inherent bias, particularly in curved regions.
 Pottmann \textit{et al.} \cite{P2F_improved, P2F_improved2} utilized curvature information of the surfaces to approximate the P2F distance to accelerate the optimization.  Mesh Hausdorff Distance (MHD) \cite{MESHHD} is a variant of the P2F distance, placing greater emphasis on surface outliers while often overlooking intricate surface details. This characteristic renders it particularly suitable as an evaluation metric. 
DPDist \cite{DPDIST} employs a neural network to estimate the point-to-face distance between two surfaces. Nonetheless, the accuracy of the trained network may diminish if there's a shift in the data distribution, posing challenges to its generalizability. Recently, ARL \cite{ARL} introduces randomly sampled lines and then calculates the intersections of these lines on the surfaces, where the disparity between these intersections is used to measure the distance between the two surfaces. However, the randomness of the sampled lines may cause instability for its measurement.

\subsection{Distance Metric-driven 3D Geometry Processing}

Evaluating the similarity between surfaces is fundamental for multiple tasks in surface analysis and processing. Notably, this is pivotal in both unsupervised rigid and non-rigid surface registrations. Conventional registration techniques, such as \cite{ICP,AMM}, harness the distance between surfaces as an objective function to adjust the pose. Moreover, some unsupervised learning-based strategies \cite{ARL} \cite{RMA} incorporate surface distance as an alignment factor within their training loss functions. In scene flow estimation task, recent unsupervised methodologies, including PointPWC-Net \cite{POINTPWC}, NSFP \cite{NSFP}, and SCOOP \cite{SCOOP}, apply distance between surfaces in their loss functions to align the two point clouds after deformation, enabling training without explicit supervision. Recent template-based surface fitting works \cite{POINT2MESH, LARGESTEP, MDA} utilize the distance between 3D shapes to supervise the deformation process of the predefined template surfaces.
In the task of unsupervised human pose estimation from 3D data, LoopReg \cite{SMPL_REG}, utilizes the difference between the Skinned Multi-Person Linear Models (SMPL) \cite{SMPL} and the scanned data as loss function to optimize the human pose.

\section{Proposed Method}
\label{sec:proposed method}

\subsection{Rethinking the Distance Metric for RGB Images 
} \label{SEC:IMG:DIST}
Essentially, an RGB image 
$\mathbf{I}$ could be encoded through a color mapping $\mathcal{F}:\mathbb{R}^2\to\mathbb{R}^{3}$, where the input is the 2D coordinate of 
a pixel, denoted as $(u,~v)$, and the output is the corresponding pixel value, denoted as $[r,~g,~b]$. 
Alternatively, an RGB can be re-parameterized as $\mathbf{I}:=\{\mathcal{F}(u,v)|(u,v)\in\mathbf{U}\}$, where $\mathbf{U}$ is the image area.

Denote by $\mathbf{I}_1$ and $\mathbf{I}_2$ two RGB images with color mapping $\mathcal{F}_{\mathbf{I}_1}$ and $\mathcal{F}_{\mathbf{I}_2}$, respectively. Assume that $\mathbf{I}_2$ is the counterpart of $\mathbf{I}_1$, which could be generated or reconstructed from a typical task, e.g., image synthesis, restoration, compression, etc.  
As shown in Fig. \ref{IMG:CORR}, we can easily compute the discrepancy 
between $\mathbf{I}_1$ and $\mathbf{I}_2$ as 
\begin{equation}
    \mathcal{D}(\mathbf{I}_1,\mathbf{I}_2)=\iint_{\mathbf{U}}d_{\rm img}(\mathcal{F}_{\mathbf{I}_1}(u,v),\mathcal{F}_{\mathbf{I}_2}(u,v)){\rm d}u{\rm d}v, \label{IMG:DIS}
\end{equation}
where $d_{\rm img}(\cdot,~\cdot)$ returns the distance between two vectors (e.g., $l_1$, $l_2$, cosine similarity, among others). In practice, Eq. (\ref{IMG:DIS}) is usually calculated discretely by sampling regularly distributed pixels in $\mathbf{U}$.

\begin{figure}[t]
    \centering
    \subfloat[RGB Image]{
    \includegraphics[width=0.9\linewidth]{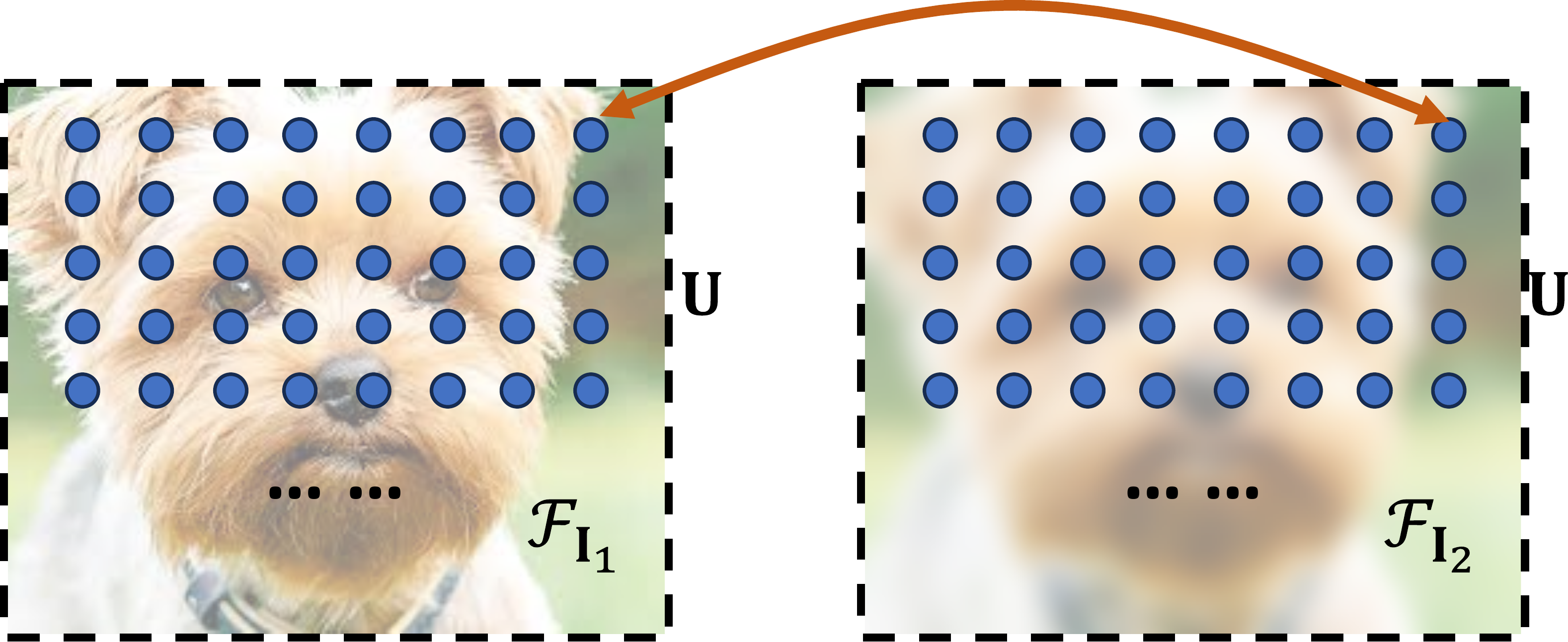}\label{IMG:CORR}} \\ 
    
    \subfloat[3D Geometry]{
    \includegraphics[width=0.9\linewidth]{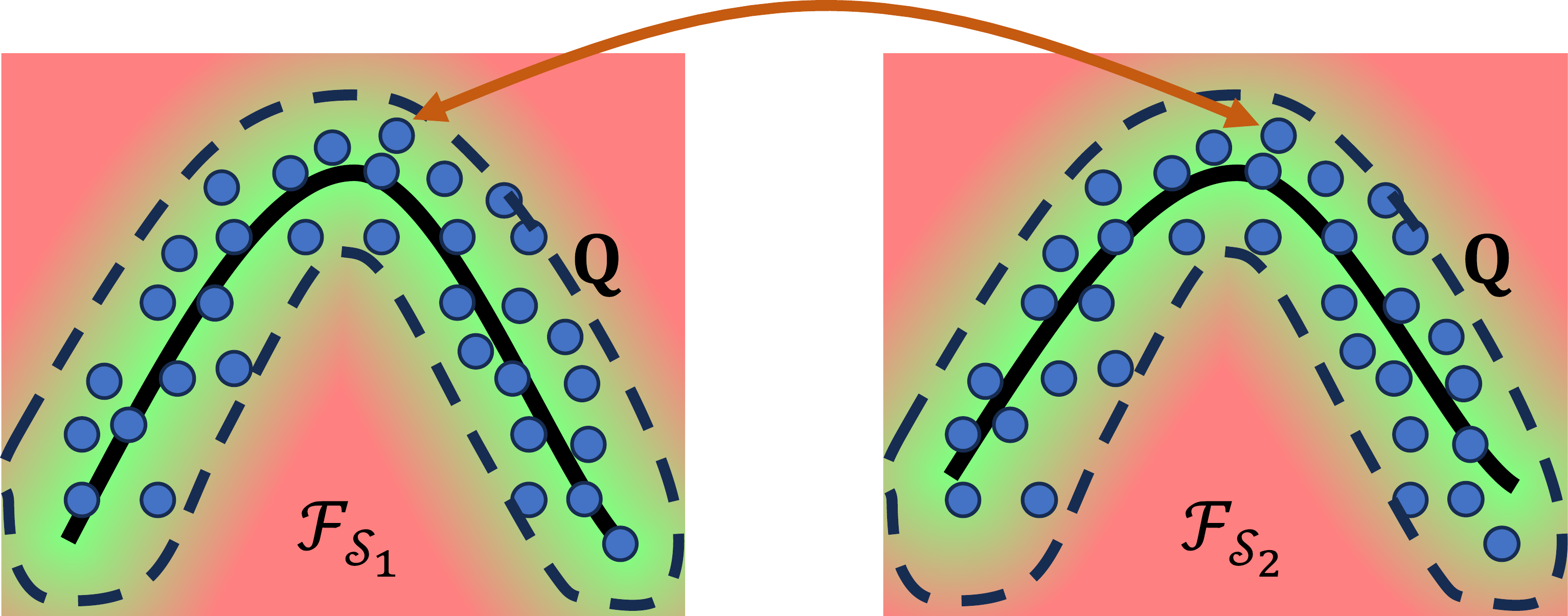}\label{SURF:CORR}}
    \caption{
    (\textbf{a})  \textit{Direct} correspondence establishment between 2D RGB images through the pixel locations uniformly distributed on a regular 2D grid (the \textcolor[rgb]{0.266,0.447,0.768}{blue} points). 
    (\textbf{b}) \textit{Indirect} correspondence establishment between 3D geometry shapes through a set of additional reference points highlighted in \textcolor[rgb]{0.266,0.447,0.768}{blue} distributed near the surfaces. Note that the two 3D shapes share an \textit{identical} set of reference points. 
    }
    \label{IMG:PC:CORR}
\end{figure}

\subsection{Problem Statement and Motivation}
Given a pair of 3D geometric models, denoted as $\mathcal{S}_1$ and $\mathcal{S}_2$, which could be represented using either point clouds or triangle meshes, we aim to devise a differentiable distance metric that efficiently and effectively measures the discrepancy between them. As mentioned earlier, the unstructured nature of 3D geometric models poses a significant challenge.  Existing metrics primarily concentrate on establishing direct correspondences between $\mathcal{S}_1$ and $\mathcal{S}_2$ in a P2P or P2F manner, and then aggregating the distances between corresponding points. However, they tend to be either time-consuming or ineffective. 

Motivated by the distance metric for 2D images in Sec. \ref{SEC:IMG:DIST}, to address the above-mentioned limitations, an intuitive solution is to seek an appropriate manner to transfer $\mathcal{S}_1$ and $\mathcal{S}_2$ as the outputs defined on an identical domain.
Specifically, as shown in Fig. \ref{SURF:CORR}, we pre-define a surface domain, $\mathbf{Q}$, for $\mathcal{S}_1$ and $\mathcal{S}_2$, representing the area near them. In practice, $\mathbf{Q}$ is discretized as a set of points located in it, called reference points $\mathbf{Q}=\{\mathbf{q}_m\in\mathbb{R}^3\}_{m=1}^{M}$.
Then, for each reference point, we establish its relationship (similar to the color mapping of images) with $\mathcal{S}_1$ and $\mathcal{S}_2$ separately, symbolized as $\mathcal{F}_{\mathcal{S}_1}(\mathbf{q}_m)$ and $\mathcal{F}_{\mathcal{S}_2}(\mathbf{q}_m)$, and the discrepancy between $\mathcal{F}_{\mathcal{S}_1}(\mathbf{q}_m)$ and $\mathcal{F}_{\mathcal{S}_2}(\mathbf{q}_m)$ reflects the difference between typical regions of $\mathcal{S}_1$ and $\mathcal{S}_2$. By aggregating the discrepancies of all reference points, we can obtain the discrepancy between $\mathcal{S}_1$ and $\mathcal{S}_2$. This indirect calibration eliminates the time-consuming correspondence-matching process employed by previous distance metrics, thereby achieving high efficiency. Technically, to realize $\mathcal{F}_{\mathcal{S}_1}(\cdot)$ and $\mathcal{F}_{\mathcal{S}_2}(\cdot)$, we propose directional distance field (DDF), which is capable of implicitly capturing the local surface geometry of $\mathcal{S}_1$ and $\mathcal{S}_2$ at the specific location of a reference point. This implicit field-based geometric modeling departs from previous methods solely focusing on P2P or P2F differences,making it a more effective approach.

In what follows, we will detail the generation of reference points in Sec. \ref{REF:GEN1} and the definition of DDF in Sec. \ref{DIR:DIS}, leading to our efficient yet effective distance metric for 3D geometry data named DDM in Sec. \ref{DIS:METRIC}. 

\subsection{Generation of Reference Points} 
\label{REF:GEN1}

In the optimization/learning process of a typical geometric modeling task, it is common for one 3D model to remain unchanged as the ground truth, \revise{while the other one is optimized/learned such that it closely approximates the ground truth. Consequently, we focuses on the regions near the unchanged model, and the reference points should be generated within these regions.}

\begin{wrapfigure}{r}{3.5cm}
    \centering
    \includegraphics[width=1\linewidth]{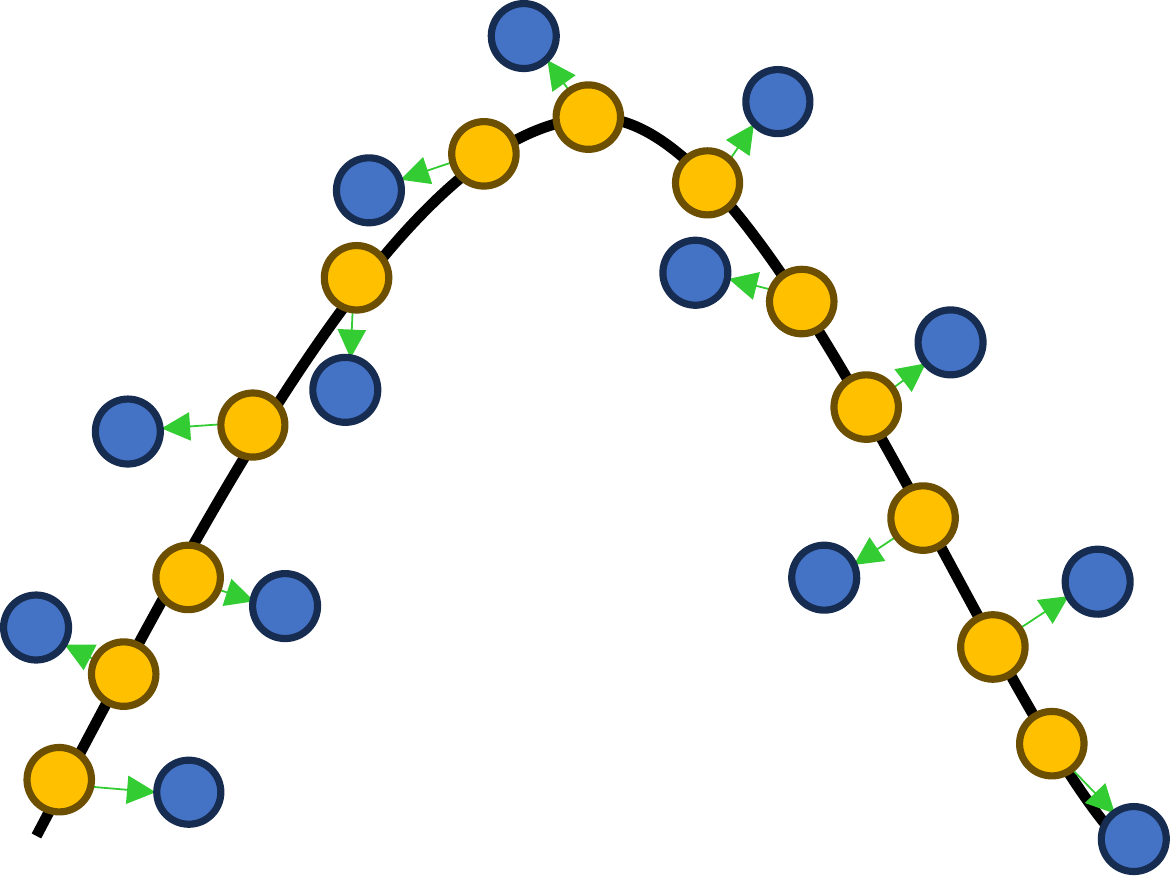}
    \caption{Illustration of the procedure for generating reference points. Here,  
    reference points in \textcolor[rgb]{0.266,0.447,0.768}{blue} are generated by adding offsets to the points sampled in \textcolor[rgb]{1,0.752,0}{orange} on the surface. 
    }\label{REF:GEN}
    \vspace{-0.4cm}
\end{wrapfigure}
Without loss of generality, let's assume $\mathcal{S}_1$ is the unchanged 3D model. In this scenario, we pre-define $\mathbf{Q}$ based on $\mathcal{S}_1$.
Specifically, we add Gaussian noise with a standard deviation $\sigma$ to the points on $\mathcal{S}_1$, displacing them away from the surface, as shown in Fig. \ref{REF:GEN}. If $\mathcal{S}_1$ is represented in the form of a 3D point cloud, we directly introduce the Gaussian noise to its points and iterate this noise addition process 
multi times randomly, resulting in $M$ reference points situated closely to the surface. If $\mathcal{S}_1$ is represented with a triangle mesh, we first sample points on it to convert it to a point cloud, and then we generate $M$ reference points from the sampled point cloud 
through the aforementioned operation.

\subsection{Directional Distance Field} \label{DIR:DIS}

Based on the definition of the reference point introduced in the preceding section, we propose 
Directional Distance Field (DDF) 
to implicitly capture 
the local surface geometry of a 3D model. 

To be specific, denote by $\mathcal{S}$ a continuous 3D surface associated with reference point $\mathbf{q}\in\mathbb{R}^3$. Let $\mathbf{\hat{q}}\in \mathcal{S}$ be the 
closest point to $\mathbf{q}$. 
Thus, the unsigned distance function (UDF) at $\mathbf{q}$ is written as  
\begin{equation} \label{UDF}
    f_\mathcal{S}(\mathbf{q})=\|\mathbf{\hat{q}}-\mathbf{q}\|_2,
\end{equation}
where $\|\cdot\|_2$ is the $\ell_2$ norm of the vector. Moreover, 
we use the direction of UDF to assist in modeling the geometric structure. 
Here, we define the direction as the vector pointing from $\mathbf{\hat{q}}$ to $\mathbf{q}$: 
\begin{equation} \label{UDF:GRAD}
    \mathbf{h}_\mathcal{S}(\mathbf{q})=\mathbf{\hat{q}}-\mathbf{q}.
\end{equation}
By concatenating $f_\mathcal{S}(\mathbf{q})$ and $\mathbf{h}_\mathcal{S}(\mathbf{q})$, we derive a 4D vector as the value of the DDF of $\mathcal{S}$ at location 
$\mathbf{q}$, i.e., 
\begin{equation}
    \mathcal{F}_{\mathcal{S}}(\mathbf{q})=[f_\mathcal{S}(\mathbf{q})||\mathbf{h}_\mathcal{S}(\mathbf{q})]\in\mathbb{R}^4. \label{ddf:concat}
\end{equation}
In the following, we will detail the calculation of $\mathbf{\hat{p}}$ when $\mathcal{S}$ is represented using  either a point cloud or a triangle mesh. 
\\

\begin{figure}[t]
\centering
\subfloat[]{\includegraphics[width=0.23\textwidth]{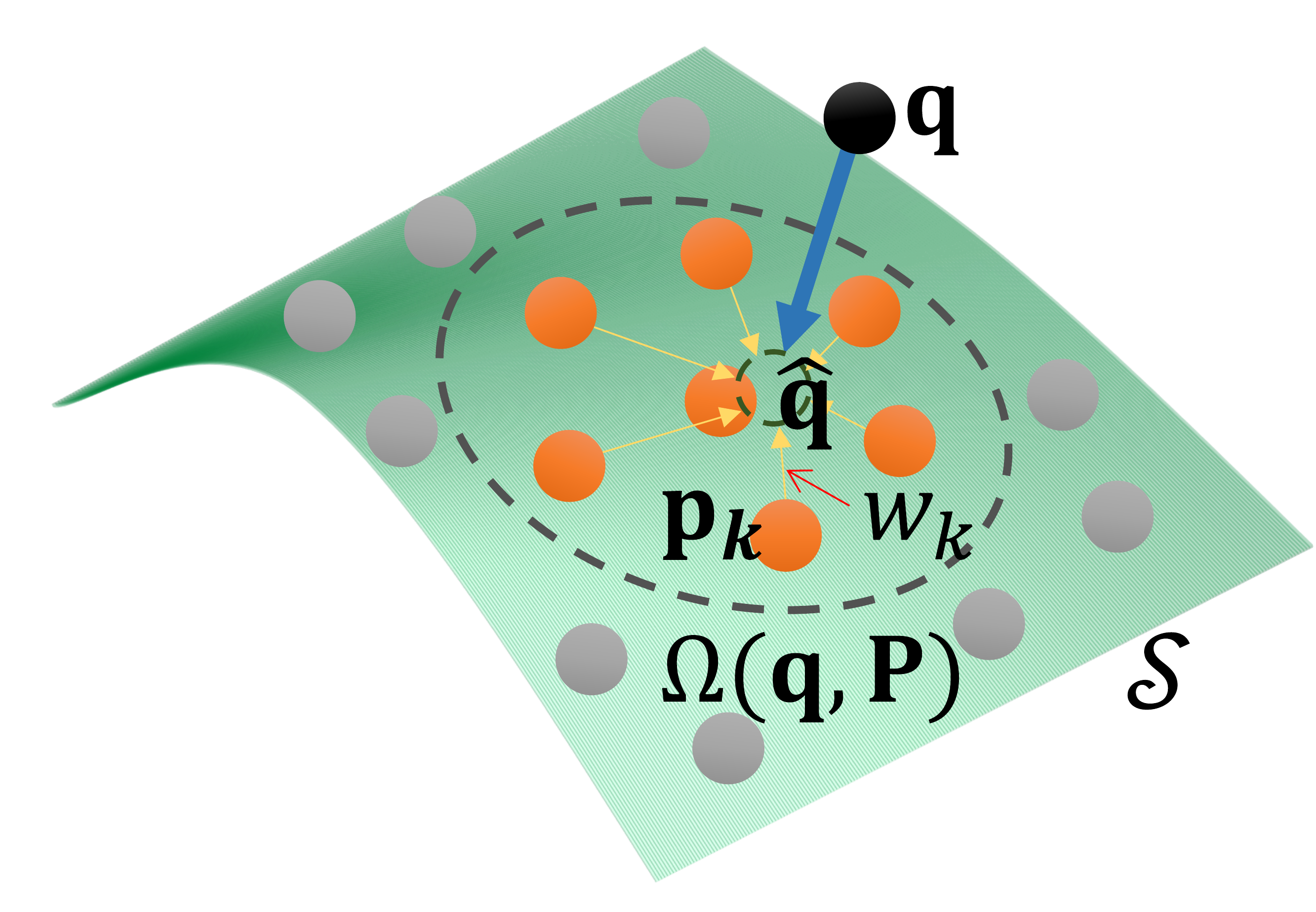} \vspace{-0.2cm}\label{CLOSEST:PC}} \
\subfloat[]{\includegraphics[width=0.23\textwidth]{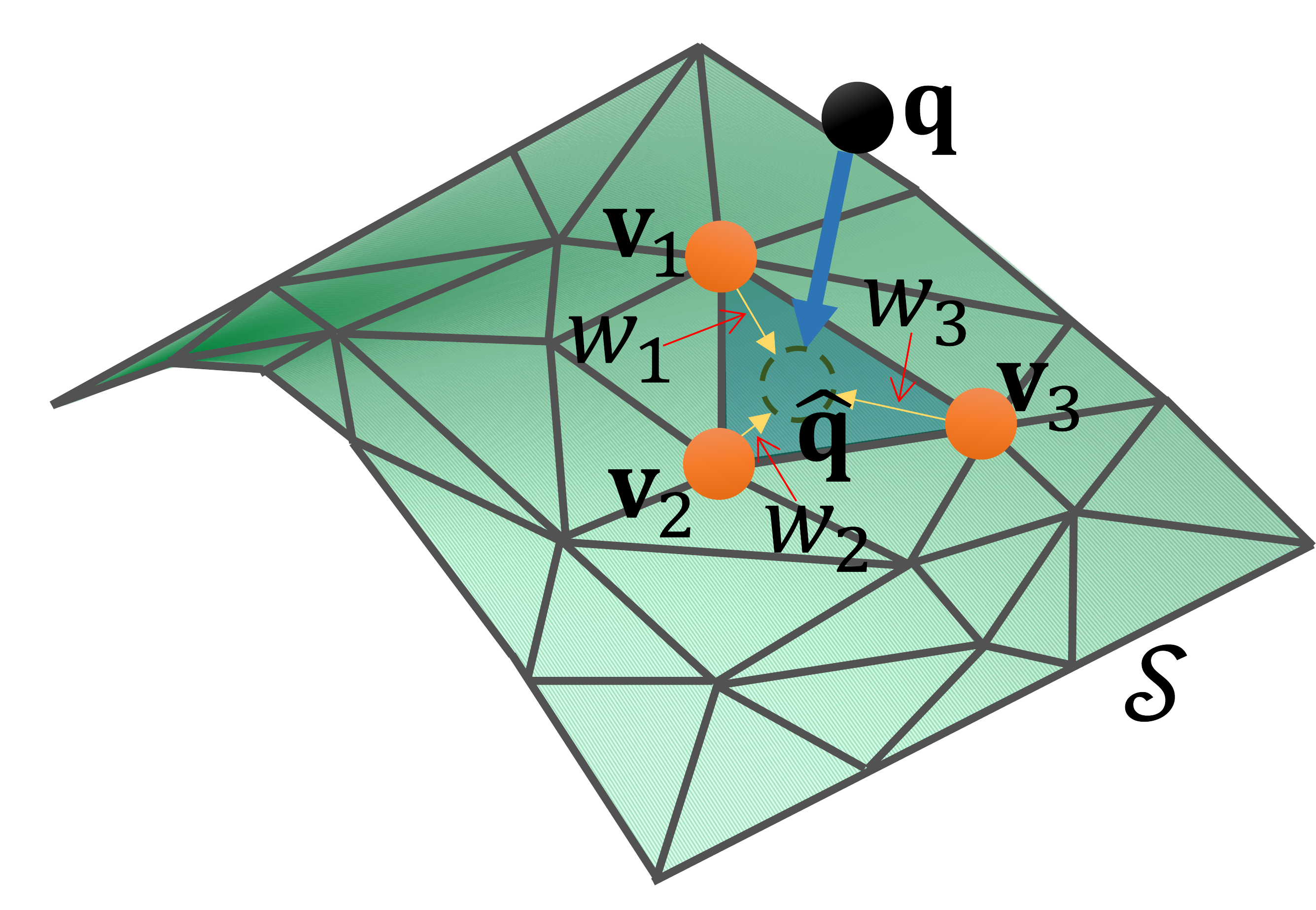}\vspace{-0.2cm}\label{CLOSEST:MESH}}
\caption{ Visualization of the closest point estimation on the surface, depicted as (a) point clouds and (b) triangle meshes.
}
\end{figure}

\noindent\textbf{Point Cloud.} Let $\mathbf{P}:=\{\mathbf{p}_i\in\mathbb{R}^3\}_{i=1}^N$ be the point cloud representing the 3D model of $\mathcal{S}$ 
and $\Omega(\mathbf{q},\mathbf{P}):=\{\mathbf{p}_k\}_{k=1}^{K}$ 
the set of $K$-NN ($K$ Nearest Neighbor) points of $\mathbf{P}$ to $\mathbf{q}$. As shown in Fig. \ref{CLOSEST:PC}, $\mathbf{\hat{q}}$ must lie in the area that $\Omega(\mathbf{q},\mathbf{P})$ covers, and we thus approximate $\mathbf{\hat{q}}$ with the weighted averaging of the points in $\Omega(\mathbf{q},\mathbf{P})$:
\begin{equation}
    \mathbf{\hat{q}}\approx\frac{\sum_{k=1}^K w(\mathbf{q},\mathbf{p}_k)\cdot\mathbf{p}_k}{\sum_{k=1}^K w(\mathbf{q},\mathbf{p}_k)}, \label{EQ:PC}
\end{equation}
where $w(\mathbf{q},\mathbf{p}_k)=1 / \|\mathbf{q}-\mathbf{p}_k\|_2^2$.\\

\noindent\textbf{Triangle Mesh.} Let $\mathbf{V}\in\mathbb{R}^{N\times3}$ and $\mathbf{F}\in\mathbb{N}^{E\times 3}$ be the sets of vertices and face indexes of a triangle mesh representing the 3D model of $\mathcal{S}$, respectively. 
By employing the point-to-surface projection method introduced in \cite{CLOSESTPOINT}, we can easily find the triangle face closest to $\mathbf{q}$, 
with vertices $\mathbf{v}_{i_1}$, $\mathbf{v}_{i_2}$ and $\mathbf{v}_{i_3}$. Furthermore, as shown in Fig. \ref{CLOSEST:MESH},
$\mathbf{\hat{q}}$ is located on $\mathbf{q}$'s closest triangle face, 
so $\mathbf{\hat{q}}$ could be represented as the weighted sum of the three vertices,
\begin{equation}
    \mathbf{\hat{q}}=w_{i_1}\mathbf{v}_{i_1}+w_{i_2}\mathbf{v}_{i_2}+w_{i_3}\mathbf{v}_{i_3}, \label{EQ:MESH}
\end{equation}
where $w_{i_1}$, $w_{i_2}$ and $w_{i_3}$ are the weights for the three vertices satisfying $w_{i_1}+w_{i_2}+w_{i_3}=1$, and can be calculated through the projection method in \cite{CLOSESTPOINT}.

\subsection{DDF-based Distance Metric} \label{DIS:METRIC}
Based on the previously introduced DDF, we can reformulate the representations of  $\mathcal{S}_1$ and $\mathcal{S}_2$ 
as $\mathcal{S}_1=\{\mathcal{F}_{\mathcal{S}_1}(\mathbf{q})|\mathbf{q}\in\mathbf{Q}\}$ and $\mathcal{S}_2=\{\mathcal{F}_{\mathcal{S}_2}(\mathbf{q})|\mathbf{q}\in\mathbf{Q}\}$, respectively, where the correspondence between $\mathcal{S}_1$ and $\mathcal{S}_2$ is established in a indirect fashion through $\mathbf{Q}$. 
We finally define the our distance metric for 3D geometry data named \textbf{DDM} as 
\begin{equation}
    \mathcal{D}_{\rm DDM}(\mathcal{S}_1,\mathcal{S}_2)=\iiint_{\mathbf{Q}}s(\mathbf{q})\cdot d(\mathbf{q},\mathcal{S}_1,\mathcal{S}_2){\rm d}\mathbf{q}, \label{DDF:EQUATION}
\end{equation}
with 
\begin{equation} d(\mathbf{q},\mathcal{S}_1,\mathcal{S}_2)=\|\mathcal{F}_{\mathcal{S}_1}(\mathbf{q})-\mathcal{F}_{\mathcal{S}_2}(\mathbf{q})\|_1, 
\end{equation}
\begin{equation}
    s(\mathbf{q})=\texttt{Exp}(-\beta\cdot d(\mathbf{q},\mathcal{S}_1,\mathcal{S}_2)),
    \label{equ:s(q)}
\end{equation}
where $s(\mathbf{q})$ 
is the confidence score of $d(\mathbf{q},\mathcal{S}_1,\mathcal{S}_2)$ with $\beta\geq 0$ being a hyperparameter. We introduce $s(\mathbf{q})$ to cope with the case where $\mathcal{S}_1$ and $\mathcal{S}_2$ are partially overlapped, i.e., the difference introduced by the reference points located at the overlapping regions have higher confidence scores than those located at the non-overlapping regions, as shown in Fig. \ref{SCORE}. 
\begin{figure}[t]
\centering
    \includegraphics[width=0.95\linewidth]{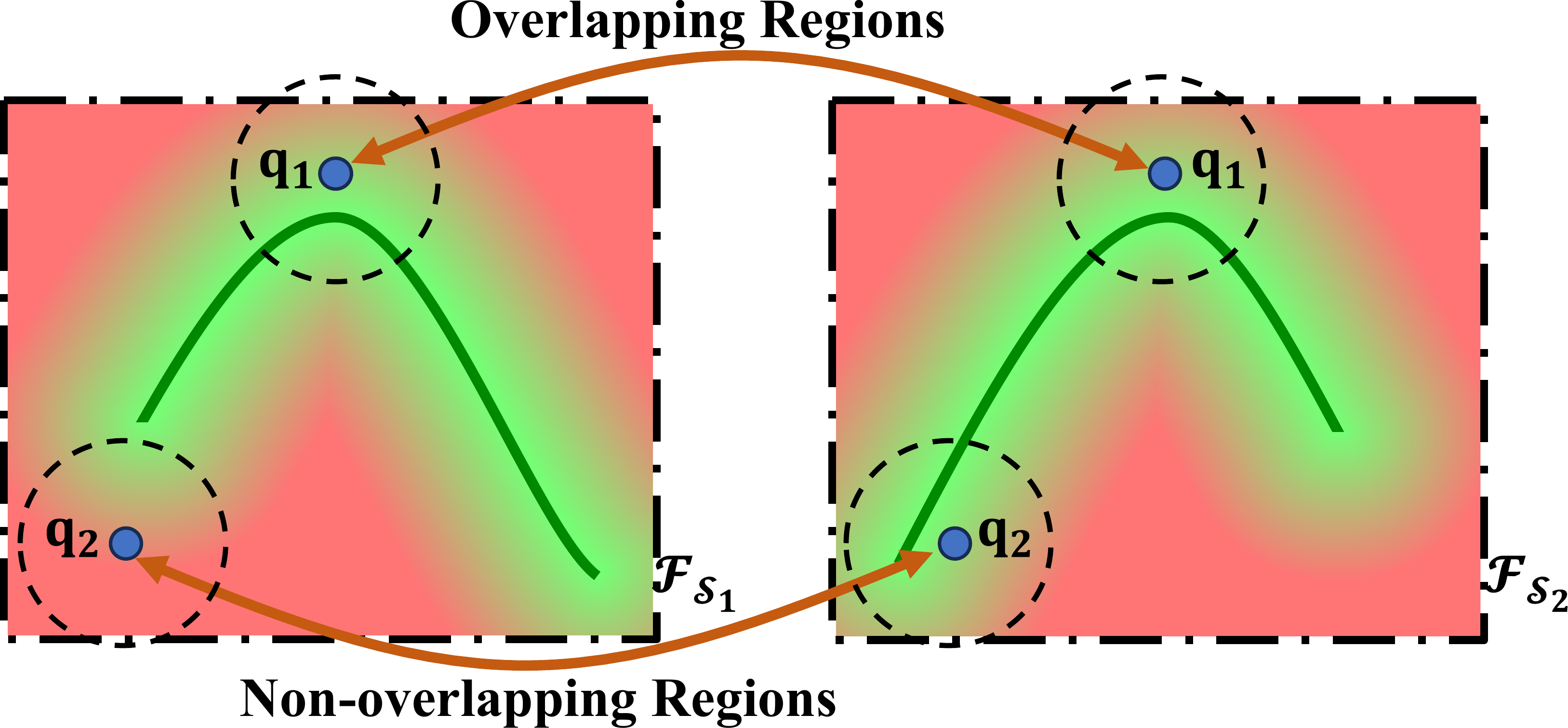}
    \caption{
    Visual illustration of the confidence scores for reference points in both overlapping and non-overlapping regions. Here, $\mathbf{q}_1$ resides in the overlapping region, while $\mathbf{q}_2$ is situated in the non-overlapping region, leading to $s(\mathbf{q}_1)>s(\mathbf{q}_2)$.
    }\label{SCORE}
\end{figure}

The proposed DDM in Eq. \eqref{DDF:EQUATION}  possesses essential properties that align with standard distance metrics, including non-negativity, symmetry, the identity of indiscernibles, and the triangle inequality. Obviously, the first three properties are satisfied once the reference points are given. 
Theorem \ref{theorem:the1} shows that it also satisfies the triangle inequality. \revise{In addition, under some specific settings, existing distance metrics, such as CD and P2F, can be regarded as special cases of our DDM. 
Theorems \ref{theorem:the2} and \ref{theorem:the3}  illustrate these properties.
}

\begin{theorem}
\label{theorem:the1}

Given three surfaces, denoted as $\mathcal{S}_1$, $\mathcal{S}_2$, and $\mathcal{S}_3$, along with the generated reference point set $\mathbf{Q}$, the following inequality holds 
\begin{equation}
    \mathcal{D}_{\rm DDF}(\mathcal{S}_1,\mathcal{S}_2)+\mathcal{D}_{\rm DDF}(\mathcal{S}_2,\mathcal{S}_3)\geq\mathcal{D}_{\rm DDF}(\mathcal{S}_1,\mathcal{S}_3).
    \nonumber
\end{equation}
\end{theorem}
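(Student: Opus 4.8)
The plan is to reduce the claimed triangle inequality for $\mathcal{D}_{\rm DDM}$ to the triangle inequality for a weighted $\ell_1$-type distance, handled pointwise over the reference set $\mathbf{Q}$. First I would observe that for each fixed reference point $\mathbf{q}\in\mathbf{Q}$, the quantity $d(\mathbf{q},\mathcal{S}_i,\mathcal{S}_j)=\|\mathcal{F}_{\mathcal{S}_i}(\mathbf{q})-\mathcal{F}_{\mathcal{S}_j}(\mathbf{q})\|_1$ is simply the $\ell_1$ distance between two fixed vectors in $\mathbb{R}^4$, so it satisfies the ordinary triangle inequality
\begin{equation}
d(\mathbf{q},\mathcal{S}_1,\mathcal{S}_2)+d(\mathbf{q},\mathcal{S}_2,\mathcal{S}_3)\ \geq\ d(\mathbf{q},\mathcal{S}_1,\mathcal{S}_3).
\nonumber
\end{equation}
The whole difficulty is therefore the weighting: $\mathcal{D}_{\rm DDM}$ integrates $s(\mathbf{q})\cdot d(\mathbf{q},\cdot,\cdot)$ against $\mathbf{q}$, and the confidence score $s(\mathbf{q})=\texttt{Exp}(-\beta\,d(\mathbf{q},\cdot,\cdot))$ itself depends on the pair of surfaces, so the three terms in the desired inequality carry three different weights $s_{12}(\mathbf{q})$, $s_{23}(\mathbf{q})$, $s_{13}(\mathbf{q})$. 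So I would work with the map $t\mapsto g(t):=t\,e^{-\beta t}$ on $[0,\infty)$, since $s(\mathbf{q})\cdot d(\mathbf{q},\cdot,\cdot)$ is exactly $g$ evaluated at the relevant $\ell_1$ distance.

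The key step is then a one-variable inequality: for $\beta\ge 0$ and all $a,b\ge 0$,
\begin{equation}
g(a)+g(b)\ \geq\ g(a+b),\qquad g(t)=t\,e^{-\beta t}.
\nonumber
\end{equation}
I would prove this by fixing $b$ and differentiating $\phi(a):=g(a)+g(b)-g(a+b)$ in $a$, getting $\phi'(a)=g'(a)-g'(a+b)$ with $g'(t)=(1-\beta t)e^{-\beta t}$; since $g'$ is decreasing on the relevant range is not quite true globally, I would instead argue more directly that $g$ is subadditive because $e^{-\beta t}$ is nonincreasing: $g(a+b)=(a+b)e^{-\beta(a+b)}=a\,e^{-\beta(a+b)}+b\,e^{-\beta(a+b)}\le a\,e^{-\beta a}+b\,e^{-\beta b}=g(a)+g(b)$, using $e^{-\beta(a+b)}\le e^{-\beta a}$ and $e^{-\beta(a+b)}\le e^{-\beta b}$. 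This handles the $\beta>0$ case cleanly (and $\beta=0$ gives $g(t)=t$, trivially additive). Combined with monotonicity of $g$ on where it matters — actually I only need subadditivity plus the fact that $g$ applied to the smaller argument $d(\mathbf{q},\mathcal{S}_1,\mathcal{S}_3)\le d(\mathbf{q},\mathcal{S}_1,\mathcal{S}_2)+d(\mathbf{q},\mathcal{S}_2,\mathcal{S}_3)$ does not exceed $g$ at the sum only if $g$ is nondecreasing there, which fails past $t=1/\beta$. So the cleanest route is: $g(d_{13})\le g(d_{12}+d_{23})$ requires monotonicity, which is the real subtlety.

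Anticipating that obstacle, I would fall back on the combined inequality directly rather than chaining monotonicity with subadditivity: define $h(t)=g(t)$ and show $h(d_{12})+h(d_{23})\ge h(d_{13})$ whenever $d_{13}\le d_{12}+d_{23}$ by checking that $h$ is nondecreasing on $[0,1/\beta]$ and bounded, and noting that on $[1/\beta,\infty)$ one has $h(t)\le 1/(e\beta)$ while each of $h(d_{12}), h(d_{23})$ can be bounded below appropriately — or, more robustly, restrict attention to the regime where the metric is genuinely used. I suspect the intended proof assumes the $\ell_1$ distances stay in the monotone regime (or simply takes $\beta$ small / the $s(\mathbf{q})$ essentially constant), in which case subadditivity of $g$ together with its monotonicity closes the argument immediately; I would state that hypothesis explicitly if needed. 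Finally, having the pointwise inequality $s_{12}(\mathbf{q})d_{12}(\mathbf{q})+s_{23}(\mathbf{q})d_{23}(\mathbf{q})\ge s_{13}(\mathbf{q})d_{13}(\mathbf{q})$ for every $\mathbf{q}\in\mathbf{Q}$, I integrate (or sum) over $\mathbf{Q}$ and use linearity and monotonicity of the integral to obtain $\mathcal{D}_{\rm DDM}(\mathcal{S}_1,\mathcal{S}_2)+\mathcal{D}_{\rm DDM}(\mathcal{S}_2,\mathcal{S}_3)\ge\mathcal{D}_{\rm DDM}(\mathcal{S}_1,\mathcal{S}_3)$. The main obstacle, to restate, is controlling the surface-dependent confidence weights via the subadditivity/monotonicity of $t\mapsto te^{-\beta t}$; everything else is bookkeeping.
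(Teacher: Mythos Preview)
Your overall strategy matches the paper's: reduce to a pointwise inequality for $z(t)=t\,e^{-\beta t}$, invoke the $\ell_1$ triangle inequality $d_{12}+d_{23}\ge d_{13}$, and then integrate. You also correctly anticipate the paper's hidden hypothesis: it explicitly assumes the reference points are close enough to the surfaces that $d_{12},d_{23},d_{13}<1/\beta$, i.e.\ everything lives in the monotone regime of $z$.

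There is, however, a genuine gap in your proposed chaining ``monotonicity $+$ subadditivity''. Even under the assumption $d_{ij}<1/\beta$, the sum $d_{12}+d_{23}$ can exceed $1/\beta$, so the step $z(d_{13})\le z(d_{12}+d_{23})$ may fail (take e.g.\ $d_{13}=0.8/\beta$, $d_{12}=d_{23}=0.9/\beta$). Your subadditivity inequality $g(a)+g(b)\ge g(a+b)$ is correct and elegantly proved, but it is the wrong intermediate target. The paper avoids this by using instead the scaling property $z(\alpha x)\ge \alpha\,z(x)$ for $0\le\alpha\le1$ (which follows from the same observation you made, that $e^{-\beta\alpha x}\ge e^{-\beta x}$): writing $d_{12}=a_1 d_{13}$, $d_{23}=a_2 d_{13}$ with $a_1,a_2\le 1$ and $a_1+a_2\ge 1$ gives $z(d_{12})+z(d_{23})\ge (a_1+a_2)\,z(d_{13})\ge z(d_{13})$; the case where some $d_{ij}\ge d_{13}$ is handled by monotonicity on $[0,1/\beta]$. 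Equivalently, you could have gone directly: when $d_{12},d_{23}\le d_{13}$, bound $e^{-\beta d_{12}},e^{-\beta d_{23}}\ge e^{-\beta d_{13}}$ to get $z(d_{12})+z(d_{23})\ge (d_{12}+d_{23})e^{-\beta d_{13}}\ge d_{13}e^{-\beta d_{13}}=z(d_{13})$, which is morally the same trick as your subadditivity proof but applied at the right place.
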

\begin{proof}
    For each $\mathbf{q}\in\mathbf{Q}$, we denote $d(\mathbf{q},\mathcal{S}_1,\mathcal{S}_3)$, $d(\mathbf{q},\mathcal{S}_1,\mathcal{S}_2)$, and $d(\mathbf{q},\mathcal{S}_2,\mathcal{S}_3)$ as $d_{13}$, $d_{12}$, and $d_{23}$, respectively. According to the triangle inequality, they satisfy
\begin{equation}
    d_{12}+d_{23}\geq d_{13}. \nonumber
\end{equation}
Let $z(x)=\texttt{Exp}(-\beta x)\cdot x$, we only need to prove the following inequality under the inequal condition above
\begin{equation}
    z(d_{12})+z(d_{23})\geq z(d_{13}). \nonumber
\end{equation}
As described in Sec. \textcolor{red}{III-C} of the manuscript, the reference points $\mathbf{Q}$ are generated at the area near the surfaces, thus, the differences introduced by the reference points are sufficiently small, such that they are less than $\frac{1}{\beta}$, i.e., $d_{13},d_{12},d_{23}<\frac{1}{\beta}$.

On the other hand, we notice that $z(x)$ has the following two properties:
\begin{compactitem}
    \item $z(x)$ is monotonically increasing from 0 to $\frac{1}{\beta}$,
    \item for any non-negative constant $\alpha\leq 1$, it satisfies $z(ax)\geq a\cdot z(x)$ in the period from 0 to $\frac{1}{\beta}$.
\end{compactitem}
If at least one of $d_{12}$ and $d_{23}$ is greater than $d_{13}$, the triangle inequality obviously holds owing to the monotonically increasing behavior of $z(x)$. Otherwise, both $d_{12}$ and $d_{23}$ are less than or equal to $d_{13}$, and they could be represented as $d_{12}=a_1d_{13}$ and $d_{23}=a_2d_{23}$, where $0<a_1,a_2\leq 1$ and $a_1+a_2\geq 1$. According to the property of $z(x)$, there are 
\begin{equation}
\begin{aligned}   
   &z(d_{12})=z(a_1d_{13})\geq a_1\cdot z(d_{13}), \\ \nonumber
   &z(d_{23})=z(a_2d_{13})\geq a_2\cdot z(d_{13}). \nonumber
\end{aligned}
\end{equation}
By adding these two inequalities, we finally have
\begin{equation}
    \begin{aligned}
        z(d_{12})+z(d_{23})&\geq (a_1+a_2)\cdot z(d_{13})   \\ \nonumber
        &\geq z(d_{13}).
    \end{aligned}
\end{equation}
\end{proof}

\begin{theorem}
\label{theorem:the2}
\revise{
Given two point clouds, $\mathbf{P}_1$ and $\mathbf{P}_2$, when the reference points are defined as $\mathbf{Q}=\mathbf{P}_1\cup\mathbf{P}_2$, and the DDF is defined as $
\mathcal{F}=f$ with parameters $K=1$ and $\beta=0$, DDM becomes equivalent to CD.}
\end{theorem}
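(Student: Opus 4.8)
The plan is to substitute the three specializations ($K=1$, $\beta=0$, and $\mathcal{F}=f$) directly into the definition of DDM in Eq.~\eqref{DDF:EQUATION} and show that the resulting expression collapses, term by term, into the two-sided Chamfer Distance. First I would record the immediate simplifications. With $\beta=0$, the confidence score in Eq.~\eqref{equ:s(q)} satisfies $s(\mathbf{q})=\texttt{Exp}(0)=1$ for every reference point, so the weighting disappears. With $\mathcal{F}=f$, the per-point discrepancy reduces from the $\ell_1$ norm of a 4D vector to the scalar $d(\mathbf{q},\mathcal{S}_1,\mathcal{S}_2)=|f_{\mathcal{S}_1}(\mathbf{q})-f_{\mathcal{S}_2}(\mathbf{q})|$. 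With $K=1$, the weighted average in Eq.~\eqref{EQ:PC} is taken over a single nearest neighbor and therefore returns that neighbor itself, independently of the weight; hence $f_{\mathbf{P}}(\mathbf{q})=\min_{\mathbf{p}\in\mathbf{P}}\|\mathbf{q}-\mathbf{p}\|_2$ is exactly the nearest-neighbor distance from $\mathbf{q}$ to $\mathbf{P}$ (in particular the singular weight $1/\|\mathbf{q}-\mathbf{p}\|_2^2$ never actually enters the computation, so the apparent singularity at $\mathbf{q}=\mathbf{p}$ is harmless). Finally, since $\mathbf{Q}=\mathbf{P}_1\cup\mathbf{P}_2$ is a finite set, the integral in Eq.~\eqref{DDF:EQUATION} is to be read in its discretized form, i.e.\ as a sum over the reference points.

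Next I would split that sum according to whether a reference point originates from $\mathbf{P}_1$ or $\mathbf{P}_2$. For $\mathbf{q}\in\mathbf{P}_1$, the nearest neighbor of $\mathbf{q}$ inside $\mathbf{P}_1$ is $\mathbf{q}$ itself, so $f_{\mathbf{P}_1}(\mathbf{q})=0$ and the discrepancy is $d(\mathbf{q},\mathbf{P}_1,\mathbf{P}_2)=f_{\mathbf{P}_2}(\mathbf{q})=\min_{\mathbf{p}\in\mathbf{P}_2}\|\mathbf{q}-\mathbf{p}\|_2$. Symmetrically, for $\mathbf{q}\in\mathbf{P}_2$ one gets $d(\mathbf{q},\mathbf{P}_1,\mathbf{P}_2)=\min_{\mathbf{p}\in\mathbf{P}_1}\|\mathbf{q}-\mathbf{p}\|_2$. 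Summing over all reference points then yields
\begin{equation}
\mathcal{D}_{\rm DDM}(\mathbf{P}_1,\mathbf{P}_2)=\sum_{\mathbf{q}\in\mathbf{P}_1}\min_{\mathbf{p}\in\mathbf{P}_2}\|\mathbf{q}-\mathbf{p}\|_2+\sum_{\mathbf{q}\in\mathbf{P}_2}\min_{\mathbf{p}\in\mathbf{P}_1}\|\mathbf{q}-\mathbf{p}\|_2, \nonumber
\end{equation}
which is exactly the bidirectional Chamfer Distance between $\mathbf{P}_1$ and $\mathbf{P}_2$.

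The remaining work is bookkeeping, and it is also where I expect the only real subtlety to lie: I would pin down precisely which convention of CD is being matched — summed versus averaged over $|\mathbf{P}_1|,|\mathbf{P}_2|$, and whether $\|\cdot\|_2$ or $\|\cdot\|_2^2$ is used — so that ``equivalent'' is an exact identity rather than mere proportionality; the derivation above produces the summed, unsquared form, and the averaged form is recovered up to the fixed factors $1/|\mathbf{P}_1|$, $1/|\mathbf{P}_2|$ under the averaged discretization of the integral. I would also dispose of the harmless boundary case $\mathbf{q}\in\mathbf{P}_1\cap\mathbf{P}_2$, where $f_{\mathbf{P}_1}(\mathbf{q})=f_{\mathbf{P}_2}(\mathbf{q})=0$ so the contribution is $0$ on both the DDM and the CD side, consistent with treating $\mathbf{Q}$ as a set rather than a multiset. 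None of this is deep, but making the $K=1\Rightarrow$ nearest-neighbor-distance step and the choice of CD variant explicit is what turns the sketch into a rigorous argument.
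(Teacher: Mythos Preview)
Your proposal is correct and follows essentially the same approach as the paper: both split the reference set $\mathbf{Q}=\mathbf{P}_1\cup\mathbf{P}_2$ into its two halves, use that $f_{\mathbf{P}_i}(\mathbf{q})=0$ whenever $\mathbf{q}\in\mathbf{P}_i$ (nearest neighbor is the point itself), and identify the remaining term with the one-sided nearest-neighbor sum, yielding the two-sided Chamfer Distance. Your write-up is in fact more careful than the paper's, since you explicitly handle the $\beta=0$ weight, the $K=1$ collapse of Eq.~\eqref{EQ:PC}, the discretization of the integral, the $\mathbf{P}_1\cap\mathbf{P}_2$ edge case, and the CD convention being matched.
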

\begin{proof}
   Let first consider the reference points from $\mathbf{q}\in\mathbf{P}_1$, according to the definition of DDF, their DDFs of two point clouds can be calculated through 

\[
\begin{aligned}
&\mathcal{F}_{\mathbf{P}_1}(\mathbf{q})=f_{\mathbf{P}_1}(\mathbf{q}) =\|\texttt{NN}({\mathbf{q}},\mathbf{P}_1)-\mathbf{q}\|_2=\|\mathbf{q}-\mathbf{q}\|=0, \\
&\mathcal{F}_{\mathbf{P}_2}(\mathbf{q})=f_{\mathbf{P}_2}(\mathbf{q})=\|\texttt{NN}(\mathbf{q},\mathbf{P}_2)-\mathbf{q}\|_2,
\end{aligned}
\]
where \texttt{NN} is the 1-NN operation.
Thus, DDM under these reference points is 

\[
\begin{aligned}
\mathcal{D}_1&=\sum_{\mathbf{q}\in\mathbf{P}_1}|\mathcal{F}_{\mathbf{P}_1}(\mathbf{q})-\mathcal{F}_{\mathbf{P}_2}(\mathbf{q})|\\
&=\sum_{\mathbf{q}\in\mathbf{P}_1}|0-\|\texttt{NN}(\mathbf{q},\mathbf{P}_2)-\mathbf{q}\|_2|\\
&=\sum_{\mathbf{q}\in\mathbf{P}_1}\|\texttt{NN}(\mathbf{q},\mathbf{P}_2)-\mathbf{q}\|_2.
\end{aligned}
\]

Similarity, DDM under the reference points $\mathbf{q}\in\mathbf{P}_2$ is

\[
\mathcal{D}_2=\sum_{\mathbf{q}\in\mathbf{P}_2}\|\texttt{NN}(\mathbf{q},\mathbf{P}_1)-\mathbf{q}\|_2.
\]

Obviously, $\mathcal{D}_{\rm DDM}=\mathcal{D}_1+\mathcal{D}_2$ is the same as CD.

\end{proof}
\begin{theorem}
\label{theorem:the3}
\revise{
Given two triangle meshes, $\mathcal{S}_1$ and $\mathcal{S}_2$, when the reference points $\mathbf{Q}=\{\mathbf{q}_i\}_{i=1}^M$ are sampled on the surface, and the DDF is defined as $\mathcal{F}=f$ with parameter $\beta=0$, DDM becomes equivalent to the P2F distance.}
\end{theorem}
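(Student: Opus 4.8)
The plan is to mirror the structure of the proof of Theorem~\ref{theorem:the2}, but now exploiting that the reference points lie exactly \emph{on} one of the two surfaces. First I would fix conventions: following Sec.~\ref{REF:GEN1}, take $\mathcal{S}_1$ to be the fixed model from which the reference points are sampled, so $\mathbf{Q}=\{\mathbf{q}_i\}_{i=1}^M\subset\mathcal{S}_1$. Setting $\beta=0$ in Eq.~\eqref{equ:s(q)} gives $s(\mathbf{q})=\texttt{Exp}(0)=1$ for every reference point, so the confidence weighting disappears and the (discretized) DDM of Eq.~\eqref{DDF:EQUATION} reduces to the plain sum $\sum_{i=1}^M d(\mathbf{q}_i,\mathcal{S}_1,\mathcal{S}_2)$. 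With $\mathcal{F}=f$, the per-point discrepancy collapses from an $\ell_1$ distance between $4$D vectors to the scalar $d(\mathbf{q}_i,\mathcal{S}_1,\mathcal{S}_2)=|f_{\mathcal{S}_1}(\mathbf{q}_i)-f_{\mathcal{S}_2}(\mathbf{q}_i)|$.

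Next I would evaluate the two UDF terms. For $f_{\mathcal{S}_1}(\mathbf{q}_i)$: since $\mathbf{q}_i$ is sampled on $\mathcal{S}_1$, the closest point of $\mathcal{S}_1$ to $\mathbf{q}_i$ is $\mathbf{q}_i$ itself; concretely, in the mesh projection of Eq.~\eqref{EQ:MESH} the triangle containing $\mathbf{q}_i$ realizes distance zero and its barycentric weights reproduce $\mathbf{q}_i$, hence $f_{\mathcal{S}_1}(\mathbf{q}_i)=\|\hat{\mathbf{q}}_i-\mathbf{q}_i\|_2=0$. For $f_{\mathcal{S}_2}(\mathbf{q}_i)$: this is, by the definition in Eq.~\eqref{UDF}, the $\ell_2$ distance from $\mathbf{q}_i$ to its closest point on $\mathcal{S}_2$, i.e., exactly the point-to-face distance of the sample $\mathbf{q}_i$ to the mesh $\mathcal{S}_2$. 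Therefore $d(\mathbf{q}_i,\mathcal{S}_1,\mathcal{S}_2)=|0-f_{\mathcal{S}_2}(\mathbf{q}_i)|=f_{\mathcal{S}_2}(\mathbf{q}_i)$, and summing (or averaging) over $i$ yields $\mathcal{D}_{\rm DDM}(\mathcal{S}_1,\mathcal{S}_2)=\sum_{i=1}^M f_{\mathcal{S}_2}(\mathbf{q}_i)$, which is precisely the P2F distance from $\mathcal{S}_1$ to $\mathcal{S}_2$ recalled in Sec.~\ref{sec:RW}.

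The only real subtlety — and the step I would be most careful to state — is the directionality mismatch: the P2F distance as classically defined is one-sided (samples are drawn from a single surface), whereas the DDM of Eq.~\eqref{DDF:EQUATION} is symmetric. The equivalence claimed here therefore holds under the convention that $\mathbf{Q}$ is drawn from whichever mesh plays the role of the ``source'' surface in the P2F construction, so I would make that convention explicit at the outset. A minor companion point is that Eq.~\eqref{DDF:EQUATION} is written as an integral but, as the paper notes, is evaluated on the finite sample $\mathbf{Q}$; hence the identification is with the sampled form of P2F, and the continuous version follows by letting the sampling density on $\mathcal{S}_1$ grow. No heavy computation is required beyond these bookkeeping observations.
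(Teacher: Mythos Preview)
Your mechanics are exactly those of the paper: set $\beta=0$ to kill the confidence weight, reduce $\mathcal{F}$ to the scalar UDF $f$, observe that $f_{\mathcal{S}}(\mathbf{q})=0$ whenever $\mathbf{q}\in\mathcal{S}$, and identify the surviving term $f_{\mathcal{S}_2}(\mathbf{q}_i)$ with the point-to-face distance. There is no gap in the argument.

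The one divergence is in how you populate $\mathbf{Q}$. You follow Sec.~\ref{REF:GEN1} and draw all reference points from $\mathcal{S}_1$, obtaining the \emph{one-sided} P2F distance $\sum_i f_{\mathcal{S}_2}(\mathbf{q}_i)$; you then correctly flag the directionality mismatch as a caveat. The paper instead mirrors its proof of Theorem~\ref{theorem:the2} (where $\mathbf{Q}=\mathbf{P}_1\cup\mathbf{P}_2$): it splits $\mathbf{Q}$ into $\mathbf{Q}_1\subset\mathcal{S}_1$ and $\mathbf{Q}_2\subset\mathcal{S}_2$, applies your zero-UDF observation on each half, and obtains $\mathcal{D}_{\rm DDM}=\mathcal{D}_1+\mathcal{D}_2$, i.e., the \emph{bidirectional} P2F. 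This resolves the symmetry issue you raised rather than stipulating it away, and it matches the two-sided form of P2F described in Sec.~\ref{sec:RW}. Your argument becomes identical to the paper's if you simply allow $\mathbf{Q}$ to contain samples from both meshes and run your computation twice, once per half.
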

\begin{proof}
    Let first considering the reference point sampled from $\mathcal{S}_1$, $\mathbf{Q}_1=\{\mathbf{q}_i\}_{i=1}^{M_1}$, according to the definition of DDF, the DDFs of two triangle meshes can be represented as

\[
\begin{aligned}
    &\mathcal{F}_{\mathbf{P}_1}(\mathbf{q}_i)=f_{\mathbf{P}_1}(\mathbf{q}_i)=\|\texttt{NP}(\mathbf{q}_i,\mathcal{S}_1)-\mathbf{q}_i\|_2=\|\mathbf{q}_i-\mathbf{q}_i\|=0, \\
    &\mathcal{F}_{\mathbf{P}_2}(\mathbf{q}_i)=f_{\mathbf{P}_2}(\mathbf{q}_i)=\|\texttt{NP}(\mathbf{q}_i,\mathcal{S}_2)-\mathbf{q}_i\|_2,
\end{aligned}
\]
where $\texttt{NP}$ is the nearest point of the given point on the surface. Thus, DDM unde these reference points is
\[\begin{aligned}
\mathcal{D}_1&=\sum_{\mathbf{q}_i\in\mathbf{Q}_1}|\mathcal{F}_{\mathbf{P}_1}(\mathbf{q}_i)-\mathcal{F}_{\mathbf{P}_2}(\mathbf{q}_i)| \\
&=\sum_{\mathbf{q}_i\in\mathbf{Q}_1}|0-\|\texttt{NP}(\mathbf{q}_i,\mathcal{S}_2)-\mathbf{q}_i\|_2| \\
&=\sum_{\mathbf{q}_i\in\mathbf{Q}_1}\|\texttt{NP}(\mathbf{q}_i,\mathcal{S}_2)-\mathbf{q}_i\|_2
\end{aligned}
\]

Similarity, DDM under the reference points sampled from $\mathcal{S}_2$, $\mathbf{Q}_2=\{\mathbf{q}_i\}_{i=1}^{M_2}$ is
\[
\mathcal{D}_2=\sum_{\mathbf{q}_i\in\mathbf{Q}_2}\|\texttt{NP}(\mathbf{q}_i,\mathcal{S}_1)-\mathbf{q}_i\|_2.
\]

Obviously, $\mathcal{D}_{\rm DDM}=\mathcal{D}_1+\mathcal{D}_2$ is the same as P2F distance.
\end{proof}

\section{Distance Metric-driven 3D Geometric Modeling}
\label{sec:tasks}

To demonstrate the superiority of the proposed DDM, we apply it across an extensive array of fundamental 3D geometric modeling tasks, encompassing template-based surface reconstruction, rigid and non-rigid surface registration, scene flow estimation, and SMPL registration. In what follows, we will introduce the detailed method of each task.

\subsection{Template Surface Fitting} 
Template surface fitting is a widely studied approach to reduce the ill-posedness of 3D reconstruction, which is meaningful in the research of homeomorphic structures.
In particular, this process entails deforming an initial surface characterized by a regular shape (such as a cube or spherical mesh), denoted as $\mathcal{S}_{\rm init}$ with vertices $\mathbf{V}\in\mathbb{R}^{N\times 3}$ and face indices $\mathbf{F}\in\mathbb{R}^{E\times 3}$, into the desired target surface, denoted as $\mathcal{S}_{\rm tgt}$. Notably, the face indices remain constant throughout the deformation.

Here, we consider the recent pipeline introduced in 
\cite{LARGESTEP, MDA}. 
Specifically, in order to avoid face intersections during the surface deformation, we optimize the diffusion reparameterization rather than the vertices of $\mathcal{S}_{\rm init}$.
In this process, the distance between the deformed initial surface and the target surface is used to guide the optimization, which is critical in the whole process.
Technically, the diffusion reparameterization is defined as $\mathbf{u}=(\mathbf{I}+\alpha\mathbf{L})\mathbf{V}$, where $\mathbf{L}\in\mathbb{R}^{N\times N}$ is the discrete Laplace operator, $\mathbf{I}\in\mathbb{R}^{N\times N}$ refers to the identity matrix, and $\alpha$ is a constant weight. Consequently, the deformed surface, denoted as $\mathcal{S}^{'}$, possesses vertices $\mathbf{V}'=(\mathbf{I}+\alpha\mathbf{L})^{-1}\mathbf{u}$, while retaining the same face indices as $\mathcal{S}_{\rm init}$. $\mathbf{u}$ could be optimized by minimizing the distance between $\mathcal{S}^{'}$ and $\mathcal{S}_{\rm tgt}$:
\begin{equation}
\begin{aligned}
    \mathbf{\hat{u}}=\mathop{\arg\min}_{\mathbf{u}}  \left(  \mathcal{D}(\mathcal{S}{'},\mathcal{S}_{\rm tgt}) + \mathcal{R}_{\rm DA}(\mathcal{S}{'}) \right), \label{SURF:FITTING:EQ}
\end{aligned}
\end{equation}
where $\mathcal{D}(\cdot,\cdot)$ stands for a typical metric measuring the discrepancy between two 3D shapes, and $\mathcal{R}_{\rm DA}(\cdot)$ is the density adaptation regularization \cite{MDA}, defined as 
\begin{align}
    &\mathcal{R}_{\rm DA}(\mathcal{S}{'})=\lambda_1\mathcal{E}(\mathbf{V}{'}, {\bar{l}}_{\rm a})+ \lambda_2\mathcal{E}(\mathbf{V}{'}, {\bar{l}}_{\rm k})\\
    &\mathcal{E}(\mathbf{V}',{\bar{l}})=\frac{1}{|\mathbf{V}'|}\sum_{\mathbf{v}'\in\mathbf{V}'}|{l}(\mathbf{v}')-{\bar{l}}|^2,
\end{align}
where ${l}(\cdot)$ computes the mean length of all edges associated with a vertex, $\bar{l}_{\rm a}$ and $\bar{l}_{\rm k}$ are two kinds of expected edge lengths, 
and $\lambda_1$ and $\lambda_2$ are the weights to balance these regularization terms. The optimization problem in Eq. \eqref{SURF:FITTING:EQ} can be solved with gradient descent methods.

\subsection{Rigid Registration of 3D Point Clouds}
Given a source point cloud $\mathbf{P}_{\rm src}\in\mathbb{R}^{N_1\times 3}$ and a target point cloud $\mathbf{P}_{\rm tgt}\in\mathbb{R}^{N_2\times 3}$, rigid registration aims to estimate a spatial transformation $[\mathbf{R},\mathbf{t}]$ to align $\mathbf{P}_{\rm src}$ with $\mathbf{P}_{\rm tgt}$, where $\mathbf{R}\in\texttt{SO}(3)$ is the rotation matrix and $\mathbf{t}\in\mathbb{R}^3$ is the translation vector. Simply, we can achieve it by optimizing the following objective function:
\begin{equation}
    \{\hat{\mathbf{R}},\hat{\mathbf{t}}\}=\mathop{\arg\min}_{\mathbf{R},\mathbf{t}}\mathcal{D}\left(\mathcal{T}(\mathbf{P}_{\rm src},\mathbf{R},\mathbf{t}), \mathbf{P}_{\rm{tgt}}\right), \label{RIGID:REGISTRATION}
\end{equation}
where $\mathcal{D}(\cdot,\cdot)$ stands for a typical distance metric for 3D point clouds, and $\mathcal{T}(\cdot, \cdot, \cdot)$ is the rigid transformation operator. 
From Eq. \eqref{RIGID:REGISTRATION}, it is obvious that the distance metric between the two point cloud plays a critical role in this task, determining the accuracy of registration.

\begin{algorithm}[t]
  \caption{Construction of the deformation nodes.} \label{DEFORM:NODE}
  
  \revise{\KwIn{The source surface with vertex set $\mathbf{V}$ and face index set $\mathbf{F}$; the distance threshold $\epsilon$.}}
  \revise{\KwOut{The coordinates of deformation nodes $\mathbf{V}_{\rm DF}$.}}
  {Initialize $\mathbf{V}_{\rm DF}=\emptyset;$ \\
  Compute geodesic distance between any two points of $\mathbf{V}$; \\
  \While{$\mathbf{V}\neq\emptyset$}{
    Randomly select a point $\mathbf{v}$ from $\mathbf{V}$\;
    Add selected $\mathbf{v}$ to $\mathbf{V}_{\rm DF}$\;
    Delete all points in $\mathbf{V}$ that are within a geodesic distance of less than $\epsilon$ from $\mathbf{v}$\;
  }
  \Return $\mathbf{V}_{\rm DF}$\
  }
\end{algorithm}

\subsection{Non-Rigid 3D Mesh Registration}
Let $\mathcal{S}_{\rm src}$ and $\mathcal{S}_{\rm tgt}$ be a source and a target 3D shapes in the form of triangle meshes,  and $\mathbf{V}\in\mathbb{R}^{N\times3}$ and $\mathbf{F}\in\mathbb{N}^{E\times 3}$ the vertex and face index sets of $\mathcal{S}_{\rm src}$, respectively. Non-rigid registration targets computing a non-rigid deformation field 
align $\mathcal{S}_{\rm src}$ with $\mathcal{S}_{\rm tgt}$.

\begin{figure}[h]
\centering
\subfloat[]{\includegraphics[height=0.5\linewidth]{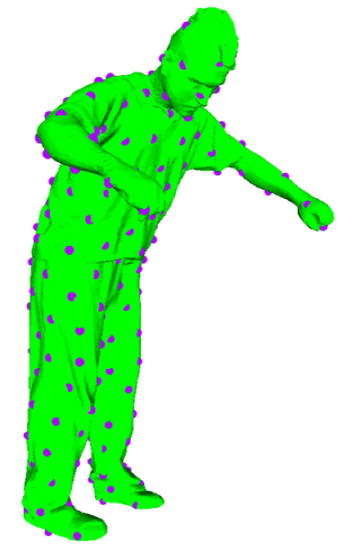}\vspace{-0.2cm}}\quad\quad
\subfloat[]{\includegraphics[height=0.5\linewidth]{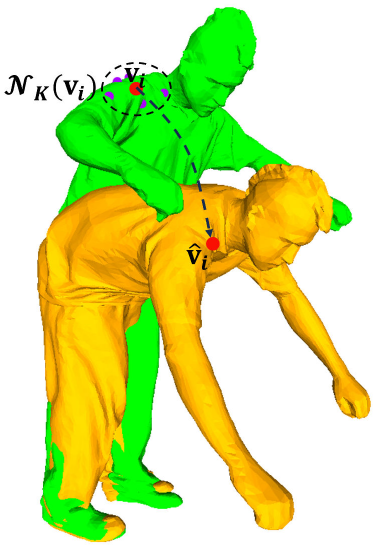}\vspace{-0.2cm}}
\caption{Visual illustration of (\textbf{a}) generated deformation nodes (i.e., the \textcolor[rgb]{0.666,0,1}{purple} points) and (\textbf{b}) the deformation between the source (\textcolor{green}{green}) and target (\textcolor{orange}{orange}) meshes. The \textcolor{red}{red} point refers to a typical vertex on the mesh. 
}   \label{NONRIGID:REG:PIPELINE} 
\vspace{-0.5cm}
\end{figure}

Following \cite{DEFORMATIONNODE}, we define the deformation field by an embedded deformation graph with deformation nodes $\mathbf{V}_{\rm DF}\in\mathbf{R}^{N'\times 3}$, which could be obtained through Algorithm \ref{DEFORM:NODE}. 
Each node encodes a rigid transformation, $[\mathbf{R}_j,\mathbf{t}_j]\ (j=1,...,N')$ with $\mathbf{R}\in{\rm \texttt{SO}(3)}$ and $\mathbf{t}_j\in\mathbb{R}^3$. For each vertex $\mathbf{v}_i\in\mathbf{V}$, we determine 
its $K$-NN nodes from $\mathbf{V}_{\rm DF}$ 
using the geodesic distance,
denoted as $\mathcal{N}_K(\mathbf{v}_i)$, then obtain its new position after deformation as 
\begin{equation}
\label{DEFORMATION:NODE}
    \mathbf{\hat{v}}_i= \frac{\sum_{\mathbf{v}_j\in\mathcal{N}_K(\mathbf{v}_i)}w(\mathbf{v}_i,\mathbf{v}_j)\cdot\big(\mathbf{R}_j(\mathbf{v}_i-\mathbf{v}_j)+\mathbf{v}_j+\mathbf{t}_j\big)}{\sum_{\mathbf{v}_j\in\mathcal{N}_K(\mathbf{v}_i)}w(\mathbf{v}_i,\mathbf{v}_j)}, 
\end{equation}
where $w(\mathbf{v}_i,\mathbf{v}_j)=\texttt{max}(0,(1-d_{G}(\mathbf{v}_i,~\mathbf{v}_j)^2/\epsilon^2)^3)$ and $d_G$ is the geodesic distance of two points on the surface. Obviously, the deformation of $\mathcal{S}_{\rm src}$ could be controlled by $\{[\mathbf{R}_j,\mathbf{t}_j]\}_{j=1}^{N'}\}$, as shown in Fig. \ref{NONRIGID:REG:PIPELINE}.

To derive the optimal $\{[\mathbf{R}_j,\mathbf{t}_j]\}_{j=1}^{N'}$, we 
optimize the following objective 
\begin{equation}
\begin{aligned}
\{[\mathbf{\hat{R}}_j,\mathbf{\hat{t}}_j]\}_{j=1}^{N'}\}=\mathop{\arg\min}_{\{[\mathbf{R}_j,\mathbf{t}_j]\}_{j=1}^{N'}} &\Big(\mathcal{D}(\mathcal{\hat{S}},~\mathcal{S}_{\rm tgt})\Big.
+\Big.\lambda \mathcal{R}_{\rm smooth}(\mathbf{V})\Big),
\end{aligned}
\label{eq:non-rigid}
\end{equation}
where $\mathcal{D}(\cdot,~\cdot)$ is a typical distance metric for computing the discrepancy between two triangle meshes, 
$\mathcal{\hat{S}}$ stands for the deformed mesh from $\mathcal{S}_{\rm src}$ with vertex set $\mathbf{\hat{V}}$ and the same face indexes as $\mathcal{S}_{\rm src}$, and $\mathcal{R}_\texttt{smooth}(\mathbf{V})$ is the spatial smooth regularization for the offsets of each vertex, defined as 
\begin{equation}
\begin{aligned}
    \mathcal{R}_{\rm smooth}(\mathbf{V})&=\frac{1}{3|\mathbf{F}|}\sum_{(i_1,i_2,i_3)\in\mathbf{F}}\big(\|\mathbf{\Delta v}_{i_1}-\mathbf{\Delta v}_{i_2}\|_2\\
    &+\|\mathbf{\Delta v}_{i_1}-\mathbf{\Delta v}_{i_3}\|_2+\|\mathbf{\Delta v}_{i_2}-\mathbf{\Delta v}_{i_3}\|_2\big),
\end{aligned}
\end{equation}
where $\mathbf{\Delta v}_{*}=\mathbf{\hat{v}}_{*}-\mathbf{v}_{*}$ is the offset of the vertex. The optimization process can be solved with gradient descent methods.

\subsection{Scene Flow Estimation}
 Denote by $\mathbf{P}_{\rm src}\in\mathbb{R}^{N_{\rm 1}\times 3}$ and $\mathbf{P}_{\rm tgt}\in\mathbb{R}^{N_{\rm 2}\times 3}$ a source and a target 3D point clouds, 
 where $N_{\rm src}$ and $N_{\rm tgt}$ are the number of points.
Scene flow estimation aims to predict point-wise offsets $\mathbf{\Delta P}\in\mathbb{R}^{N_{\rm src}\times 3}$ for $\mathbf{P}_{\rm src}$ to align it with 
$\mathbf{P}_{\rm tgt}$. This task can be achieved by directly solving the following optimization problem: 
\begin{equation}
    \mathbf{\Delta\hat{P}}=\mathop{\arg\min}_{\mathbf{\Delta P}}\mathcal{D}(\mathbf{P}_{\rm src}+\mathbf{\Delta P},\mathbf{P}_{\rm tgt})+\lambda \mathcal{R}_{\rm smooth}(\mathbf{\Delta P}),
\end{equation}
where $\mathcal{R}_{\rm smooth}(\cdot)$ is the spatial smooth regularization term, defined as 
\begin{equation}
    \mathcal{R}_{\rm smooth}(\mathbf{\Delta P})=\frac{1}{3N_{\rm src}K_s}\sum_{\mathbf{p}\in\mathbf{P}_{\rm src}}\sum_{\mathbf{p}'\in\mathcal{N}(\mathbf{p})}\|\mathbf{\Delta}\mathbf{p}-\mathbf{\Delta}\mathbf{p}'\|_2^2,
\end{equation}
where $\mathcal{N}(\mathbf{p})$ is the operator returning $\mathbf{p}$'s $K_s$-NN points in $\mathbf{P}_{\rm src}$.

In addition to the above-mentioned optimization-based method, we also consider unsupervised learning-based scene flow estimation. 
Specifically, we can predict the scene flow between $\mathbf{P}_{\rm src}$ and $\mathbf{P}_{\rm tgt}$ by using a neural network $h_{\bm{\theta}}(\cdot,~\cdot)$ parameterized by $\bm{\theta}$ taking them as inputs, i.e., $\mathbf{\Delta P}=\mathbf{h}_{\bm{\theta}}(\mathbf{P}_{\rm src},\mathbf{P}_{\rm tgt})$. 
Then the network can  be trained by minimizing the following loss function: 

\begin{equation}
\hat{\bm{\theta}}=\mathop{\arg\min}_{\bm{\theta}}\mathcal{D}\left(\mathbf{P}_{\rm src}+\mathbf{h}_{\bm{\theta}}(\mathbf{P}_{\rm src},\mathbf{P}_{\rm tgt})\right)+\lambda\mathcal{R}_{\rm smooth}(\mathbf{\Delta P}).
\end{equation}

\subsection{Human Pose Optimization from Point Clouds}
In the domain of digital human research, the 3D data representing humans are usually collected through some common 3D scanners, e.g. RGBD cameras and Lidars. As shown in Fig. \ref{SMPL：PIPELINE}, given the initial human pose, which differs from the scanned data, we need to refine it according to the scanned data, making it aligned with the scanned data. 
Skinned Multi-Person Linear Model (SMPL) \cite{SMPL} is the most widely used parametric model to represent the 3D human body. 

\begin{figure}[h]
    \centering
    \includegraphics[width=0.7\linewidth]{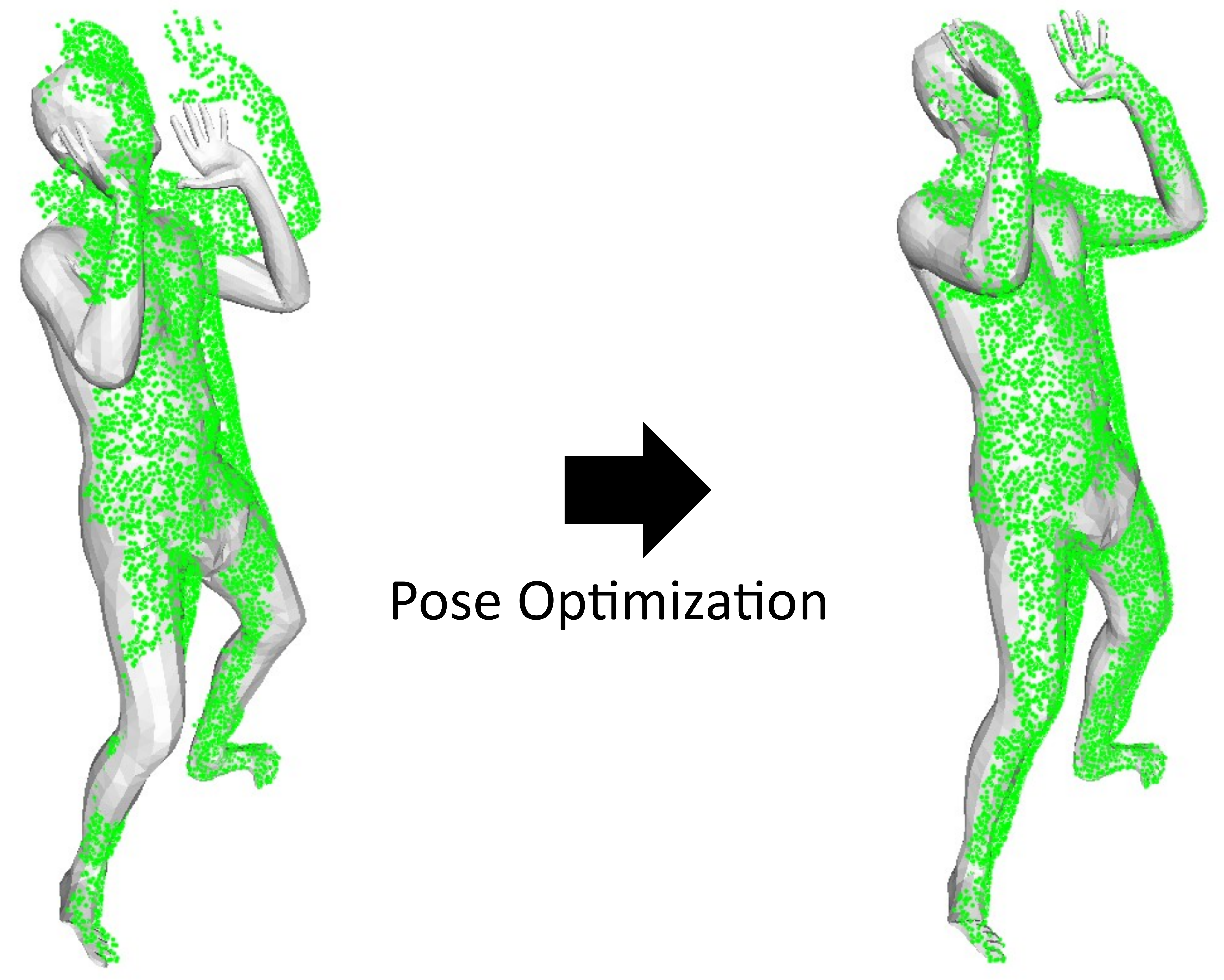}
    \caption{Visual illustration of human pose optimization from point clouds. The \textcolor{green}{green} points represent the scanned partial point cloud, and the \textcolor{gray}{gray} mesh model refers to the SMPL model.}\label{SMPL：PIPELINE}
    \vspace{-0.5cm}
\end{figure}

Specifically, the SMPL model is a triangle mesh with the human body shape, and its pose is controlled by a matrix $\mathbf{J}\in\mathbb{R}^{24\times 3}$, thus, various values of $\mathbf{J}$ could represent the human models with different pose. In reality, we need to optimize the human pose according to the given human scan, which is usually a whole or partial point cloud. This could be achieved by minimizing the difference between the SMPL model and the scanned point cloud.
Denote by $\mathbf{P}_{\rm scan}$ the scanned point cloud, the human pose estimation could be formulated as 
\begin{equation}
    \mathbf{\hat{J}}=\mathop{\arg\min}_{\mathbf{J}}\mathcal{D}(\texttt{SMPL}(\mathbf{J}),\mathbf{P}_{\rm scan}), \label{SMPL:REG:EQ}
\end{equation}
where $\mathcal{D}(\cdot,\cdot)$ serves as the distance metric to measure the disparity between the SMPL model and the reference point cloud. A good distance metric can boost the optimization and increase the accuracy of the estimated pose. We can employ gradient descent methods to solve the optimization problem in Eq. \eqref{SMPL:REG:EQ}. \\

\section{Experiments} \label{sec:exp}

In this section, we conducted extensive experiments to demonstrate the advantages of the proposed distance metric under the five fundamental 3D geometric modeling tasks introduced in Section \ref{sec:tasks}. In addition, we conducted comprehensive ablation studies to understand it better. We implemented all experiments with a system equipped with an NVIDIA RTX 3090 and an Intel(R) Xeon(R) CPU.

\subsection{Template Surface Fitting}

\subsubsection{Implementation Details} 
We utilized  the 3DCaricshop dataset \cite{3DCARICSHOP} for evaluation. 
The chosen initial surface is a unit ico-sphere, encompassing 10242 vertices and 20480 faces. The weight in diffusion reparameterization is set $\alpha=1$. 
And the weights to balance different items in objective function were set as $\lambda_1=1.5$, $\lambda_2=4.5$. Additionally, the values of $M$ and $\sigma$ were set to 4 and 0.05, respectively, for generating reference points. 
We run the Adam optimizer with a learning rate of 0.05 to optimize $\mathbf{u}$. \\

\begin{figure*}[h]
    \centering
    \resizebox{0.95\textwidth}{!}{
    \begin{tikzpicture}[]
    \node[] (a) at (-10,7.5) {\rotatebox{90}{ Will Smith}};
    \node[] (a) at (0,7.5) { \includegraphics[width=0.2\textwidth]{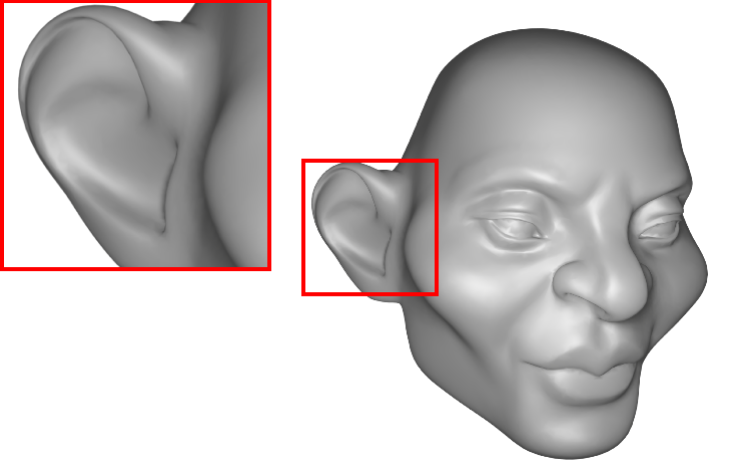}\quad
     \includegraphics[width=0.2\textwidth]{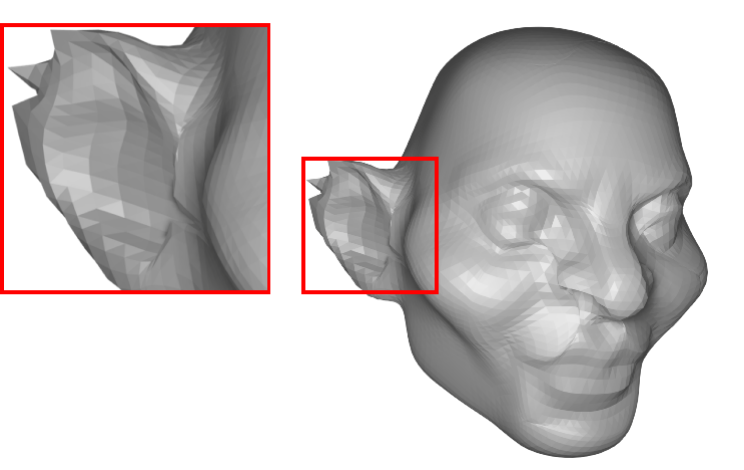}\quad
     \includegraphics[width=0.2\textwidth]{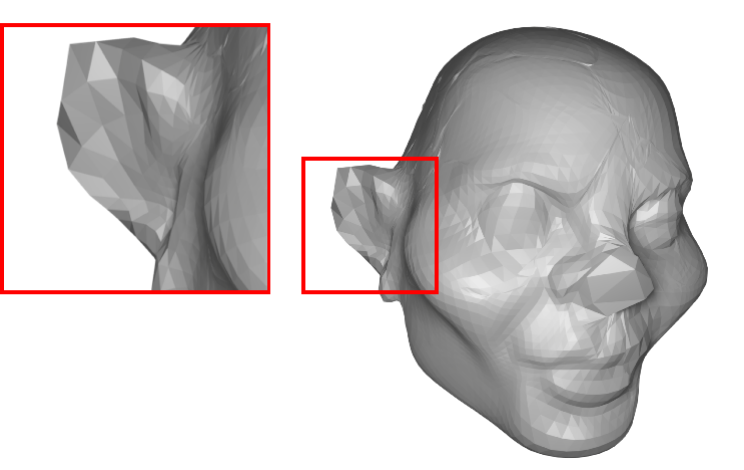}\quad
     \includegraphics[width=0.2\textwidth]{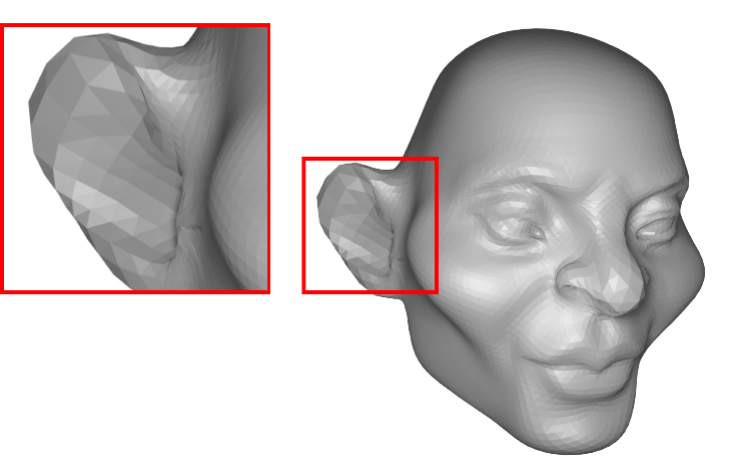}\quad
     \includegraphics[width=0.2\textwidth]{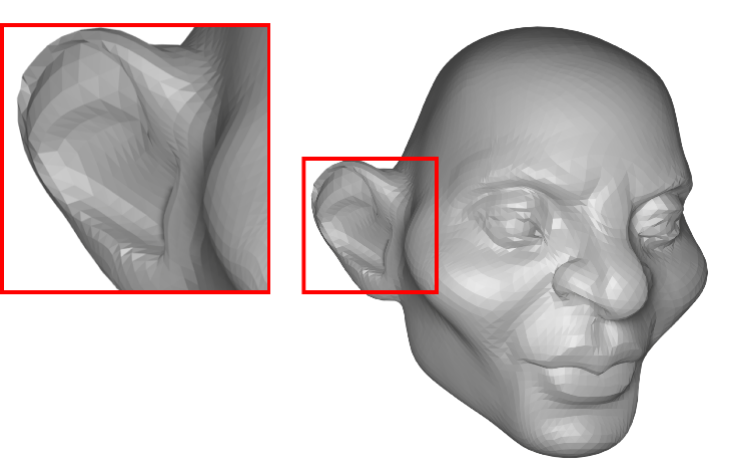}};
    
    \node[] (a) at (-10,5) {\rotatebox{90}{ Woody Allen}};
    \node[] (a) at (0,5) { \includegraphics[width=0.2\textwidth]{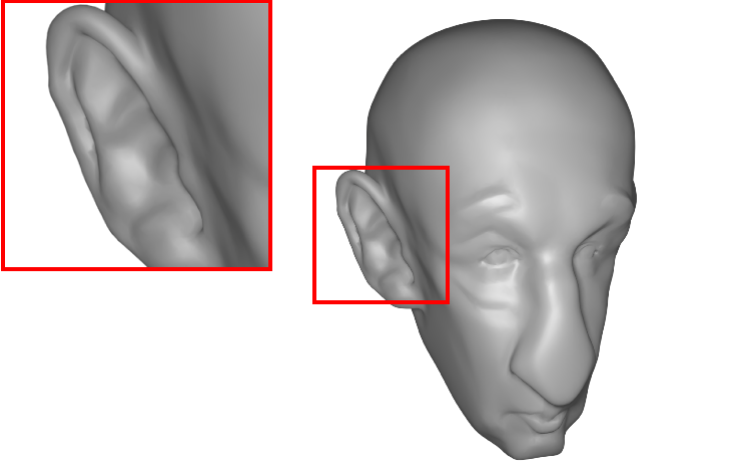}\quad
    \includegraphics[width=0.2\textwidth]{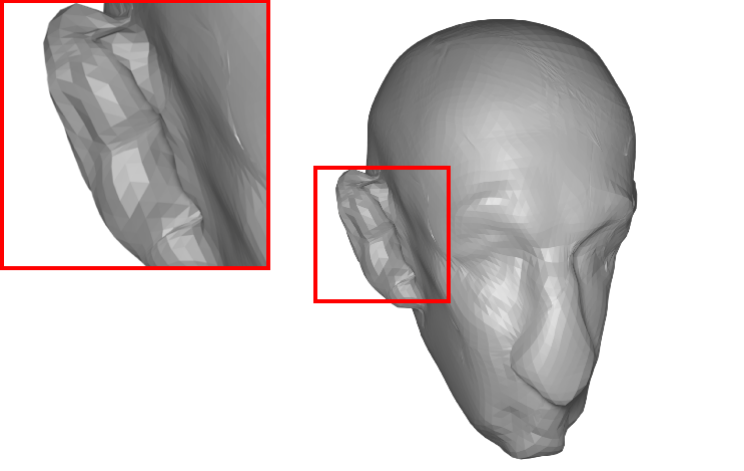}\quad
    \includegraphics[width=0.2\textwidth]{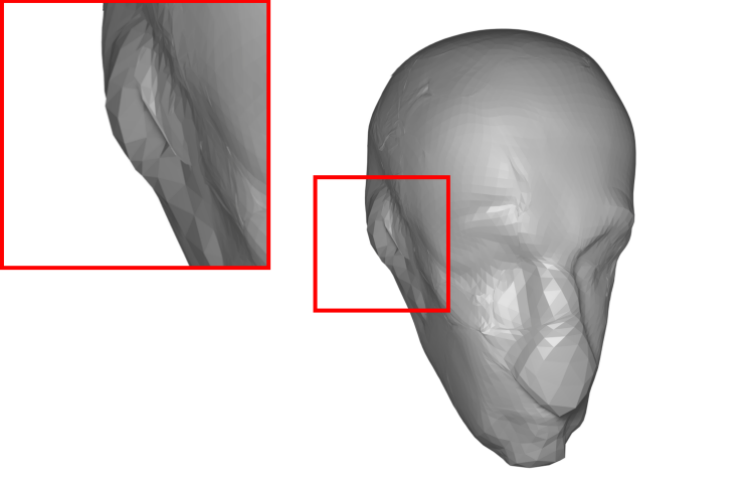}\quad
    \includegraphics[width=0.2\textwidth]{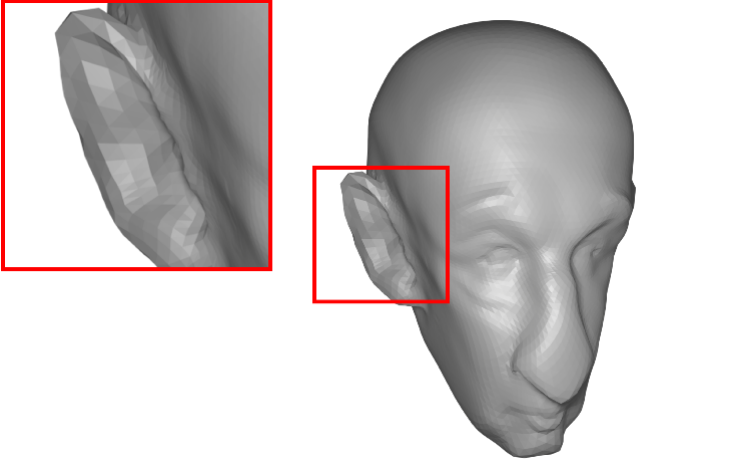}\quad
    \includegraphics[width=0.2\textwidth]{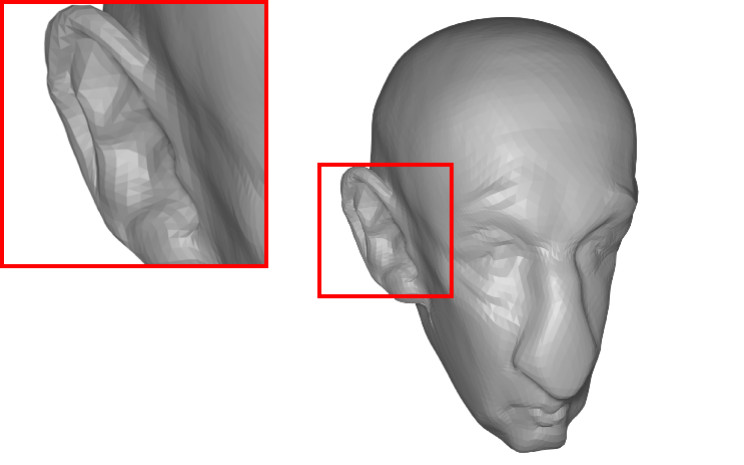}};

    \node[] (a) at (-10,2.5) {\rotatebox{90}{ Xiyuan Xu}};
    \node[] (a) at (0,2.5) { \includegraphics[width=0.2\textwidth]{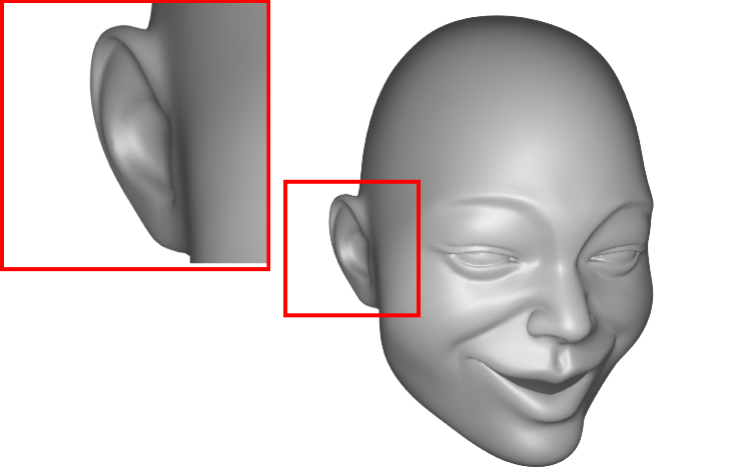}\quad
    \includegraphics[width=0.2\textwidth]{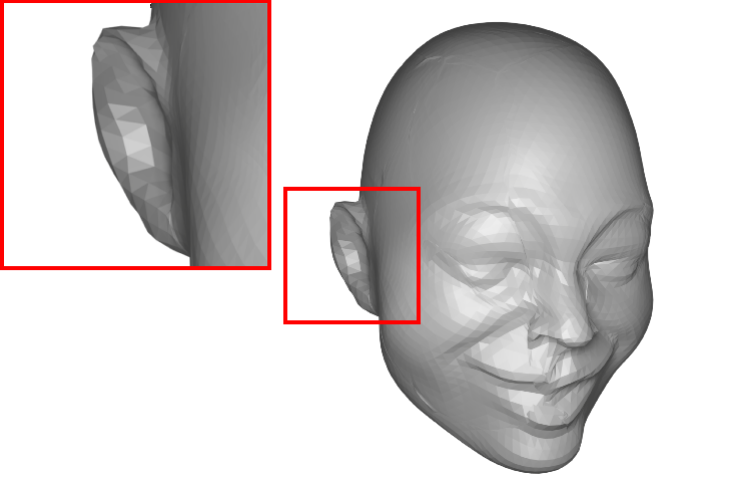}\quad
    \includegraphics[width=0.2\textwidth]{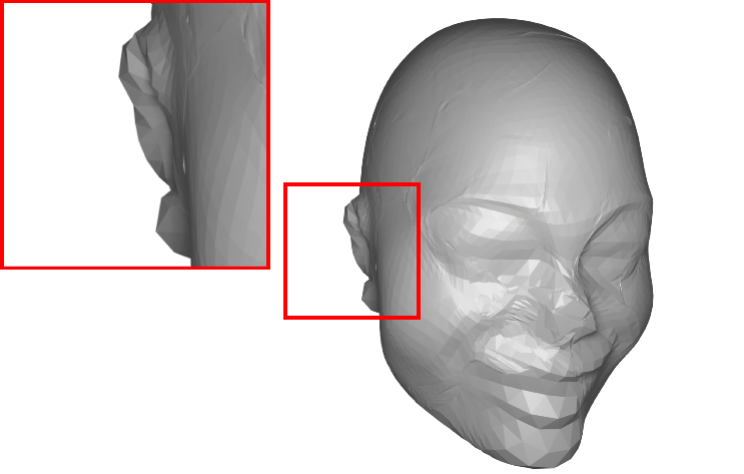}\quad
    \includegraphics[width=0.2\textwidth]{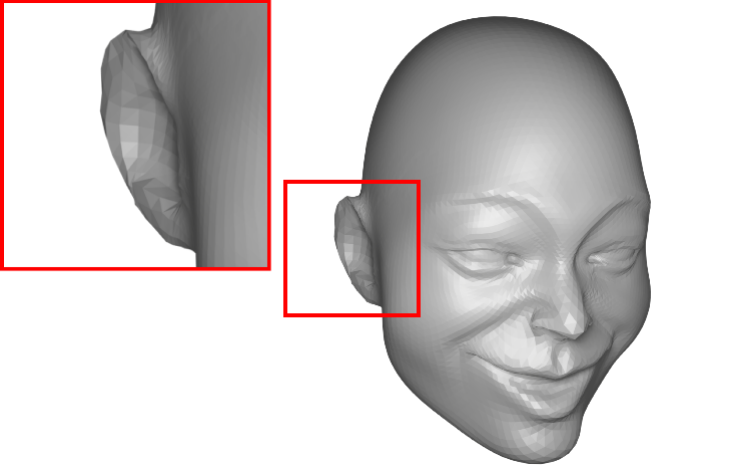}\quad
    \includegraphics[width=0.2\textwidth]{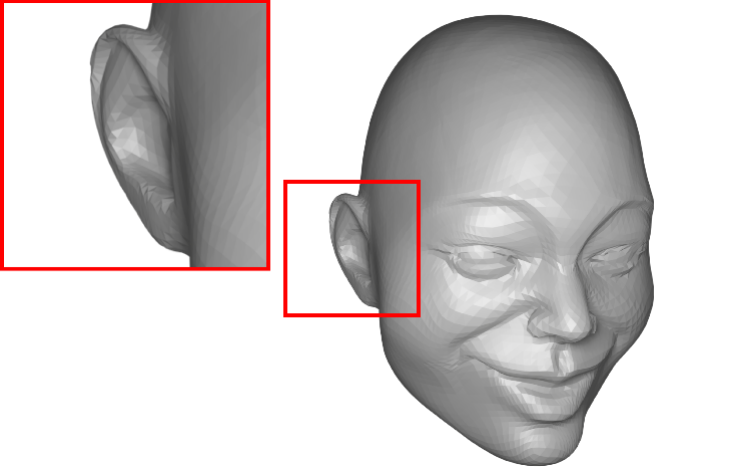}};

    \node[] (a) at (-10,0) {\rotatebox{90}{ Zooey Deschanel}};
    \node[] (a) at (0,0) { \includegraphics[width=0.2\textwidth]{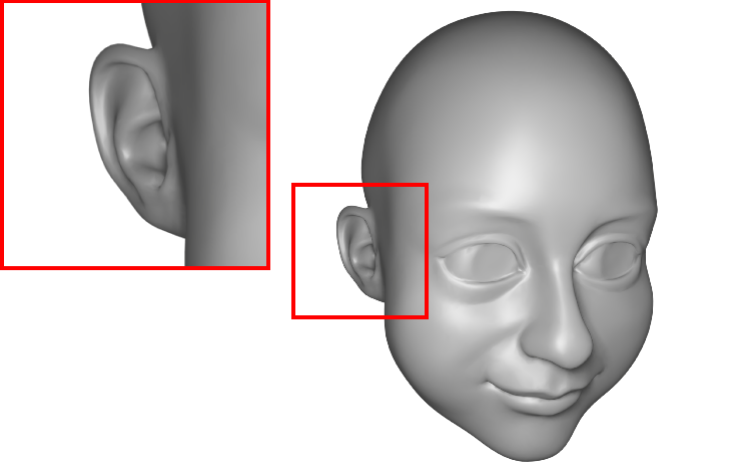}\quad
    \includegraphics[width=0.2\textwidth]{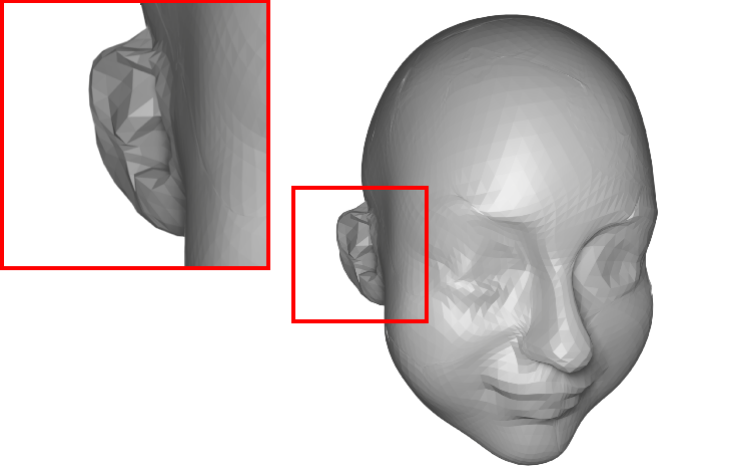}\quad
    \includegraphics[width=0.2\textwidth]{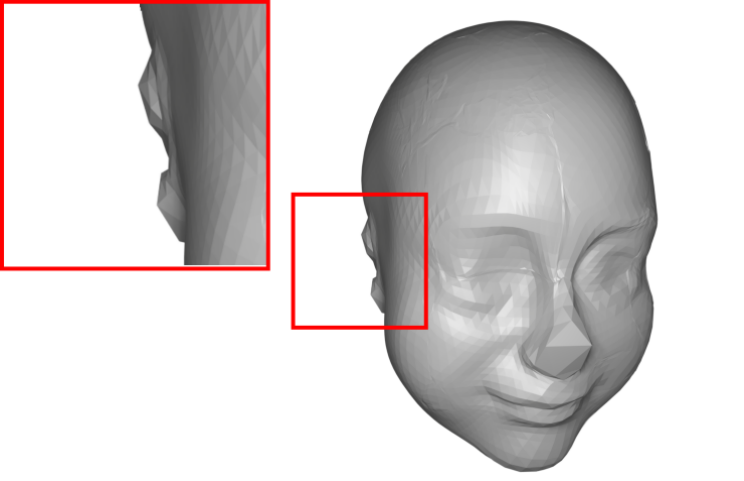}\quad
    \includegraphics[width=0.2\textwidth]{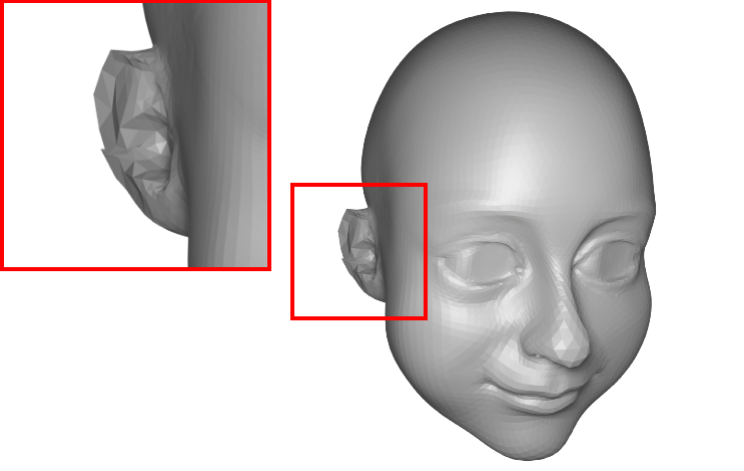}\quad
    \includegraphics[width=0.2\textwidth]{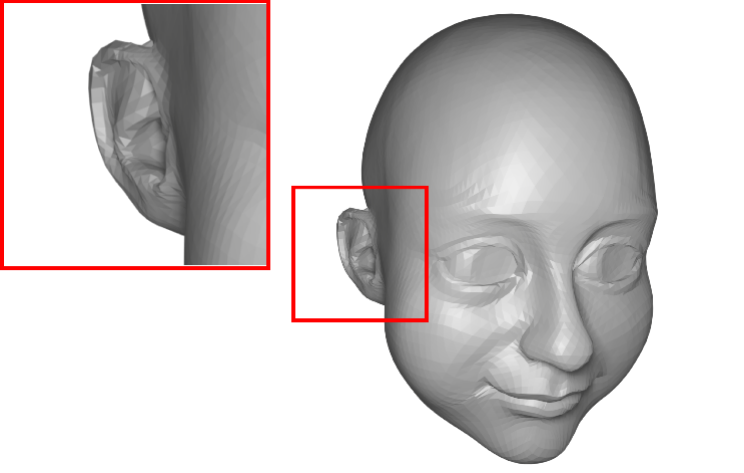}};

    \node[] (a) at (-8,-1.5) { GT};
    \node[] (a) at (-4,-1.5) { CD};
    \node[] (a) at (0,-1.5) { P2F};
    \node[] (a) at (4,-1.5) { MDA \cite{MDA}};
    \node[] (a) at (8,-1.5) { Ours};

    \end{tikzpicture}}
    \caption{Visual comparisons of reconstructed surfaces under different distance metrics.}
    \label{TEMPLATE:REC:FIG}
    \vspace{-0.3cm}
\end{figure*}

\begin{table}[h]
    \centering
    \vspace{-0.3cm}
    \caption{Quantitative comparisons of reconstructed surfaces under different distance metrics.}
     \renewcommand\arraystretch{1.0}
    \begin{tabularx}{0.95\linewidth}{{}X|>{\centering\arraybackslash}X|>{\centering\arraybackslash}X|>{\centering\arraybackslash}X}
    \toprule
    Method & NC$\uparrow$ & F-0.005$\uparrow$ & F-0.01$\uparrow$  \\
    \hline
    CD & 0.9871 & 0.9560 & 0.9930 \\  
    P2F &  0.9484 & 0.8586 & 0.9068 \\ 
    \hline
    MDA \cite{MDA} &  0.9934 & 0.6948 & 0.9092 \\ 
    Ours & \textbf{0.9939} & \textbf{0.9833} & \textbf{0.9989} \\
    \bottomrule
    \end{tabularx}
    \label{TEMPLATE:REC:TABLE}
    \vspace{-0.3cm}
\end{table}

\subsubsection{Comparisons}
We chose two baseline distance metrics for comparison, i.e., CD and P2F. 
For a fair comparison, the numbers of sampled points for computing CD and P2F were kept the same as our DDM. 
Besides, we also made a comparison with MDA \cite{MDA}, where a one-sided CD is used as the distance metric and the other settings were kept the same as their original paper.
We utilized \textit{Normal Consistency (NC)} and \textit{F-Score} with thresholds of 0.5\% and 1\%, denoted as \textit{F-0.005} and \textit{F-0.01}, as the evaluation metrics. We refer the readers to \cite{OCCNET} for the  detailed definitions. During the evaluation, 500K points were sampled on the deformed and target surfaces.
Table \ref{TEMPLATE:REC:TABLE} and Fig. \ref{TEMPLATE:REC:FIG} show the numerical and visual results, respectively, where both quantitative accuracy and visual quality of deformed shapes with our DDM are much better than baseline methods. While MDA exhibits superior visual results compared to CD and P2F, its numerical results about F-0.005 and F-0.01 in Table \ref{TEMPLATE:REC:TABLE} is worse than both. This discrepancy arises from the fact that the deformed surfaces produced by MDA are excessively smoothed, leading to a convergence to local optima, because of its loss function. Although the overall shape is similar, this oversmoothing effect is evident in the error maps illustrated in Fig. \ref{ERROR:MAP:SURF:REC}. \revise{Table \ref{TEMPLATE:REC:TABLE:TIME} presents the running time and GPU memory costs of different distance metrics per iteration. Notably, our DDM incurs a computational cost similar to P2F but higher than CD. This is because both DDM and P2F require identifying the closest points on triangle meshes, a more complex task compared to CD, which only involves finding the nearest points in point clouds.}

\begin{figure}
    \centering
    \quad\quad\quad\quad\quad\quad
    \quad\quad\quad\quad\quad\quad
    \quad\quad\quad\quad\quad\quad
    \quad\includegraphics[width=0.1\textwidth]{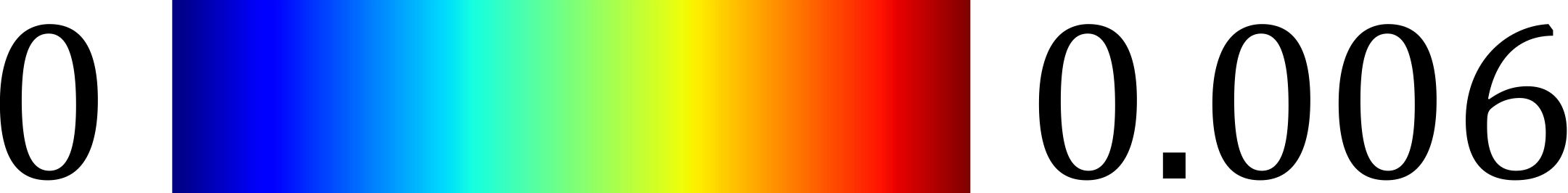}\\
    \includegraphics[width=0.1\textwidth]{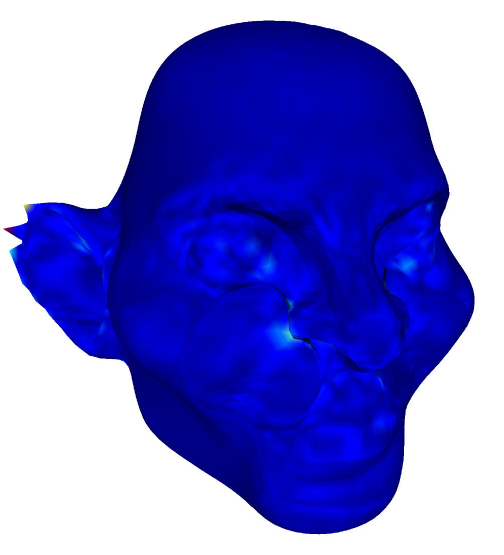}\quad
    \includegraphics[width=0.1\textwidth]{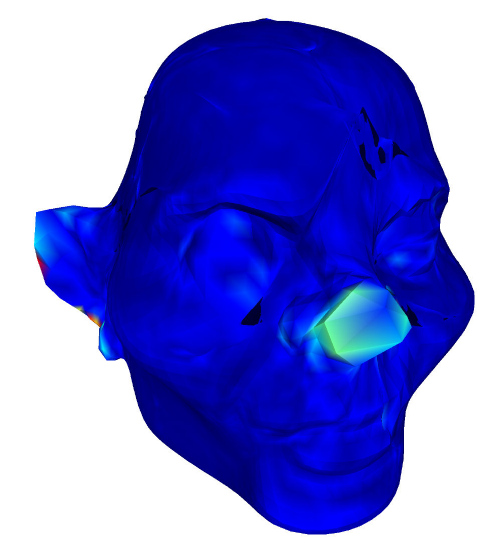} \quad
    \includegraphics[width=0.1\textwidth]{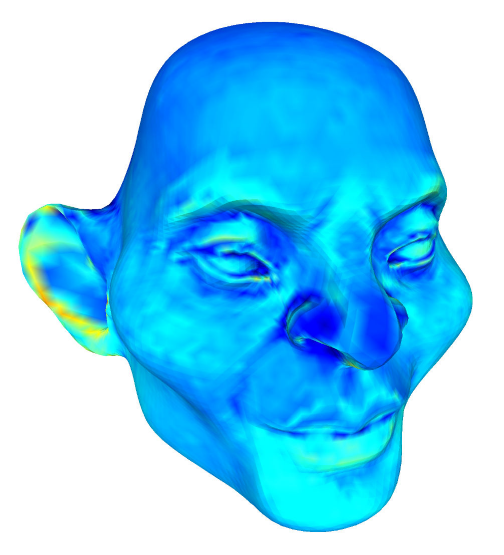}\quad
    \includegraphics[width=0.1\textwidth]{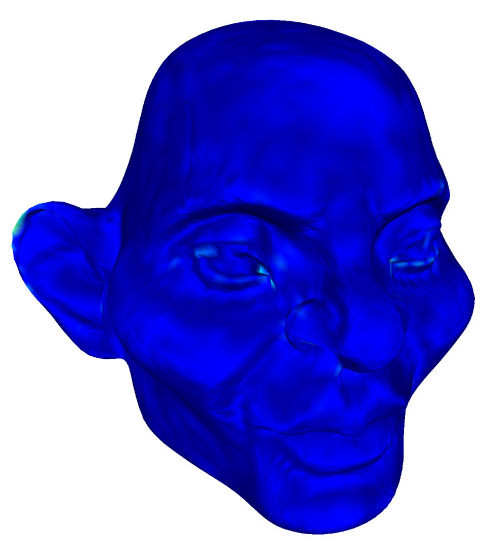}\\
 {\footnotesize CD \quad\quad\quad\quad\quad\quad P2F \quad\quad\quad\quad\quad\quad MDA \cite{MDA} \quad\quad\quad\quad Ours}
 \caption{Error map of the reconstructed surfaces under different distance metrics.}\label{ERROR:MAP:SURF:REC}
 \vspace{-0.8cm}
\end{figure}

\begin{table}[h]
    \centering
    \caption{\revise{Running time and GPU memory costs of different distance metrics in the template surface fitting task.}}
    \renewcommand\arraystretch{1.0}
    \revise{\begin{tabularx}{1\linewidth}{>{\hsize=1.8\hsize}X|>{\centering\arraybackslash\hsize=0.7\hsize}X>{\centering\arraybackslash\hsize=0.7\hsize}X>{\centering\arraybackslash\hsize=0.7\hsize}X}
    \toprule
        Method & CD & P2F & Ours \\
    \hline
        Running Time (ms/Iter.) & 49 &161 & 159\\
        GPU Memory (MB) & 873 & 891 & 893 \\
    \bottomrule
    \end{tabularx}}
    \vspace{-0.5cm}
    \label{TEMPLATE:REC:TABLE:TIME}
\end{table}

\begin{table*}[h]  
\small
\centering
\caption{Quantitative comparisons of rigid registration on the 3DMatch dataset \cite{3DMATCH}, where the coarse transformation is estimated through MAC \cite{MAC}.
 }\label{REGISTRATION:TABLE} \label{RIGID:REGISTRATION:TABLE}
 \renewcommand\arraystretch{1.0}
\begin{tabular}{l|c c c|c c c|c c c}
\toprule %
  \multirow{2}{*}{Method} & \multicolumn{3}{c|}{Clean} & \multicolumn{3}{c|}{Noise (2\%)} & \multicolumn{3}{c}{Outlier (50\%)} \\
\cline{2-10}
  ~ & SR ${(\%)}$ $\uparrow$ & ${\rm RE}$ ($^\circ$) $\downarrow$ & TE (cm) $\downarrow$  & SR ${(\%)}$ $\uparrow$ & ${\rm RE}$ ($^\circ$) $\downarrow$ & TE (cm) $\downarrow$ & SR ${(\%})$ $\uparrow$ & ${\rm RE}$ ($^\circ$) $\downarrow$ & TE (cm) $\downarrow$    \\
\hline
MAC \cite{MAC}      & 82.58 & 2.266 & 7.156 & 66.54 & 3.440 & 10.630 & \textbf{80.50} & 2.432 & 7.749 \\ 
\hline
ICP \cite{ICP}      & 35.25 & 5.768 & 15.361 & 30.49 & 5.909 & 15.864 & 15.85 & 6.604 & 15.951 \\ 
SICP \cite{SICP}    & 55.88 & \textbf{1.376} & \textbf{5.492} & 49.22 & 1.761 & 6.181 & 20.78 & 2.362 & 6.793 \\ 
\hline
EMD \cite{EMD}      & 14.86 & 8.782 & 19.317 & 14.23 & 8.547 & 20.081& 17.76 & 8.538 & 19.597 \\ 
CD \cite{CD}        & 25.65 & 6.889 & 16.368 & 22.79 & 6.760 & 16.597 & 11.10 & 8.298 & 18.700 \\ 
BCD \cite{BCD}      & 70.38 & 2.639 & 9.128 & 58.41 & 3.066 & 10.031 & 24.92 & 3.915 & 11.671 \\ 
ARL \cite{ARL}      & 21.87 & 3.194 & 20.403 & 16.69 & 3.373 & 20.456 & 14.00 & 9.764 & 15.087 \\ 
\hline 
\textbf{Ours}       & \textbf{82.77} &  1.485 & 6.098 & \textbf{69.87} & \textbf{1.469} & \textbf{5.977} & {78.22} & \textbf{1.581} & \textbf{6.665} \\ 
\bottomrule %
\end{tabular}
\vspace{-0.5cm}
\end{table*} 

\begin{figure*}[!h]
\centering
\resizebox{0.95\linewidth}{!}{
\begin{tikzpicture}[]
\node[] (a) at (0,2.3) {\includegraphics[width=0.2\linewidth]{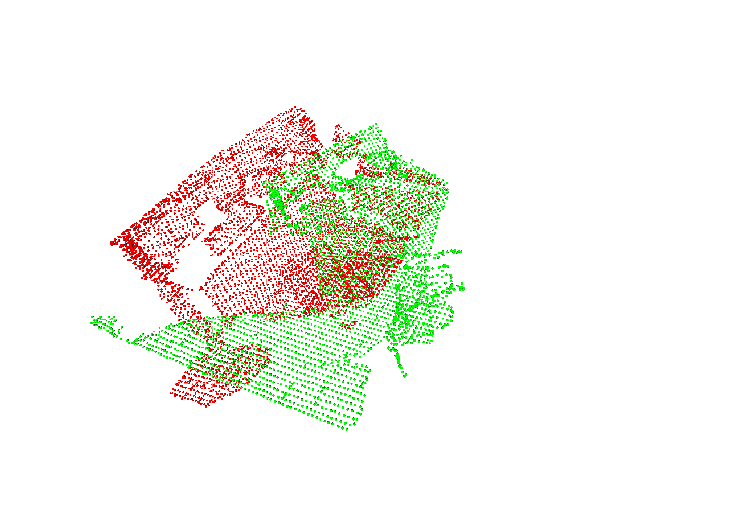} };
\node[] (a) at (16/9*2,2.3) {\includegraphics[width=0.2\linewidth]{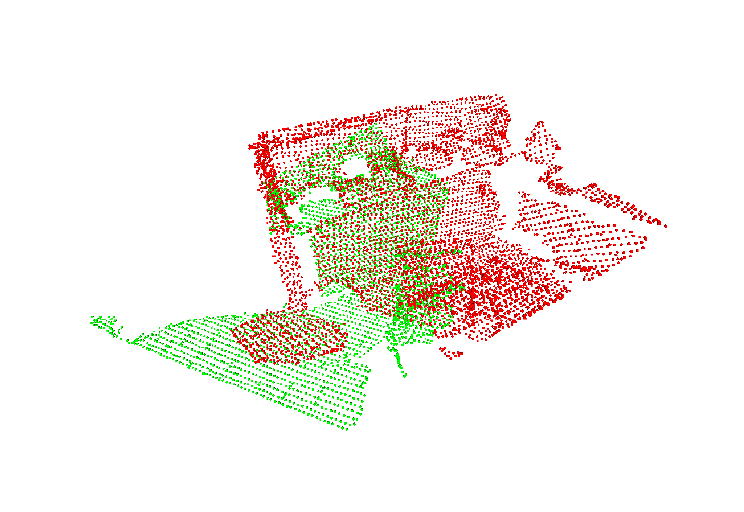} };
\node[] (a) at (16/9*4,2.3) {\includegraphics[width=0.2\linewidth]{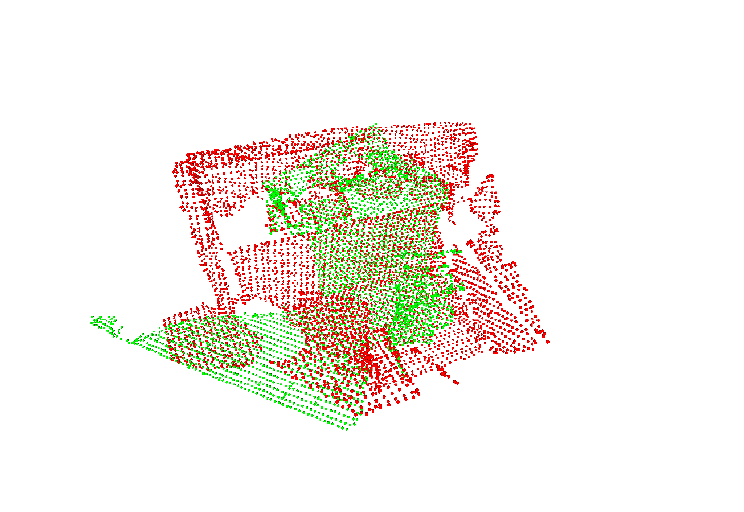} };
\node[] (a) at (16/9*6,2.3) {\includegraphics[width=0.2\linewidth]{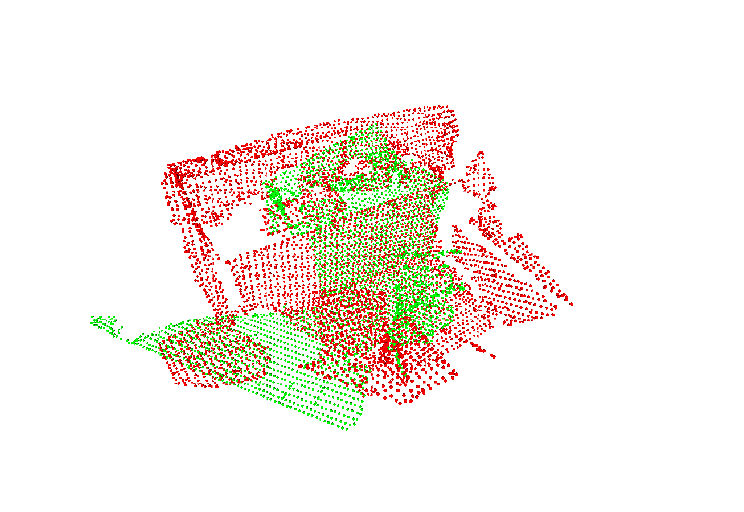} };
\node[] (a) at (16/9*8,2.3) {\includegraphics[width=0.2\linewidth]{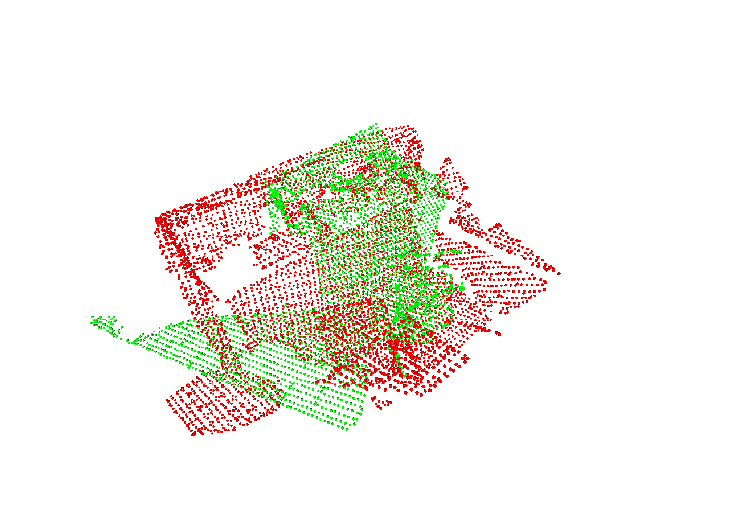} };

\node[] (a) at (0,0) {\includegraphics[width=0.2\linewidth]{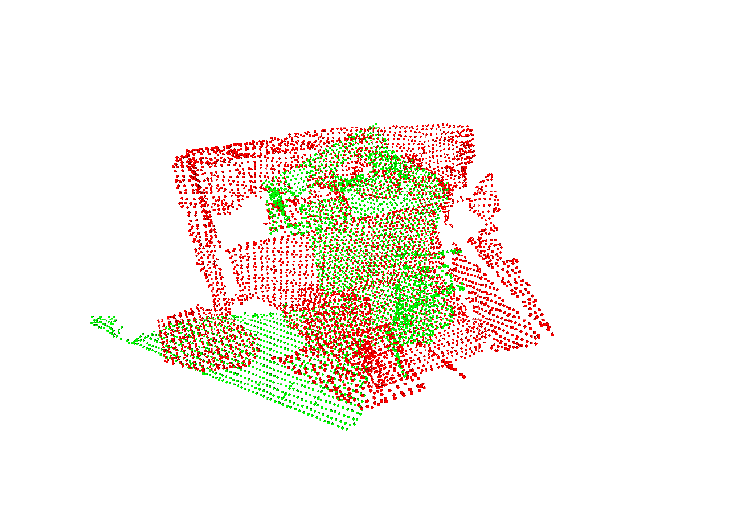} };
\node[] (a) at (16/9*2,0) {\includegraphics[width=0.2\linewidth]{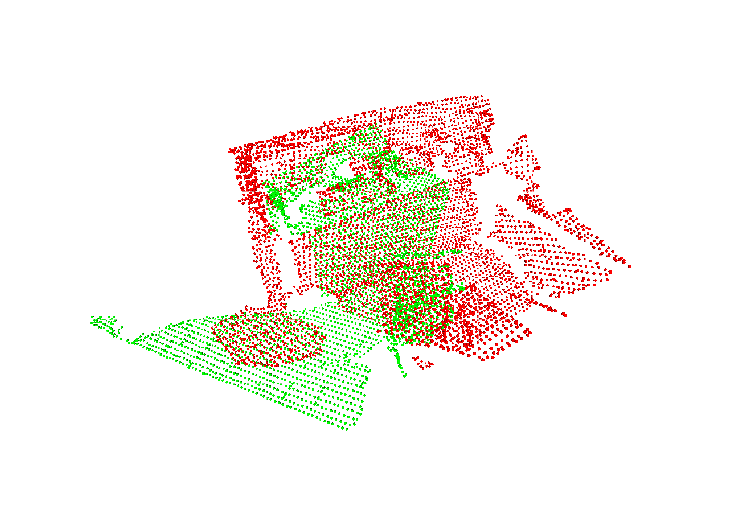} };
\node[] (a) at (16/9*4,0) {\includegraphics[width=0.2\linewidth]{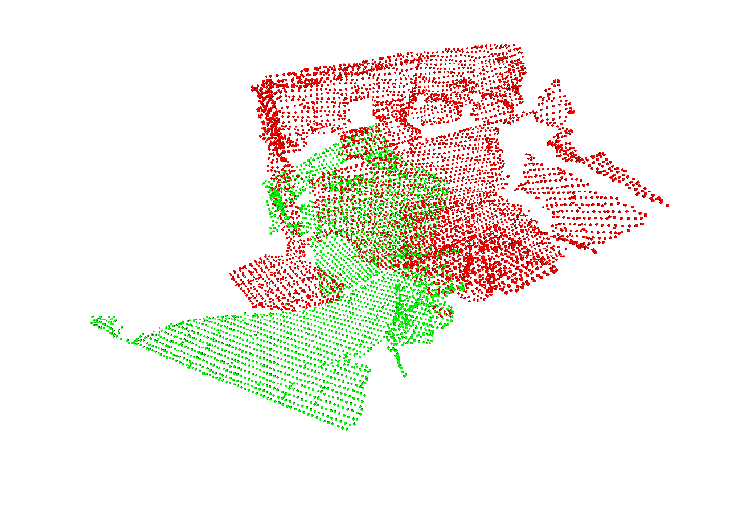} };
\node[] (a) at (16/9*6,0) {\includegraphics[width=0.2\linewidth]{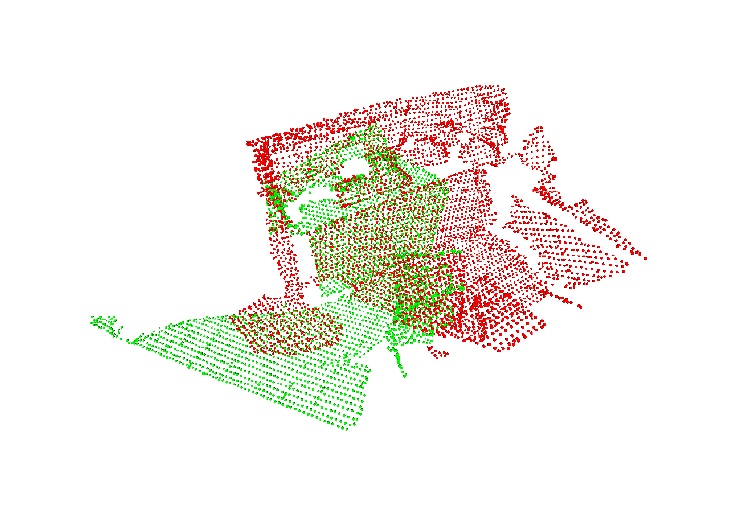} };
\node[] (a) at (16/9*8,0) {\includegraphics[width=0.2\linewidth]{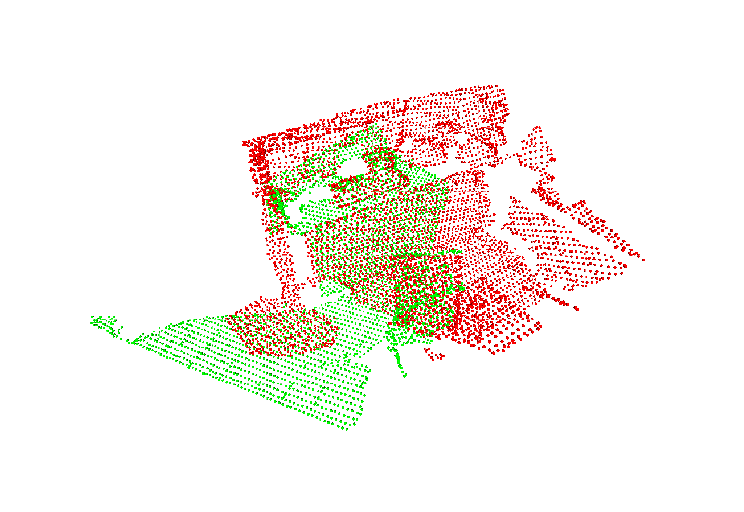} };

\node[] (a) at (0,-2.3) {\includegraphics[width=0.2\linewidth]{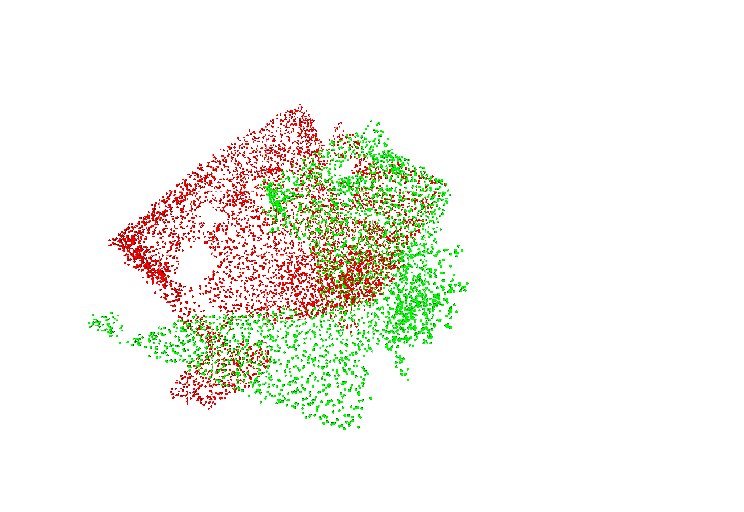} };
\node[] (a) at (16/9*2,-2.3) {\includegraphics[width=0.2\linewidth]{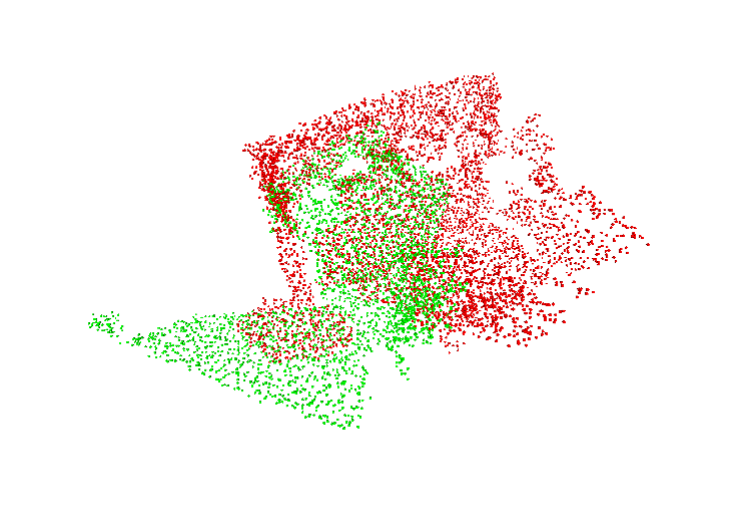} };
\node[] (a) at (16/9*4,-2.3) {\includegraphics[width=0.2\linewidth]{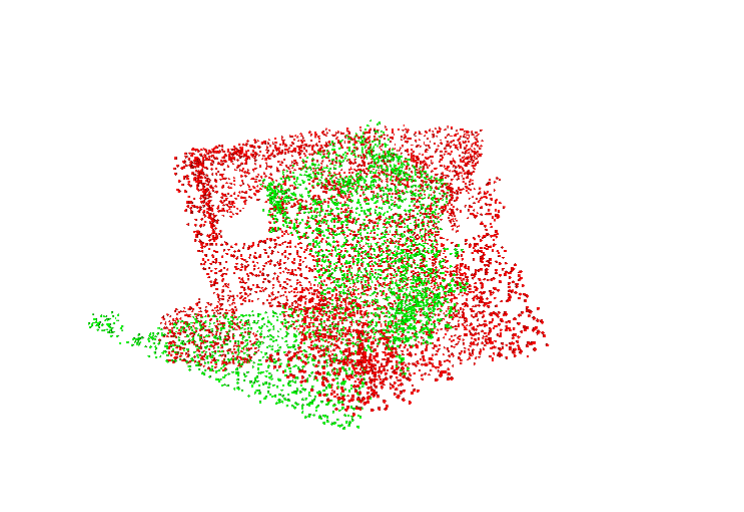} };
\node[] (a) at (16/9*6,-2.3) {\includegraphics[width=0.2\linewidth]{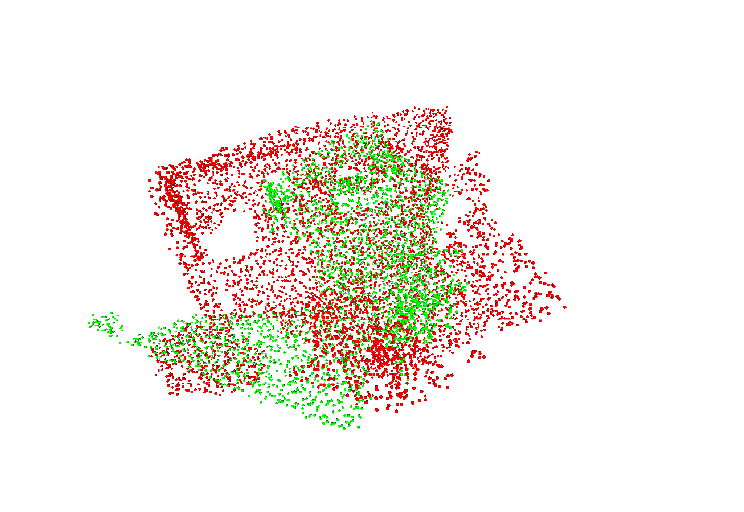} };
\node[] (a) at (16/9*8,-2.3) {\includegraphics[width=0.2\linewidth]{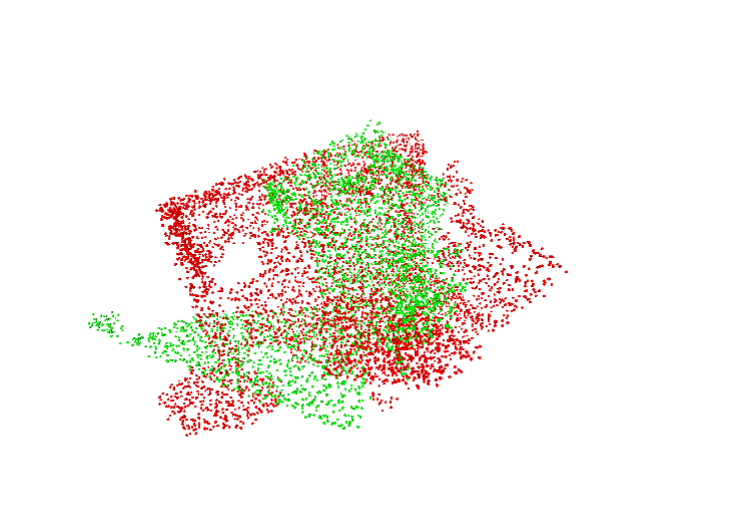} };

\node[] (a) at (0,-4.6) {\includegraphics[width=0.2\linewidth]{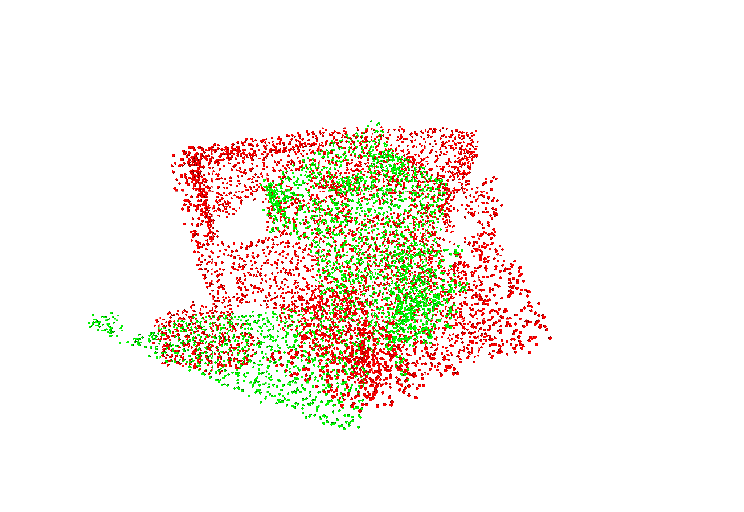} };
\node[] (a) at (16/9*2,-4.6) {\includegraphics[width=0.2\linewidth]{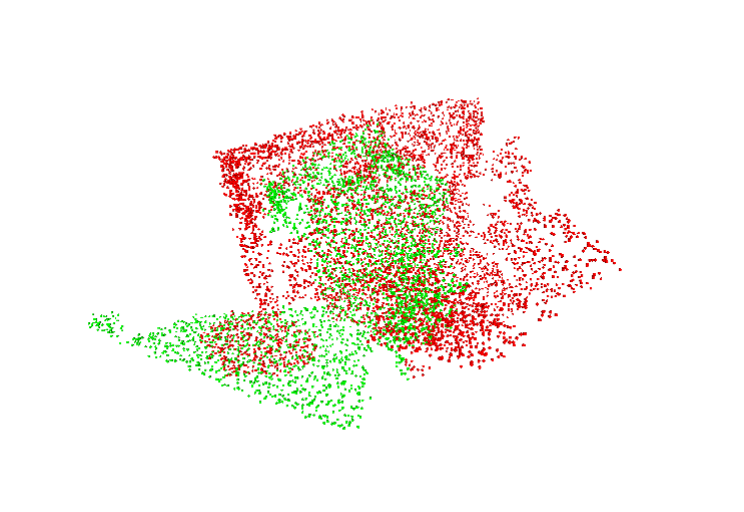} };
\node[] (a) at (16/9*4,-4.6) {\includegraphics[width=0.2\linewidth]{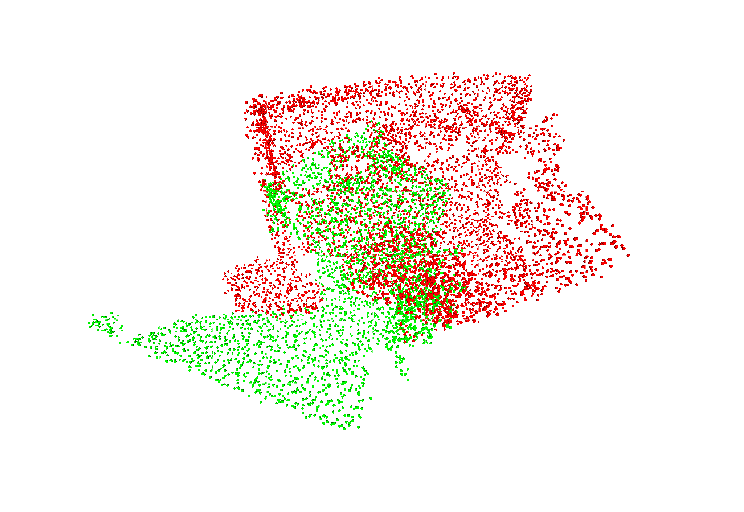} };
\node[] (a) at (16/9*6,-4.6) {\includegraphics[width=0.2\linewidth]{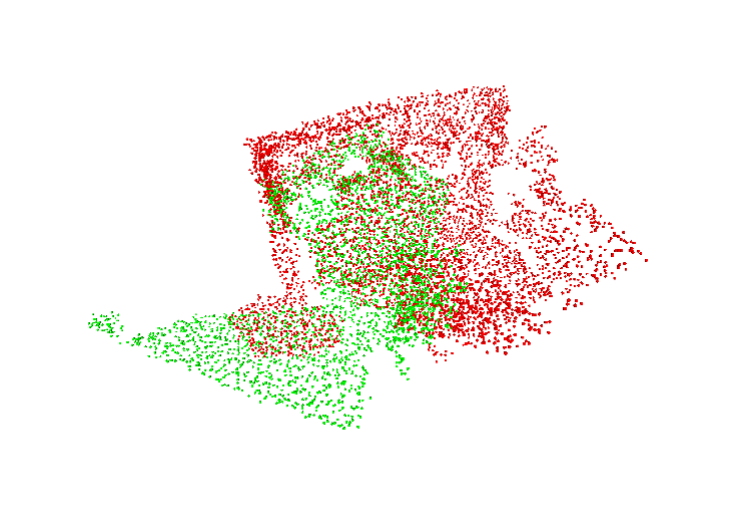} };
\node[] (a) at (16/9*8,-4.6) {\includegraphics[width=0.2\linewidth]{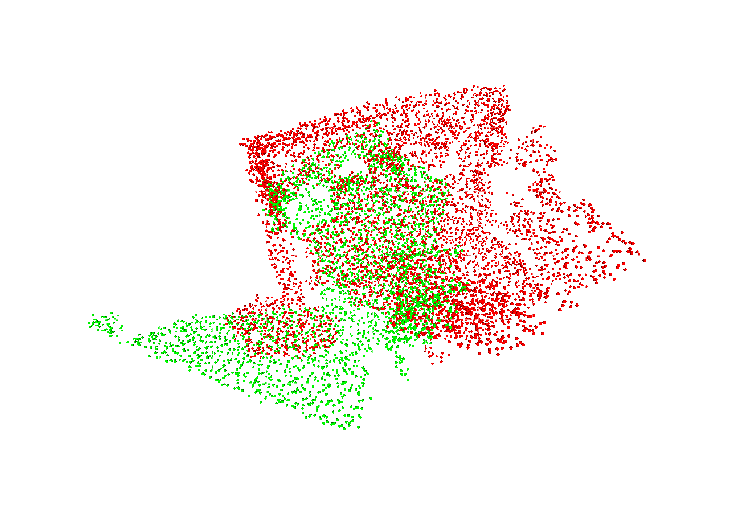} };

\node[] (a) at (0,-7.2) {\includegraphics[width=0.2\linewidth]{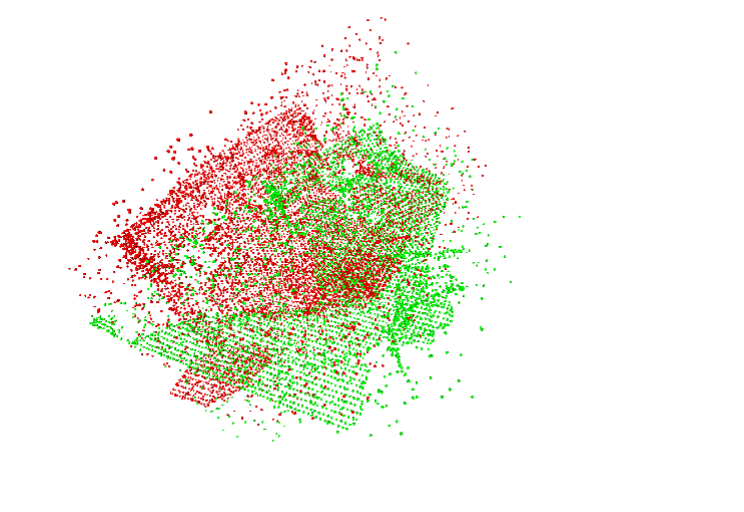} };
\node[] (a) at (16/9*2,-7.2) {\includegraphics[width=0.2\linewidth]{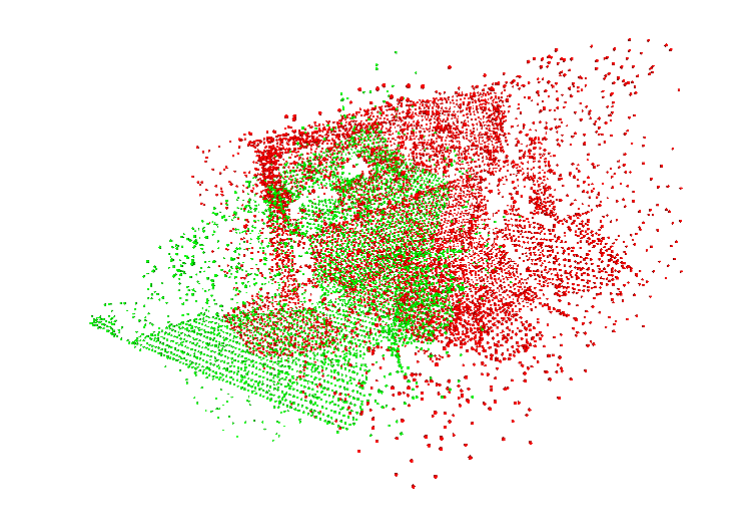} };
\node[] (a) at (16/9*4,-7.2) {\includegraphics[width=0.2\linewidth]{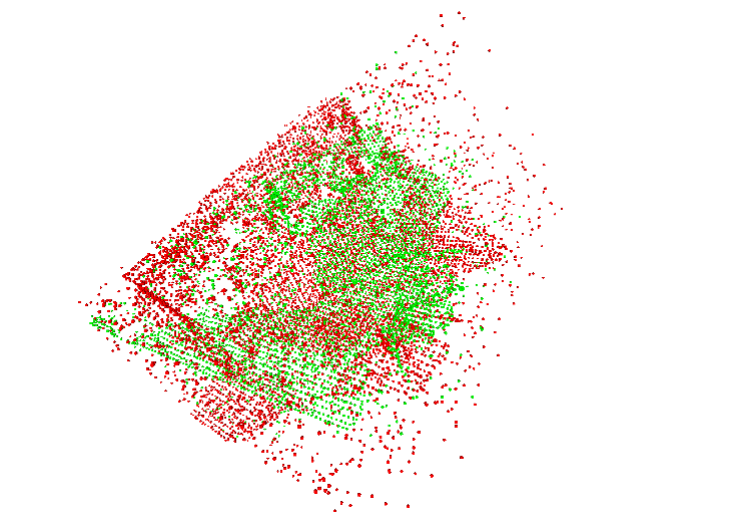} };
\node[] (a) at (16/9*6,-7.2) {\includegraphics[width=0.2\linewidth]{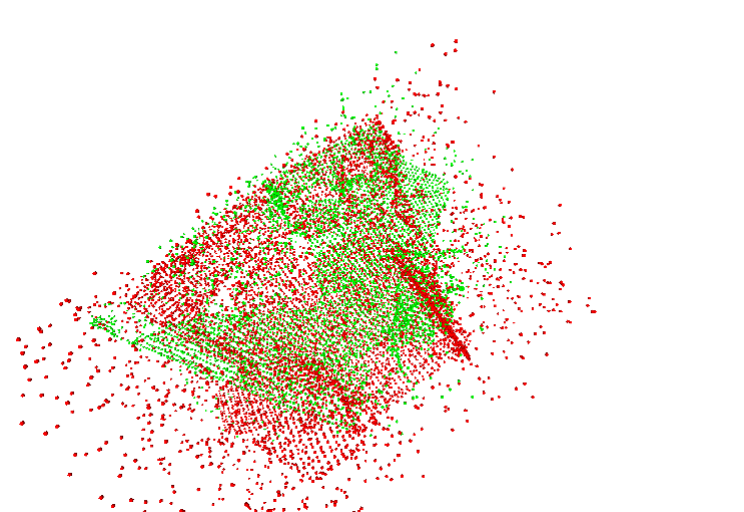} };
\node[] (a) at (16/9*8,-7.2) {\includegraphics[width=0.2\linewidth]{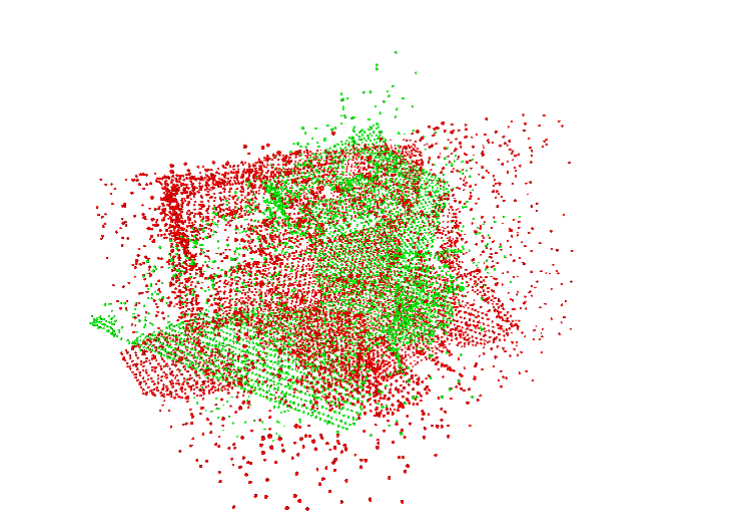} };

\node[] (a) at (0,-9.9) {\includegraphics[width=0.2\linewidth]{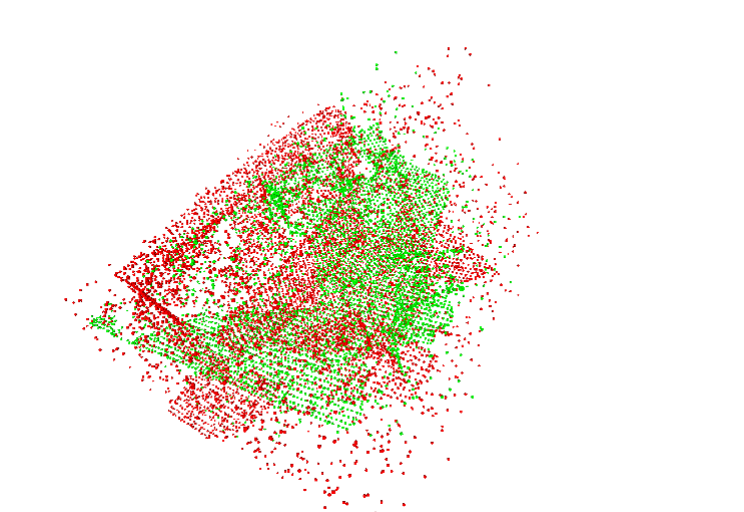} };
\node[] (a) at (16/9*2,-9.9) {\includegraphics[width=0.2\linewidth]{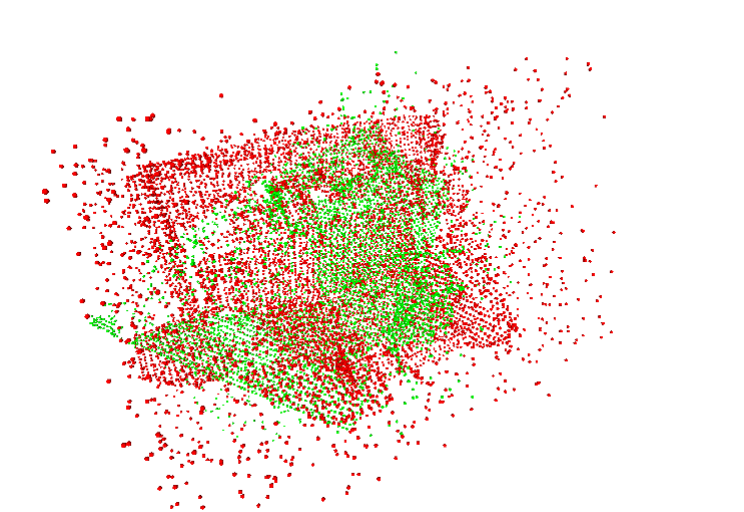} };
\node[] (a) at (16/9*4,-9.9) {\includegraphics[width=0.2\linewidth]{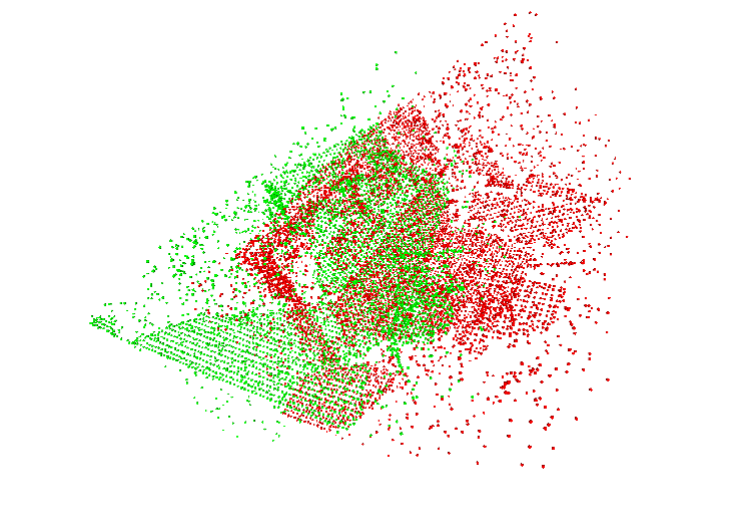} };
\node[] (a) at (16/9*6,-9.9) {\includegraphics[width=0.2\linewidth]{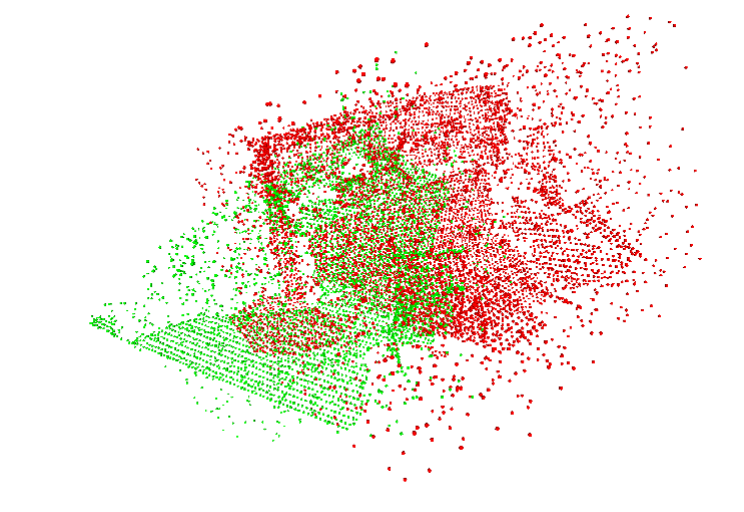} };
\node[] (a) at (16/9*8,-9.9) {\includegraphics[width=0.2\linewidth]{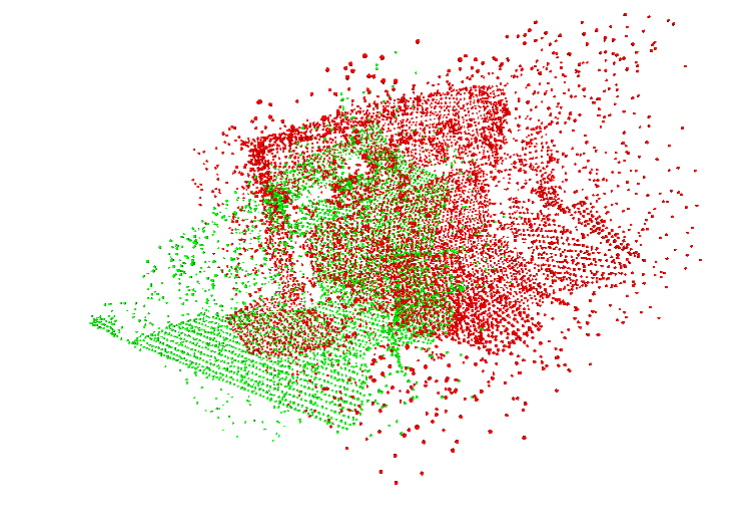} };

\node[] (a) at (0,1.2) {\footnotesize Initial };
\node[] (a) at (32/9,1.2) {\footnotesize MAC };
\node[] (a) at (64/9,1.2) {\footnotesize MAC + ICP };
\node[] (a) at (96/9,1.2) {\footnotesize MAC + SICP };
\node[] (a) at (128/9,1.2) {\footnotesize MAC + EMD };

\node[] (a) at (0,-1.1) {\footnotesize MAC + CD };
\node[] (a) at (16/9*2,-1.1) {\footnotesize MAC + BCD };
\node[] (a) at (16/9*4,-1.1) {\footnotesize MAC + ARL };
\node[] (a) at (16/9*6,-1.1) {\footnotesize MAC + Ours };
\node[] (a) at (16/9*8,-1.1) {\footnotesize GT };
\draw [dashed] (-16/9,-1.4) -- (16,-1.4);

\node[] (a) at (0,-3.4) {\footnotesize Initial };
\node[] (a) at (32/9,-3.4) {\footnotesize MAC };
\node[] (a) at (64/9,-3.4) {\footnotesize MAC + ICP };
\node[] (a) at (96/9,-3.4) {\footnotesize MAC + SICP };
\node[] (a) at (128/9,-3.4) {\footnotesize MAC + EMD };

\node[] (a) at (0,-5.7) {\footnotesize MAC + CD };
\node[] (a) at (16/9*2,-5.7) {\footnotesize MAC + BCD };
\node[] (a) at (16/9*4,-5.7) {\footnotesize MAC + ARL };
\node[] (a) at (16/9*6,-5.7) {\footnotesize MAC + Ours };
\node[] (a) at (16/9*8,-5.7) {\footnotesize GT };
\draw [dashed] (-16/9,-6) -- (16,-6);

\node[] (a) at (0,-8.5) {\footnotesize Initial };
\node[] (a) at (32/9,-8.5) {\footnotesize MAC };
\node[] (a) at (64/9,-8.5) {\footnotesize MAC + ICP };
\node[] (a) at (96/9,-8.5) {\footnotesize MAC + SICP };
\node[] (a) at (128/9,-8.5) {\footnotesize MAC + EMD };

\node[] (a) at (0,-11.3) {\footnotesize MAC + CD };
\node[] (a) at (16/9*2,-11.3) {\footnotesize MAC + BCD };
\node[] (a) at (16/9*4,-11.3) {\footnotesize MAC + ARL };
\node[] (a) at (16/9*6,-11.3) {\footnotesize MAC + Ours };
\node[] (a) at (16/9*8,-11.3) {\footnotesize GT };

\end{tikzpicture}
}
\vspace{-0.3cm}
\caption{\small Visual comparisons of rigid registration results. The \textcolor{red}{red} and \textcolor{green}{green} points represent the source and transformed point clouds, respectively. From top to bottom: Clean, Noise, and Outlier. \color{magenta}{\faSearch~} Zoom in to see details.} \label{RIGID:REGISTRATION:FIG}
\vspace{-0.6cm}
\end{figure*}

\subsection{Rigid Registration of 3D Point Clouds} 

\subsubsection{Implementation Details} 
We experimented with the commonly used 3DMatch dataset \cite{3DMATCH}. 
Following previous works \cite{POINTDSC, DGR},  we downsampled the original point clouds 
through Voxel Grid Filtering with a resolution of 5cm, producing point clouds each with \revise{approximately} 5K points. 
Then we chose a feature-based registration methods, MAC \cite{MAC}, to calculate the coarse transformation between the two point clouds, serving 
as the initialization of the optimization of Eq. (\ref{RIGID:REGISTRATION}).
To test the robustness of different distance metrics, we also conducted experiments on the point clouds with noise and outliers.
To simulate the noisy conditions, we added Gaussian noise with a mean of zero and a standard deviation of 2cm to the clean point clouds. For the outlier data, we infused the point clouds with an additional 50\% randomly selected points. In all registration experiments, we set $M$ 10 times the number of points, $K=5$, $\beta=20$, and 
$\sigma=0.05$. 
We employed the Adam optimizer to optimize, 
spanning 200 iterations with a learning rate set at 0.02.

\subsubsection{Comparisons} We compared our DDM with EMD \cite{EMD}, CD \cite{CD}, DCD \cite{BCD}, and ARL \cite{ARL}. 
Additionally, we made comparisons with two widely used registration methods: ICP \cite{ICP} and SICP \cite{SICP}. 
We employed two widely used evaluation metrics in the rigid registration task, \textit{Rotation Error (RE)} and \textit{Translation Error (TE)}, to measure the registration accuracy, respectively defined as  
\begin{align}
    &\texttt{RE}(\mathbf{\hat{R}},\mathbf{R}_{\rm GT})={\rm arccos}\left(\frac{\texttt{Tr}(\mathbf{R}_{\rm GT}^T\mathbf{\hat{R}})-1}{2}\right) \\
    &\texttt{TE}(\mathbf{\hat{t}},\mathbf{t}_{\rm GT})=\|\mathbf{\hat{t}}-\mathbf{t}_{\rm GT}\|_2,
\end{align}
where $[\mathbf{\hat{R}},~\mathbf{\hat{t}}]$ and $[\mathbf{R}_{\rm GT},\mathbf{t}_{\rm GT}]$ are the estimated and ground-truth transformations, respectively. 
Besides, we also used \textit{Successful Rate (SR)} to evaluate the performance of different methods, where the results with $\texttt{RE}<15^\circ$ and $\texttt{TE} < 30$cm are considered successful. It is worth noting that we only concentrated on these successful results when calculating mean RE and TE. 

The numerical results are listed in Table \ref{RIGID:REGISTRATION:TABLE}, where the refinement through our DDM further improves the accruacy of MAC significantly, 
while after the refinement with other distance metric, the registration accuracy even decrease dynamically. Although the RE and TE of SICP are slightly better 
than ours, its SR is much lower than ours, showing its limitation. The visual results are shown in Fig. \ref{RIGID:REGISTRATION:FIG}, and when dealing with the data with noise or outliers, the details of the point clouds become indistinct, our method still register the two point clouds successfully.
To evaluate the performance of different methods insightly, we also used the \textit{Registration Recall (RR)} under different \textit{thresholds} of RE and TE to measure the registration accuracy. As shown in Fig. \ref{REGISTRATION:RECALL}, the superiority of our DDM is further verified. 
\revise{Additionally, Table \ref{RIGID:REG:TABLE:TIME} presents the computational cost per iteration during the optimization process. As shown, EMD incurs the highest computational overhead due to the necessity of constructing a comprehensive bi-directional mapping between two point sets. Similarly, ARL requires significant computational resources, as it involves calculating the intersection of lines with the two point clouds. In contrast, CD, BCD, and our DDM exhibit comparable computational costs. However, DDM achieves superior registration results compared to these methods, demonstrating its effectiveness and efficiency in rigid point cloud registration.}

\begin{figure}[h]
\centering
\vspace{-0.5cm}
    \subfloat[Clean]{\includegraphics[width=0.25\textwidth]{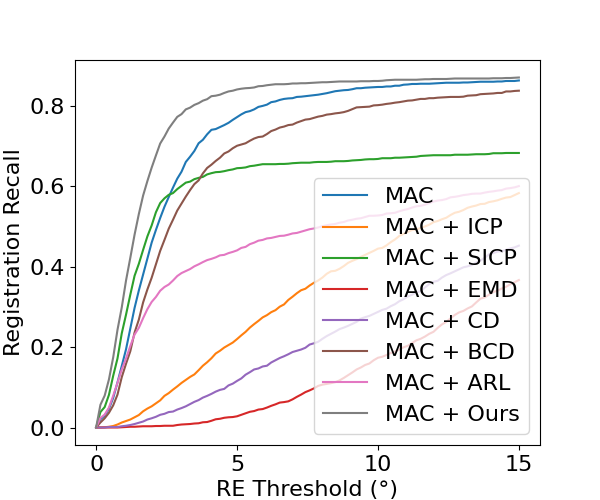}\includegraphics[width=0.25\textwidth]{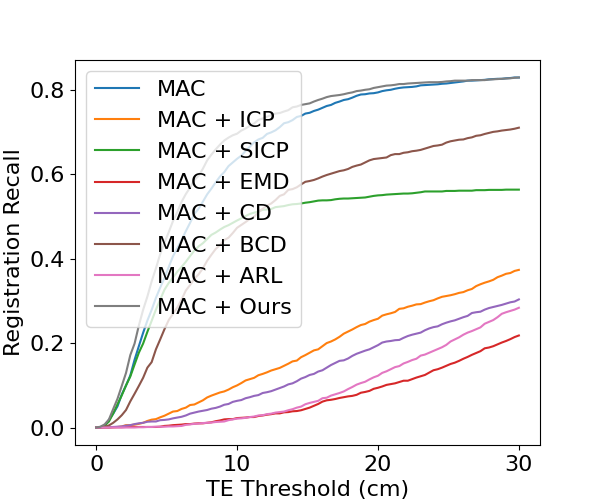}\vspace{-0.2cm}} \\ 
    \subfloat[Noise]{\includegraphics[width=0.25\textwidth]{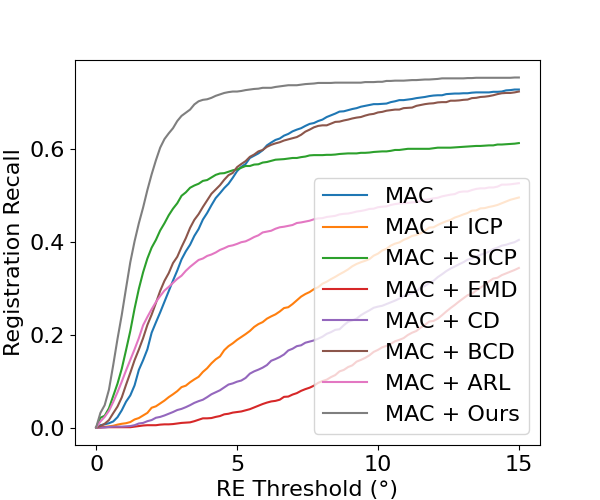}\includegraphics[width=0.25\textwidth]{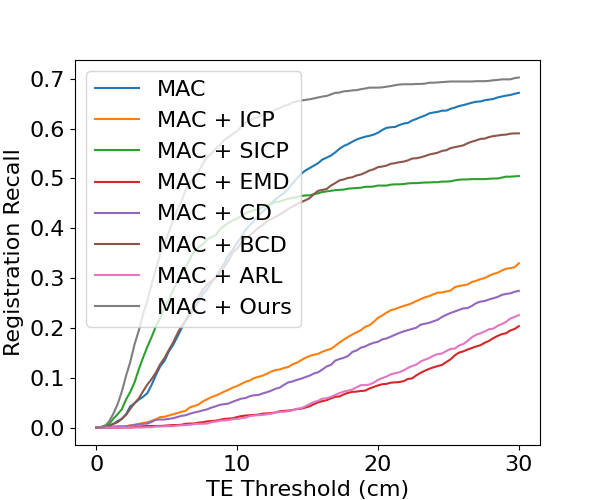}\vspace{-0.2cm}} \\ 
    \subfloat[Outlier]{\includegraphics[width=0.25\textwidth]{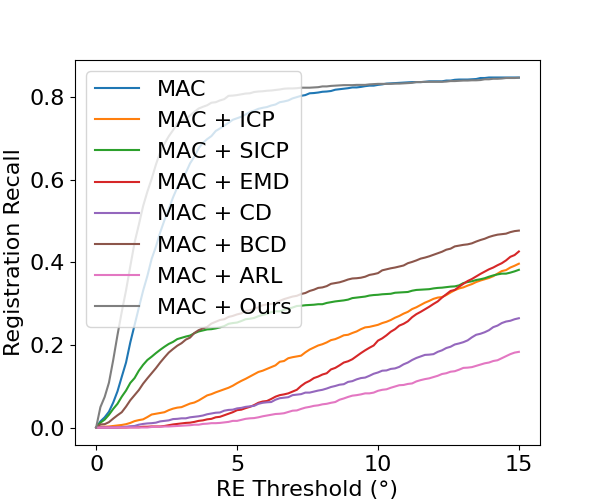}\includegraphics[width=0.25\textwidth]{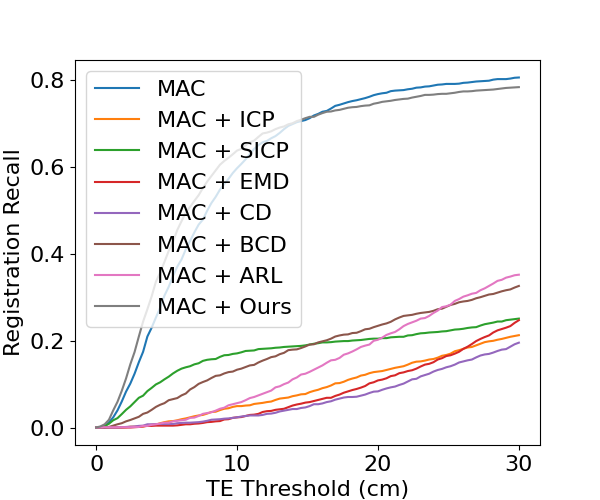}\vspace{-0.2cm}}
    \caption{ Registration recall with different RE and TE thresholds on the 3DMatch dataset \cite{3DMATCH}.}\label{REG:RECALL:THRESHOLD} \label{REGISTRATION:RECALL}
\end{figure}

\begin{table}[h]
    \centering
    \caption{Running time and GPU memory costs of different distance metrics in the rigid registration task.}
    \renewcommand\arraystretch{1.3}
    \resizebox{1\linewidth}{!}{\begin{tabular}{l|ccccc}
    \toprule
        Method & EMD & CD & BCD \cite{BCD}  &  ARL \cite{ARL} &  Ours  \\
    \hline
        Running Time (ms/Iter.) & 559  &20 & 21 &145 & 22\\
        GPU Memory (MB) &1977  & 1721 & 1730 &5743 & 1745\\
    \bottomrule
    \end{tabular}}
    \label{RIGID:REG:TABLE:TIME}
    \vspace{-0.3cm}
\end{table}

\subsection{Non-Rigid 3D Shape Registration}

\subsubsection{Implementation Details} 
Following the recent work named AMM \cite{AMM}, We employed four sequences ('handstand,' 'crane,' 'march1,' and 'swing') from the human motion dataset \cite{AMA} to evaluate the performance of various distance metrics. 
We aligned the $i$-th frame with the $(i+2)$-th frame for each sequence, where $i$ ranges from 10 to 60 to filter out frames with small motion. The distance threshold $\epsilon$ was set to 5 times the average length of edges of the source surface. We generated $M=4\times 10^4$ reference points with $\sigma=0.1$, 
and specified the $K$-NN size for the deformation nodes 
as $K=5$. Throughout the entire optimization process, the weight $\lambda$ for balancing different terms in Eq. \eqref{eq:non-rigid} 
was set to $500$. We realized the optimization through the SGD optimizer, spanning 1000 iterations with a learning rate of 2.0.

\begin{table}[h]
    \centering \small
    \caption{
    Quantitative comparison of non-rigid registration on the sequences from the human motion dataset \cite{AMA}.}
    \renewcommand\arraystretch{1.3}
    \resizebox{1\linewidth}{!}{
    \begin{tabular}{c|c c| c c }
    \toprule %
    \multirow{2}{*}{Sequence} & \multicolumn{4}{c}{RMSE $\downarrow$ $\pm$ STD $\downarrow$  ($\times 10^{-2}$)} \\
    \cline{2-5}
         & CD & P2F & AMM \cite{AMM} & Ours  \\
    \hline
    handstand & 3.571$\pm$1.968 & 5.646$\pm$4.744 & 1.331$\pm$1.112 &  \textbf{1.035$\pm$0.913}  \\
    crane & 2.919$\pm$1.326 & 3.871$\pm$1.866 &  1.308$\pm$1.608 &  \textbf{0.509$\pm$0.247}  \\
    march1 & 1.818$\pm$1.177 & 2.509$\pm$1.811 & 0.690$\pm$1.295 &  \textbf{0.259$\pm$0.176}  \\
    swing & 2.263$\pm$0.547  & 2.901$\pm$1.017 &  1.477$\pm$1.070 &  \textbf{0.648$\pm$0.302}  \\
    \bottomrule %
    \end{tabular}}
    \label{NON:RIGID:REG:TAB} 
\end{table}

\subsubsection{Comparisons} We compared our DDM with CD and the P2F distance. For a fair comparison, we kept the number of sampled points in CD and the P2F distance the same as the generated reference points of our DDM.  In addition, we also compared with the state-of-the-art optimization-based non-rigid registration method called AMM \cite{AMM}. We computed \textit{RMSE} of the estimated and ground truth vertices of the deformed source surfaces for quantitative evaluation. The results in Table \ref{NON:RIGID:REG:TAB} and Fig. \ref{NON:RIGID:REG:FIG} demonstrate the significant superiority of our DDM. Additionally, we show the registration errors of all pairs in a sequence in Fig. \ref{RMSE:CURVE} for a more comprehensive comparison, where it can be seen that the compared methods produce much larger RMSE values for pairs with significant motion, while our DDM always works well. \revise{Additionally, Table \ref{TEMPLATE:NONRIGID:TABLE:TIME} presents the computational time and GPU memory requirements per iteration during the optimization process. Specifically, CD achieves the fastest running speed, as it only involves identifying the nearest points within the point clouds. In contrast, P2F and our proposed DDM require more computational time due to the added complexity of locating the closest point on the triangle mesh, which is inherently more complex than finding the nearest points in a point cloud.
}

\begin{figure}[t]
\centering
{
\begin{tikzpicture}[]

\node[] (a) at (-0.5,0) { \rotatebox{90}{\footnotesize swing} };
\node[] (a) at (-0.5,4/1.6) { \rotatebox{90}{\footnotesize march} };
\node[] (a) at (-0.5,8/1.6) { \rotatebox{90}{\footnotesize crane} };
\node[] (a) at (-0.5,12/1.6) { \rotatebox{90}{\footnotesize handstand} };

\node[] (a) at (0.4,0) {\includegraphics[width=0.15\linewidth]{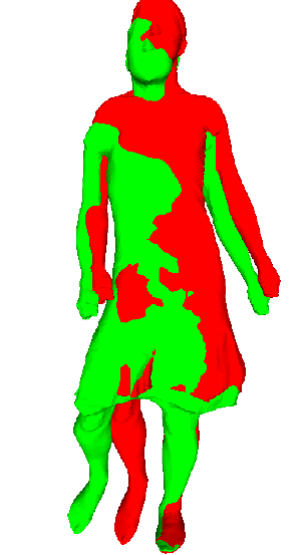} };
\node[] (a) at (3/1.6,0) {\includegraphics[width=0.15\linewidth]{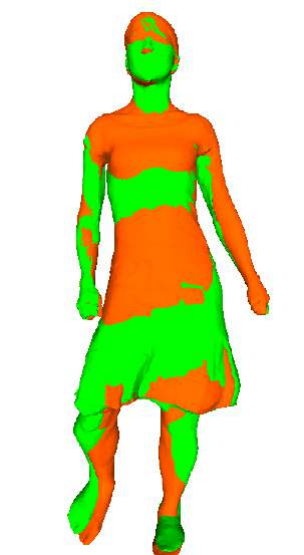} };
\node[] (a) at (6/1.6,0) {\includegraphics[width=0.15\linewidth]{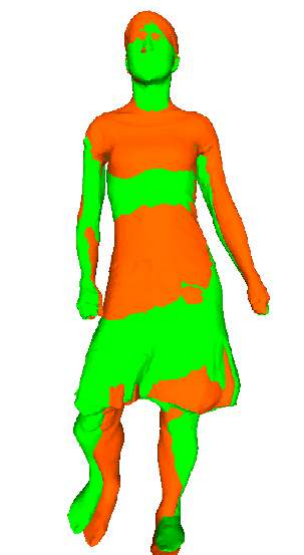} };
\node[] (a) at (9/1.6,0) {\includegraphics[width=0.15\linewidth]{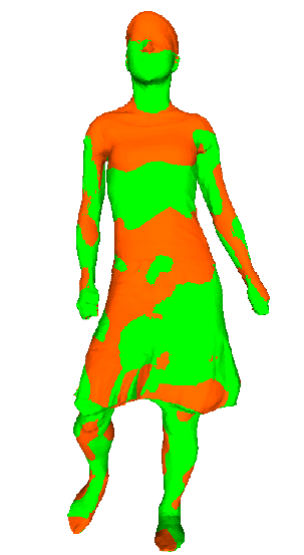} };
\node[] (a) at (12/1.6,0) {\includegraphics[width=0.15\linewidth]{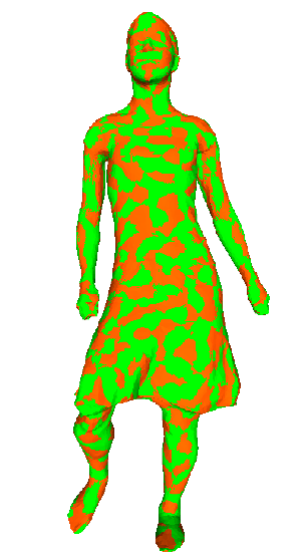} };

\node[] (a) at (4/1.6,-1/1.6-0.1) {\includegraphics[width=0.06\linewidth]{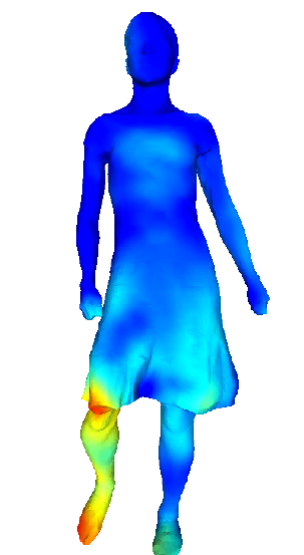} };
\node[] (a) at (7/1.6,-1/1.6-0.1) {\includegraphics[width=0.06\linewidth]{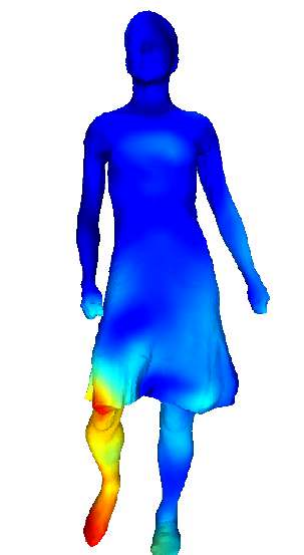} };
\node[] (a) at (10/1.6,-1/1.6-0.1) {\includegraphics[width=0.06\linewidth]{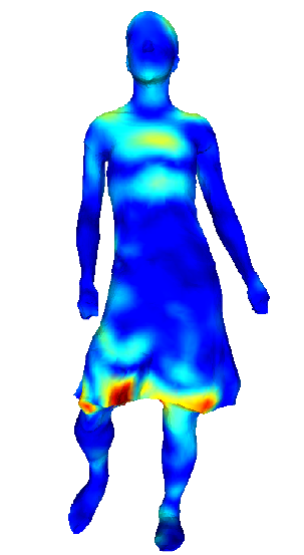} };
\node[] (a) at (13/1.6,-1/1.6-0.1) {\includegraphics[width=0.06\linewidth]{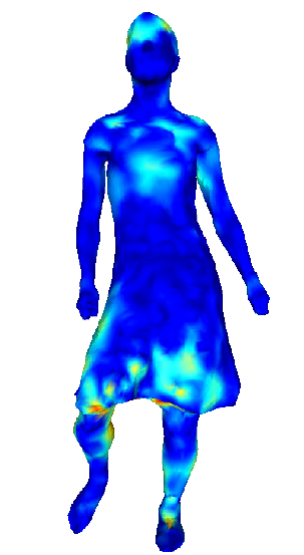} };

\node[] (a) at (0.4/1.6,4/1.6) {\includegraphics[width=0.15\linewidth]{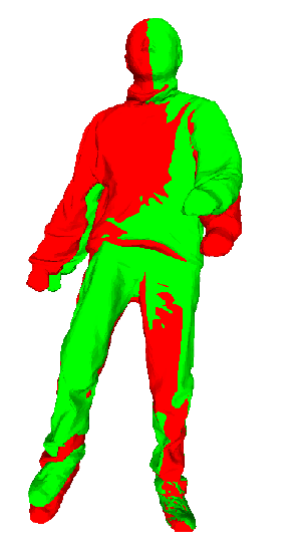} };
\node[] (a) at (3/1.6,4/1.6) {\includegraphics[width=0.15\linewidth]{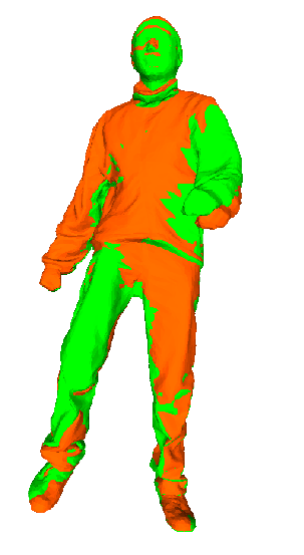} };
\node[] (a) at (6/1.6,4/1.6) {\includegraphics[width=0.15\linewidth]{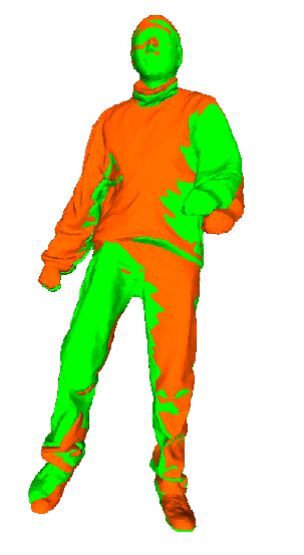} };
\node[] (a) at (9/1.6,4/1.6) {\includegraphics[width=0.15\linewidth]{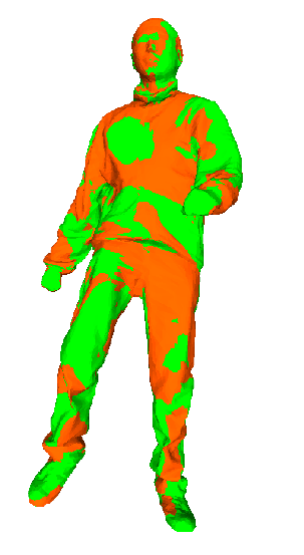} };
\node[] (a) at (12/1.6,4/1.6) {\includegraphics[width=0.15\linewidth]{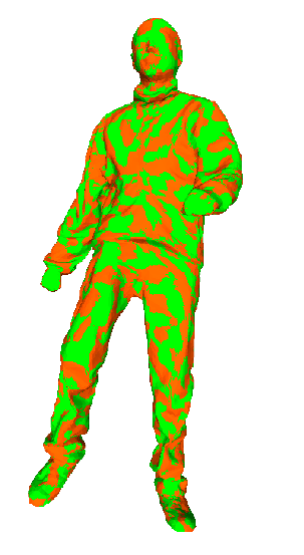} };

\node[] (a) at (4/1.6,3/1.6-0.1) {\includegraphics[width=0.06\linewidth]{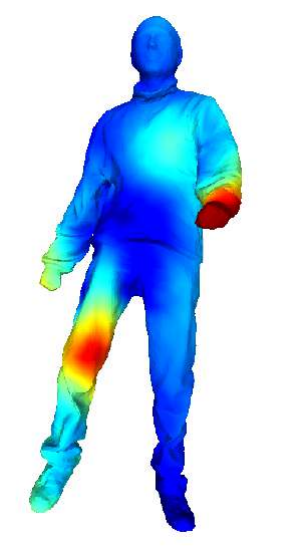} };
\node[] (a) at (7/1.6,3/1.6-0.1) {\includegraphics[width=0.06\linewidth]{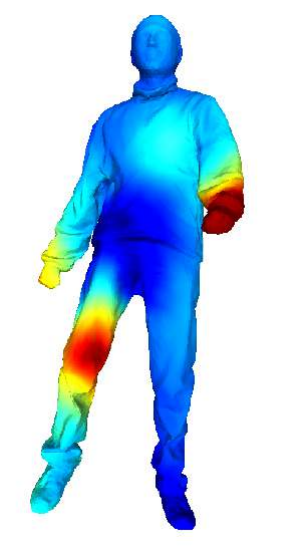} };
\node[] (a) at (10/1.6,3/1.6-0.1) {\includegraphics[width=0.06\linewidth]{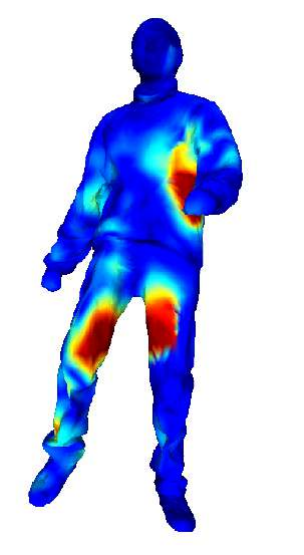} };
\node[] (a) at (13/1.6,3/1.6-0.1) {\includegraphics[width=0.06\linewidth]{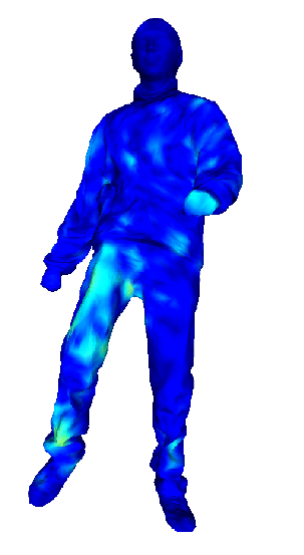} };

\node[] (a) at (0.4/1.6,8/1.6) {\includegraphics[width=0.15\linewidth]{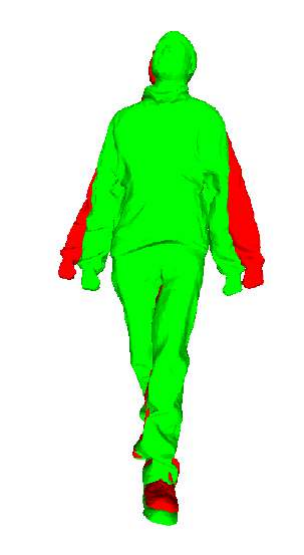} };
\node[] (a) at (3/1.6,8/1.6) {\includegraphics[width=0.15\linewidth]{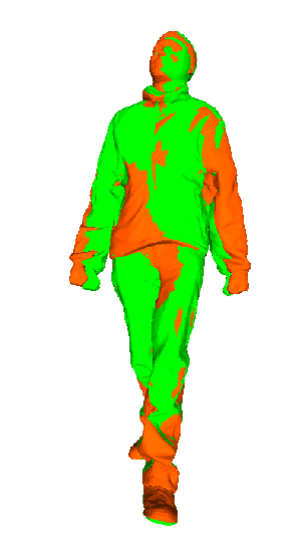} };
\node[] (a) at (6/1.6,8/1.6) {\includegraphics[width=0.15\linewidth]{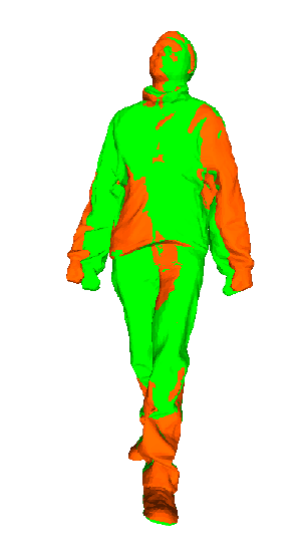} };
\node[] (a) at (9/1.6,8/1.6) {\includegraphics[width=0.15\linewidth]{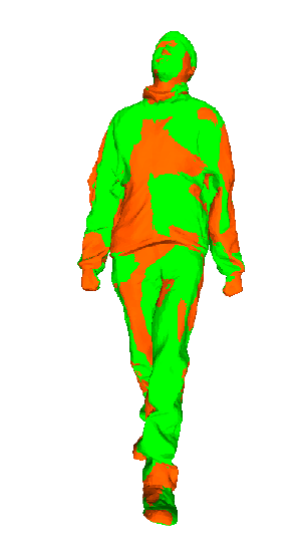} };
\node[] (a) at (12/1.6,8/1.6) {\includegraphics[width=0.15\linewidth]{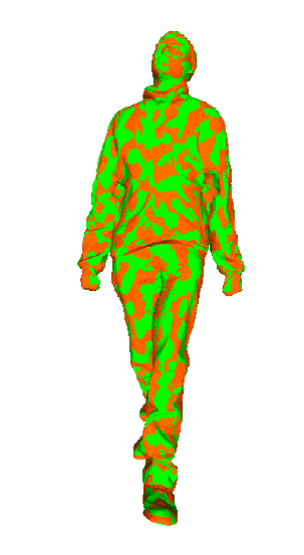} };

\node[] (a) at (4/1.6,7/1.6-0.1) {\includegraphics[width=0.06\linewidth]{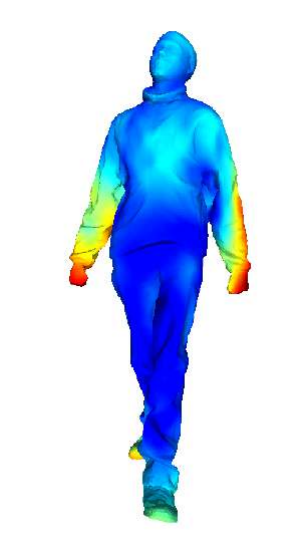} };
\node[] (a) at (7/1.6,7/1.6-0.1) {\includegraphics[width=0.06\linewidth]{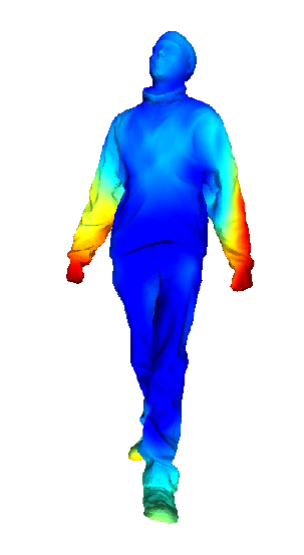} };
\node[] (a) at (10/1.6,7/1.6-0.1) {\includegraphics[width=0.06\linewidth]{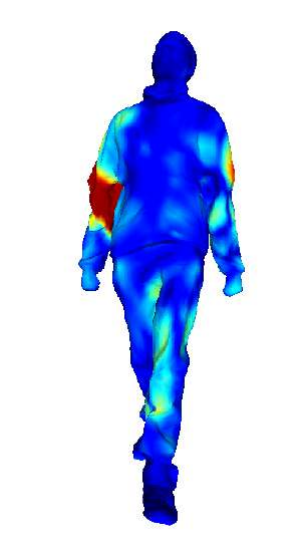} };
\node[] (a) at (13/1.6,7/1.6-0.1) {\includegraphics[width=0.06\linewidth]{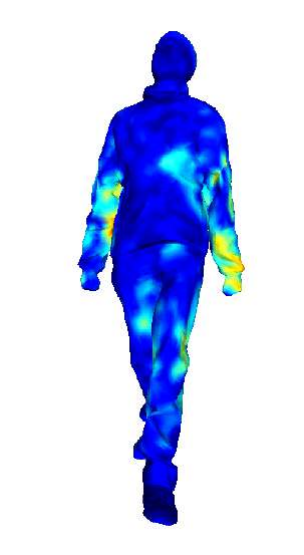} };

\node[] (a) at (0.4/1.6,12/1.6) {\includegraphics[width=0.15\linewidth]{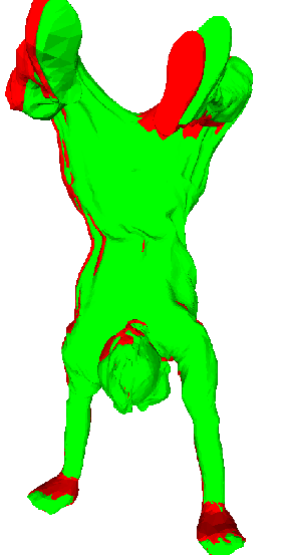} };
\node[] (a) at (3/1.6,12/1.6) {\includegraphics[width=0.15\linewidth]{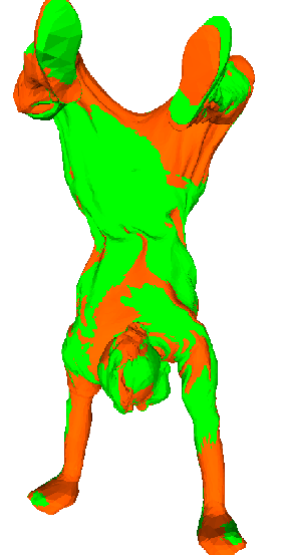} };
\node[] (a) at (6/1.6,12/1.6) {\includegraphics[width=0.15\linewidth]{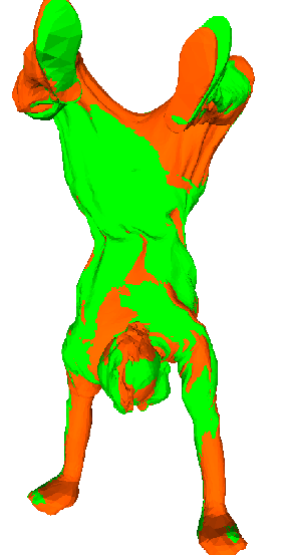} };
\node[] (a) at (9/1.6,12/1.6) {\includegraphics[width=0.15\linewidth]{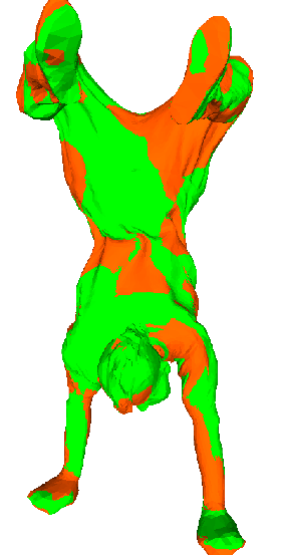} };
\node[] (a) at (12/1.6,12/1.6) {\includegraphics[width=0.15\linewidth]{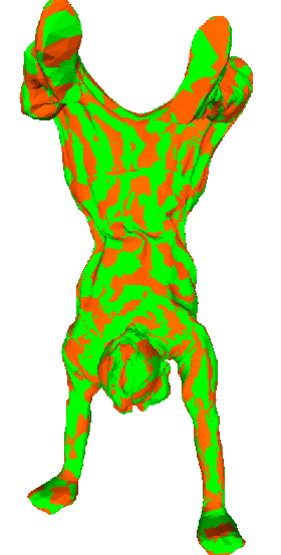} };

\node[] (a) at (4/1.6,11/1.6-0.1) {\includegraphics[width=0.06\linewidth]{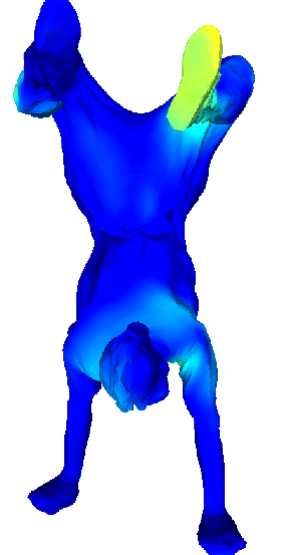} };
\node[] (a) at (7/1.6,11/1.6-0.1) {\includegraphics[width=0.06\linewidth]{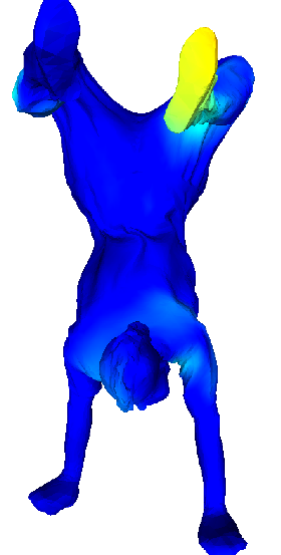} };
\node[] (a) at (10/1.6,11/1.6-0.1) {\includegraphics[width=0.06\linewidth]{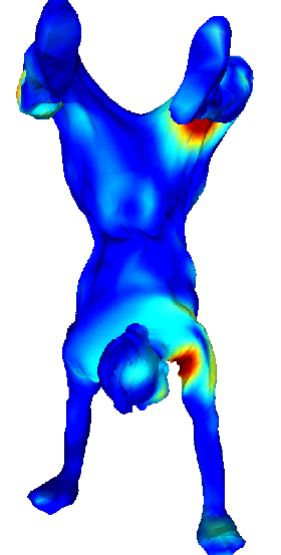} };
\node[] (a) at (13/1.6,11/1.6-0.1) {\includegraphics[width=0.06\linewidth]{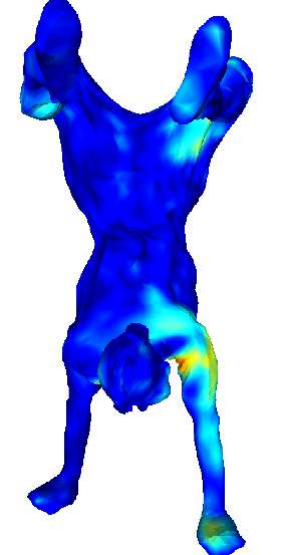} };

\node[] (a) at (0.4/1.6+0.2,-2.3/1.6) {\footnotesize Src \& Tgt};
\node[] (a) at (3/1.6,-2.3/1.6) {\footnotesize CD};
\node[] (a) at (6/1.6,-2.3/1.6) {\footnotesize P2F};
\node[] (a) at (9/1.6,-2.3/1.6) {\footnotesize AMM \cite{AMM}};
\node[] (a) at (12/1.6,-2.3/1.6) {\footnotesize Ours};

\node[] (a) at (12/1.6,14.5/1.6){\includegraphics[width=0.2\linewidth]{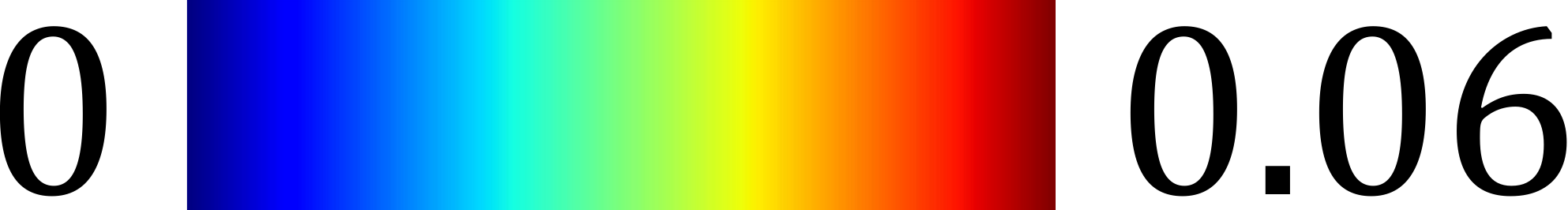}};

\end{tikzpicture}
}
\vspace{-0.5cm}
\caption{\small Visual comparisons of non-rigid registration results.  Source, target and deformed surfaces are rendered in \textcolor{red}{red}, \textcolor{green}{green} and \textcolor{orange}{orange}, respectively. \color{magenta}{\faSearch~} Zoom in to see details.} \label{NON:RIGID:REG:FIG}
\vspace{-0.3cm}
\end{figure}

\begin{figure}
    \centering
    \subfloat[handstand]{\includegraphics[width=0.5\linewidth,height=0.49\linewidth]{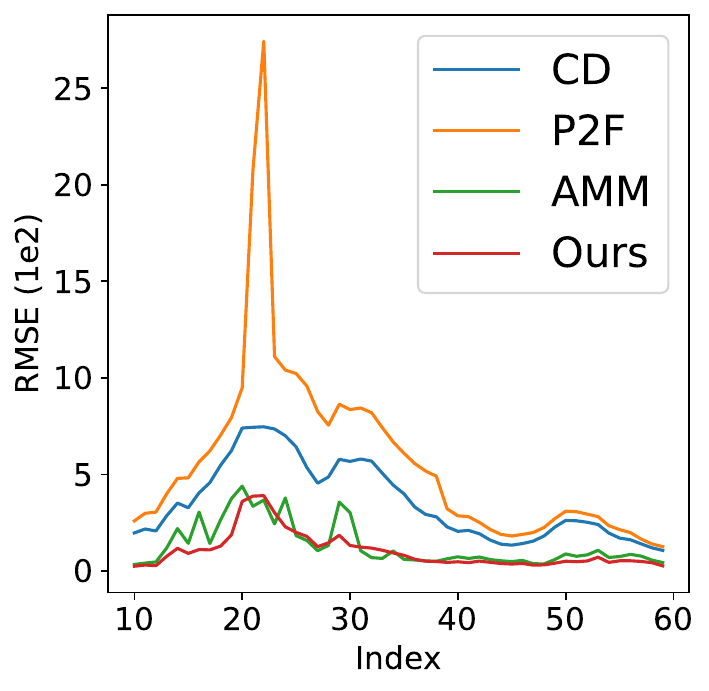}\vspace{-0.3cm}}
    \subfloat[crane]{\includegraphics[width=0.5\linewidth,height=0.5\linewidth]{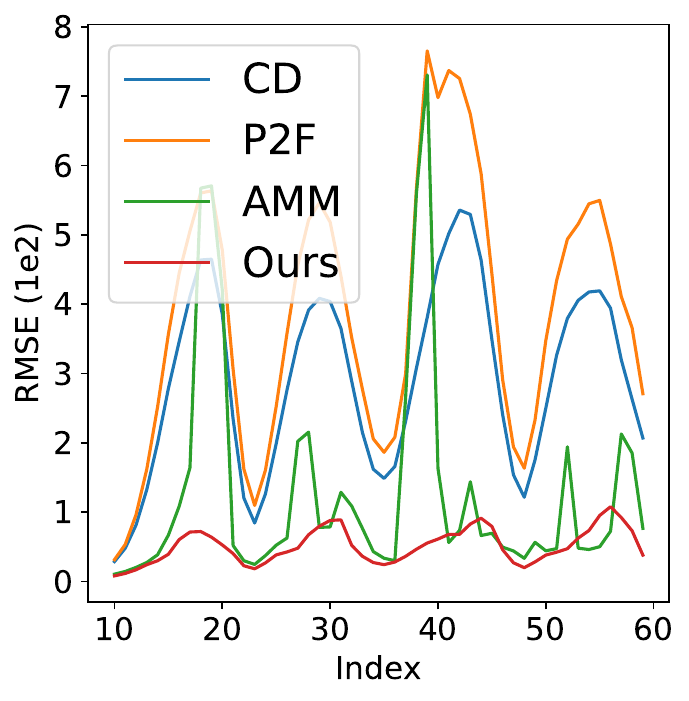}\vspace{-0.3cm}}\\  
    \vspace{0.1cm}
    \subfloat[march]{\includegraphics[width=0.5\linewidth,height=0.5\linewidth]{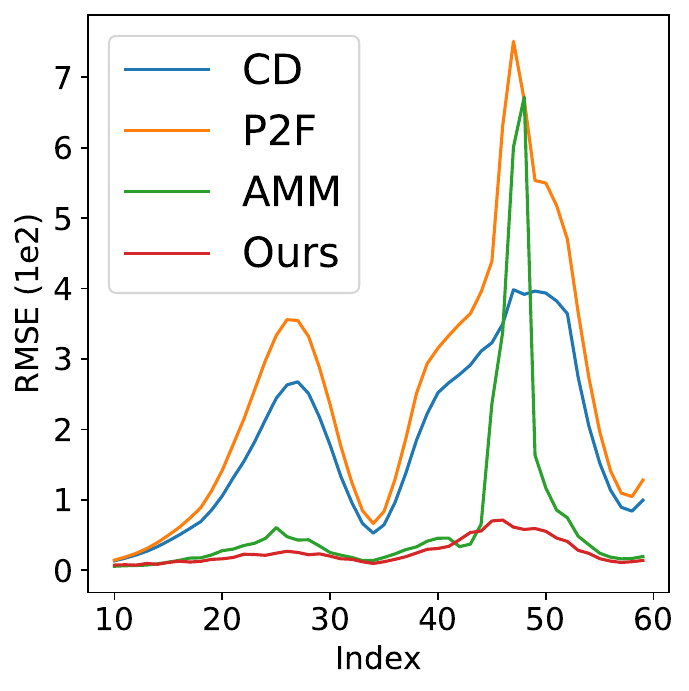}\vspace{-0.3cm}}
    \subfloat[swing]{\includegraphics[width=0.5\linewidth,height=0.5\linewidth]{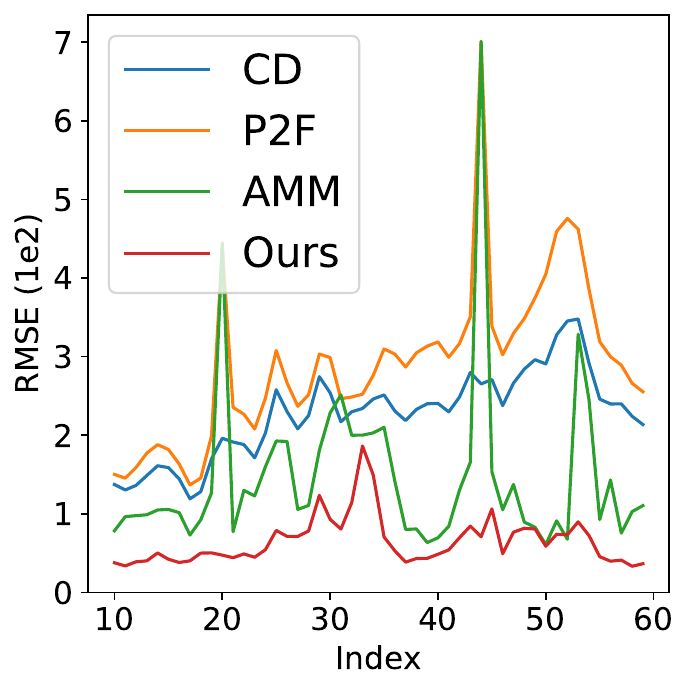}\vspace{-0.3cm}}
    \vspace{-0.3cm}
    \caption{Comparison of the non-rigid registration error of all pairs in each sequence. 
    }
    \label{RMSE:CURVE}
\end{figure}

\begin{table}[h]
    \centering
    \caption{\revise{Running time and GPU memory costs of different distance metrics in the non-rigid registration task.}}
    \renewcommand\arraystretch{1.3}
    \begin{tabularx}{1\linewidth}{>{\hsize=1.5\hsize}X|>{\centering\arraybackslash\hsize=0.6\hsize}X>{\centering\arraybackslash\hsize=0.6\hsize}X>{\centering\arraybackslash\hsize=0.6\hsize}X}
    \toprule
        Method &  CD & P2F &  Ours  \\
    \hline
        Running Time (ms/Iter.) & 42 & 110 & 111\\
        GPU Memory (MB)  & 3245 & 3264 & 3265\\
    \bottomrule
    \end{tabularx}
    \label{TEMPLATE:NONRIGID:TABLE:TIME}
\end{table}

\subsection{Scene Flow Estimation}

\subsubsection{Implementation Details} We used the Flyingthings3D dataset \cite{FLYINGTHINGS3D}, where $N_{\rm src}=N_{\rm tgt}=8192$. The hyperparameters involved in our DDM were set as $K=5$ and $M=81920$. In order to adapt the density at each point's location, the value of $\sigma$ 
was set as 3 times of the distance to its nearest point.  For the optimization-based methods, we optimized the scene flow directly with the Adam optimizer for 500 iterations with a learning rate of 0.01. For unsupervised learning-based methods, we employed two state-of-the-art methods, namely NSFP \cite{NSFP} and SCOOP \cite{SCOOP}, both of which employ CD as the distance metric. We maintained the training settings identical to those specified in their original papers, with the only modification being the replacement of the alignment criterion with our DDM.

\subsubsection{Comparisons} Following \cite{NSFP, SCOOP}, we employed \textit{End Point Error (EPE)}, \textit{Flow Estimation Accuracy (Acc)} with thresholds 0.05 and 0.1 (denoted as \textit{Acc-0.05} and \textit{Acc-0.1}), and \textit{Outliers} as the evaluation metrics. From Table \ref{SCENEFLOW:TABLE} and Fig. \ref{SCENEFLOW:FIG}, it can be seen that our DDM drives much more accurate scene flows than EMD, CD, and BCD under the optimization-based framework, and our DDM further boosts the accuracy of SOTA unsupervised learning-based methods to a significant extent, demonstrating its superiority and the importance of the distance metric in 3D data modeling. \revise{Table \ref{TEMPLATE:SCENEFLOW:TABLE:TIME} illustrates the computational cost of various distance metrics per iteration during the optimization process. Specifically, EMD demands significantly more computing resources due to the necessity of constructing a bi-directional mapping between two point sets. In contrast, CD, BCD, and our proposed DDM exhibit comparable computational requirements. However, our DDM outperforms the others in terms of flow estimation accuracy, highlighting its superior effectiveness and efficiency. }

\begin{table}[h!] 
\centering
\caption{Quantitative comparisons of scene flow estimation on the Flyingthings3D dataset \cite{FLYINGTHINGS3D} \label{SCENEFLOW:TABLE}.}
\renewcommand\arraystretch{1.1}
\begin{tabular}{l|c c c c}
\toprule %
 Method &  EPE3D(m)$\downarrow$ & Acc-0.05 $\uparrow$   & Acc-0.1$\uparrow$ & Outliers $\downarrow$ \\
\hline
{EMD} \cite{EMD} & 0.3681 & 0.1894 & 0.4226 & 0.7838 \\
{CD} \cite{CD} & 0.1557 & 0.3489 & 0.6581 & 0.6799 \\
BCD \cite{BCD} & 0.7045 & 0.0309 & 0.0980 & 0.9965 \\
Ours & \textbf{0.0843} & \textbf{0.6026} & \textbf{0.8749} & \textbf{0.4624} \\                              
\hline

NSFP \cite{NSFP} & 0.0899 & 0.6095 & 0.8496 & 0.4472 \\
Ours (NSFP)& \textbf{0.0657} & \textbf{0.7514} & \textbf{0.9138} & \textbf{0.3234}\\
\hline
SCOOP  \cite{SCOOP}           & 0.0839 & 0.5698 & 0.8516 & 0.4834 \\
Ours (SCOOP)& \textbf{0.0732} & \textbf{0.6307} & \textbf{0.8927} & \textbf{0.4374}\\

\bottomrule %
\end{tabular}
\end{table}

\begin{figure*}[h]
    \subfloat[GT]{\includegraphics[width=0.25\linewidth]{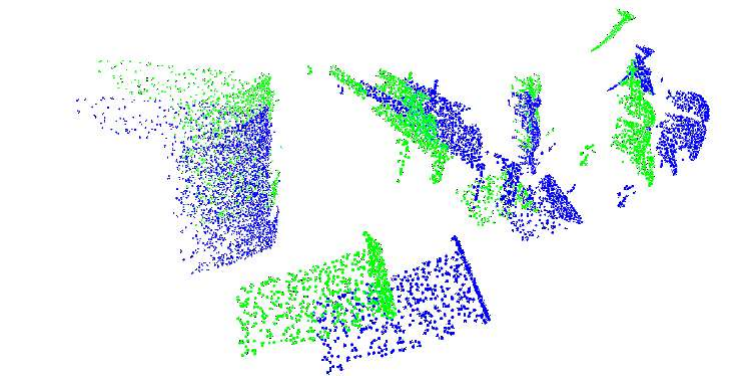}}
    \subfloat[EMD]{\includegraphics[width=0.25\linewidth]{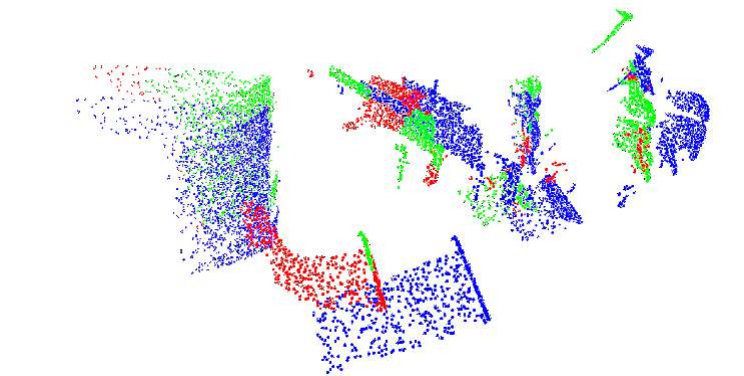}} 
    \subfloat[CD]{\includegraphics[width=0.25\linewidth]{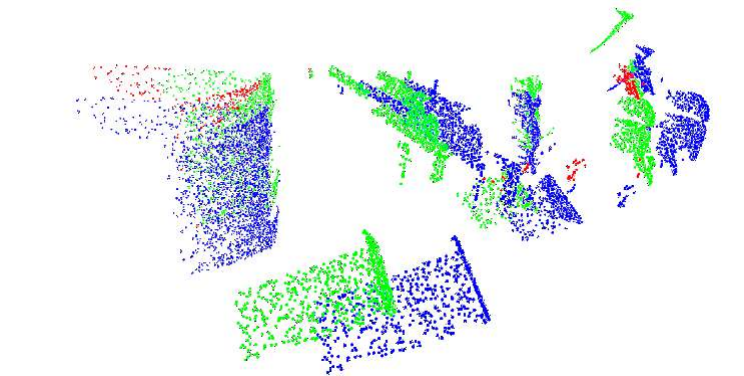}}
    \subfloat[Ours]{\includegraphics[width=0.25\linewidth]{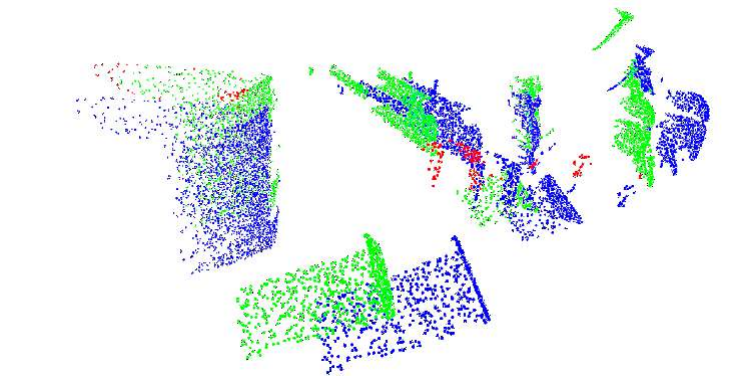}}\\
    \subfloat[NSFP]{\includegraphics[width=0.25\linewidth]{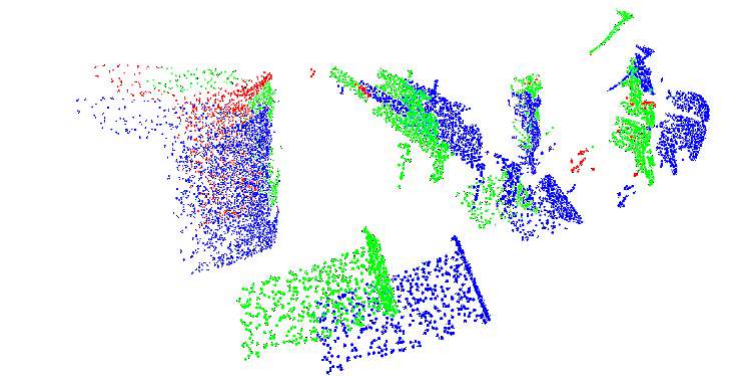}}
    \subfloat[Ours (NSFP)]{\includegraphics[width=0.25\linewidth]{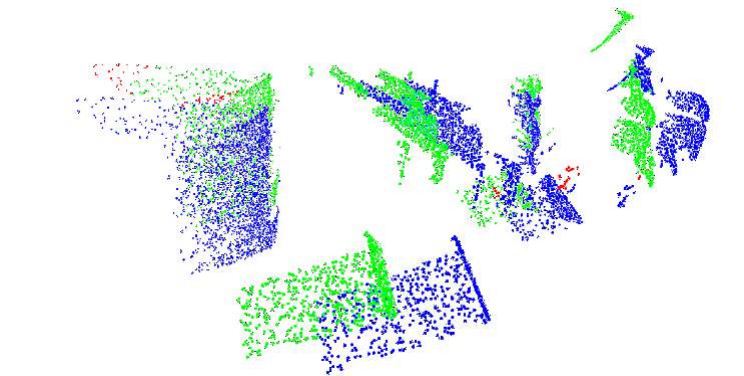}}
    \subfloat[SCOOP]{\includegraphics[width=0.25\linewidth]{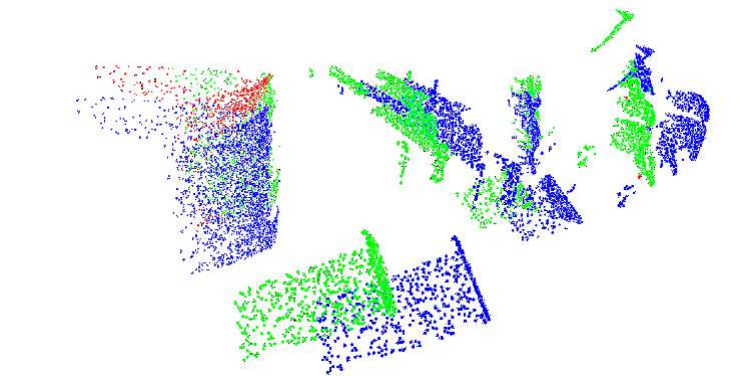}}
    \subfloat[Ours (SCOOP)]{\includegraphics[width=0.25\linewidth]{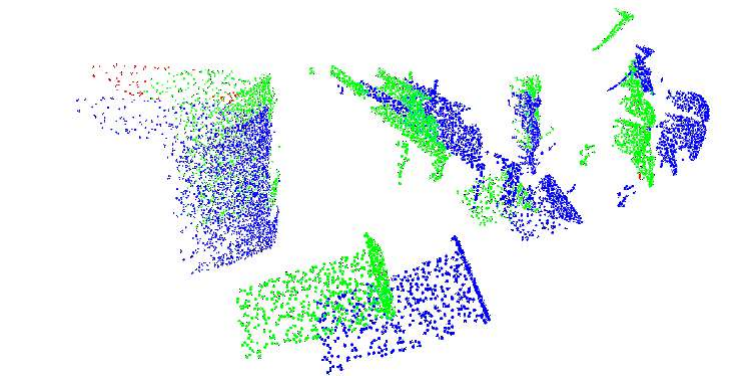}}
    \caption{\small Visual comparisons of scene flow estimation results. 
The source point cloud is represented by \textcolor{blue}{blue} points, while the points translated by the correctly predicted flow are represented by \textcolor{green}{green} points and those translated by the incorrectly predicted flow are represented by \textcolor{red}{red} points.}
\vspace{-0.3cm}
    \label{SCENEFLOW:FIG}
\end{figure*}

\begin{table}[h]
    \centering
    \caption{\revise{Running time and GPU memory costs of different distance metrics under the scene flow estimation task.}}
    \renewcommand\arraystretch{1.0}
    \revise{\begin{tabular}{l|cccc}
    \toprule
        Method & EMD & CD & BCD \cite{BCD} &  Ours  \\
    \hline
        Running Time (ms/Iter.) & 1021 & 13 & 14 & 15 \\
        GPU Memory (MB)  & 2257 & 1745 & 1755 & 1767 \\
    \bottomrule
    \end{tabular}}
    \label{TEMPLATE:SCENEFLOW:TABLE:TIME}
    \vspace{-0.3cm}
\end{table}

\subsection{Parametric Model Estimation from Point Clouds}

\subsubsection{Implementation Details} We used the CAPE dataset \cite{CAPE}, which contains numbers of clothed human models and their corresponding SMPL parameters. We selected two sequences from the whole dataset named `longlong basketball trial2' and `blazerlong volleyball trial2'. In each sequence, we used the pose of the $i$-th frame as the initial parameter and the points sampled from the ($i$+5)-th frame as the simulated scanned point cloud, where the number of the sampled points was $2\times 10^4$. The hyperparameters invovled in our DDM were set as $K=5$, $\beta=1.0$, and $M=6\times 10^4$. 
To bring the experimental setup closer to real-world conditions, where collected point clouds are typically incomplete, we positioned a virtual camera in front of the reference mesh and selectively sampled points exclusively from the visible faces, emulating the data collection process of an RGB-D camera. In such a case, the number of the sampled points was $10^4$ and the hyperparameters about our DDM were kept the same. 
The optimization was conducted through Adam optimizer, running $10^3$ iterations with a learning rate of $5\times 10^{-4}$.  

\subsubsection{Comparisons} We compared our DDM with CD and the P2F distance. 
For the fairness, the number of sampled points of these two baseline methods was set the same as our DDM. We utilized the \textit{Vertex-to-Vertex Error} (\textit{V2V}) between the estimated and ground truth SMPL models for quantitative evaluation. The numerical and visual results are shown in Table \ref{TAB:SMPL:REG} and Fig. \ref{FIG:SMPL:REG}, respectively. Obviously, our DDM is much better than the baseline distance metrics under the scenarios of 
both the whole and partial canned point clouds. \revise{Additionally, Table \ref{SMPL:TABLE:TIME} presents the computational resource requirements of different distance metrics during the optimization process, showing that our DDM achieves comparable running time and GPU memory usage. This further highlights the efficiency of our proposed method in practical applications.}

\begin{table}[h]
\centering
\caption{Quantitative comparison of parametric model estimation on the selected sequences from the CAPE dataset \cite{CAPE}.}
\renewcommand\arraystretch{1.0}
\begin{tabular}{l|l|c c c}
\toprule
\multirow{2}{*}{Scan} & \multirow{2}{*}{Sequence} & \multicolumn{3}{c}{V2V $\downarrow$  $\pm$ STD $\downarrow$  ($\times 10^{-2}$)} \\
\cline{3-5}
    &  & CD & P2F & Ours\\
\hline
\multirow{2}{*}{Whole}  & longlong basketball & 1.46$\pm$0.53 & 1.58$\pm$0.62 & \textbf{0.96$\pm$0.40} \\
                        &blazerlong volleyball & 1.93$\pm$0.92 &2.05$\pm$0.98 &\textbf{1.25$\pm$0.55} \\
\hline
\multirow{2}{*}{Partial}    & longlong basketball & 4.05$\pm$0.63 &2.17$\pm$0.71 & \textbf{1.71$\pm$0.44} \\
                            & blazerlong volleyball & 4.42$\pm$0.76 &3.01$\pm$1.75 & \textbf{1.97$\pm$0.56} \\
\bottomrule
\end{tabular}
\label{TAB:SMPL:REG}
\end{table}

\begin{figure}[h]
\centering
{
\begin{tikzpicture}[]
\node[] (a) at (0,0) {\includegraphics[width=0.2\linewidth]{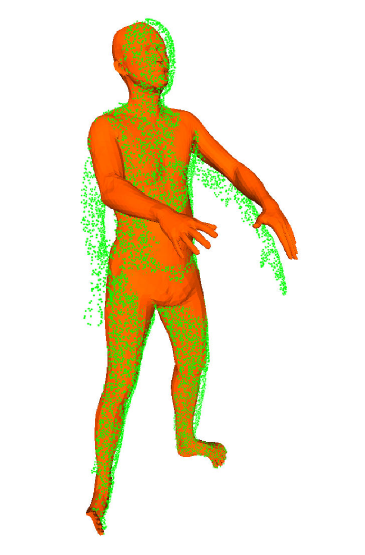} };
\node[] (a) at (2.8/1.6,0) {\includegraphics[width=0.2\linewidth]{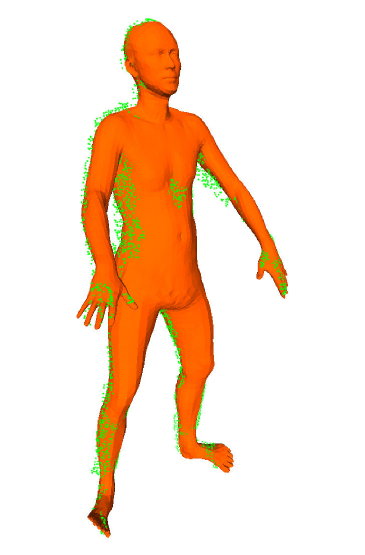} };
\node[] (a) at (5.6/1.6,0) {\includegraphics[width=0.2\linewidth]{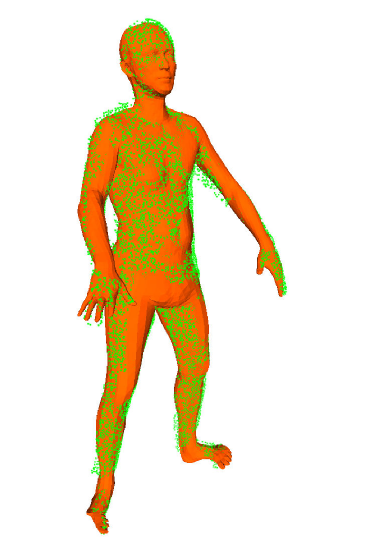} };
\node[] (a) at (8.4/1.6,0) {\includegraphics[width=0.2\linewidth]{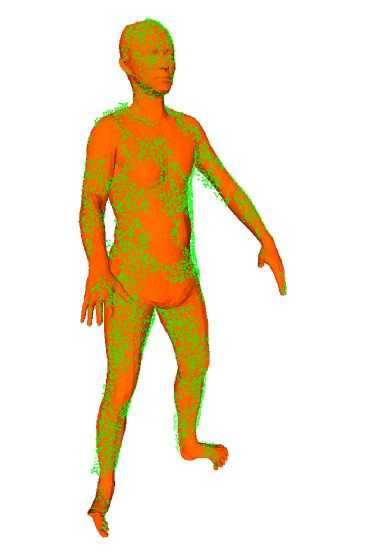} };
\node[] (a) at (11.2/1.6,0) {\includegraphics[width=0.2\linewidth]{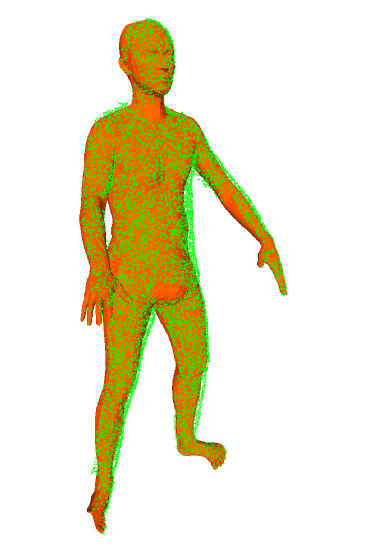} };

\node[] (a) at (0/1.6,4.5/1.6) {\includegraphics[width=0.2\linewidth]{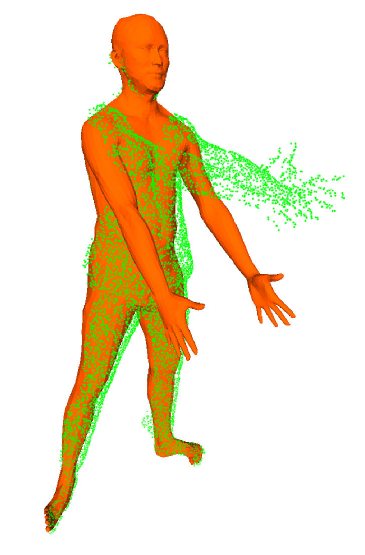} };
\node[] (a) at (2.8/1.6,4.5/1.6) {\includegraphics[width=0.2\linewidth]{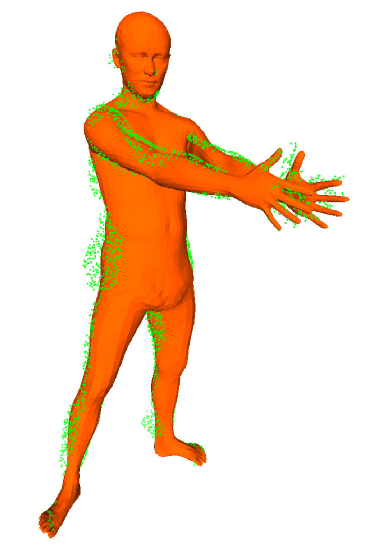} };
\node[] (a) at (5.6/1.6,4.5/1.6) {\includegraphics[width=0.2\linewidth]{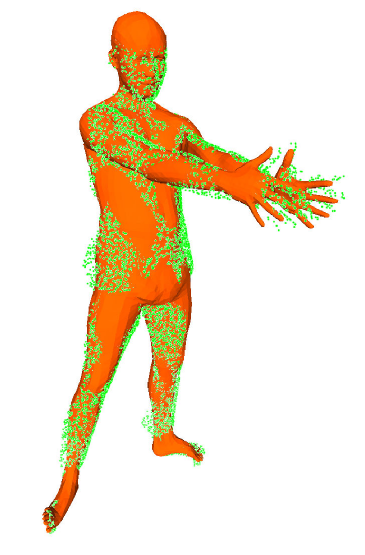} };
\node[] (a) at (8.4/1.6,4.5/1.6) {\includegraphics[width=0.2\linewidth]{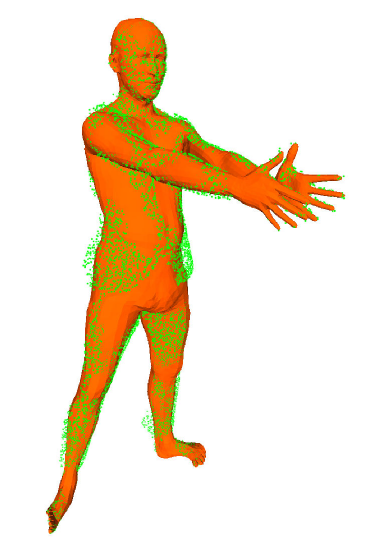} };
\node[] (a) at (11.2/1.6,4.5/1.6) {\includegraphics[width=0.2\linewidth]{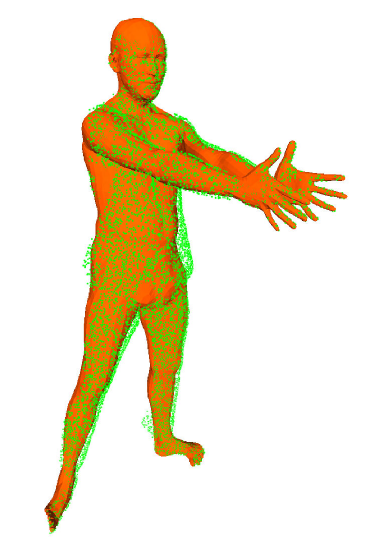} };

\node[] (a) at (0/1.6,9/1.6) {\includegraphics[width=0.2\linewidth]{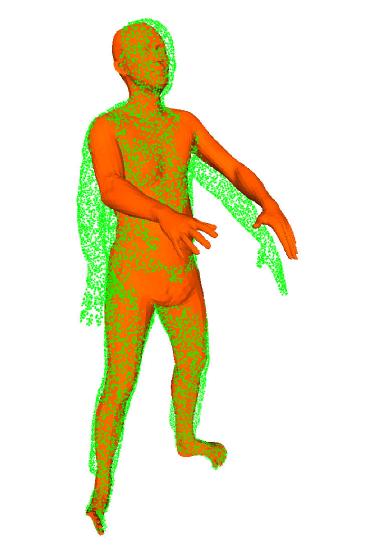} };
\node[] (a) at (2.8/1.6,9/1.6) {\includegraphics[width=0.2\linewidth]{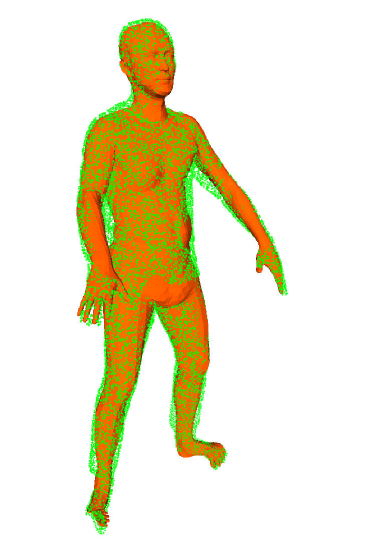} };
\node[] (a) at (5.6/1.6,9/1.6) {\includegraphics[width=0.2\linewidth]{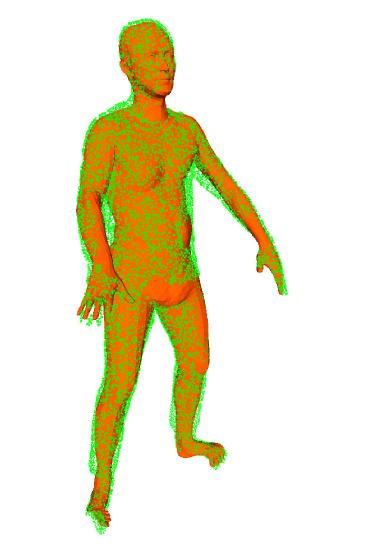} };
\node[] (a) at (8.4/1.6,9/1.6) {\includegraphics[width=0.2\linewidth]{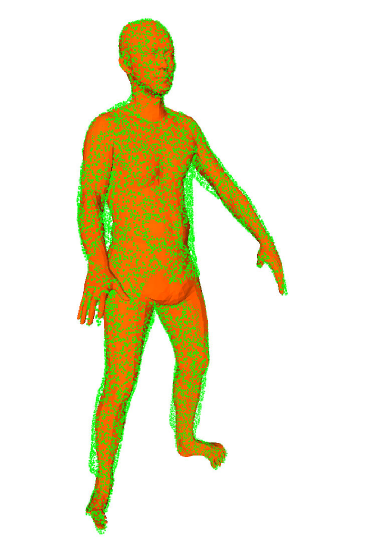} };
\node[] (a) at (11.2/1.6,9/1.6) {\includegraphics[width=0.2\linewidth]{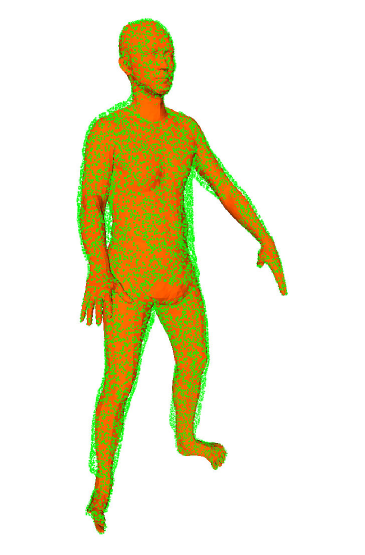} };

\node[] (a) at (0/1.6,13.5/1.6) {\includegraphics[width=0.2\linewidth]{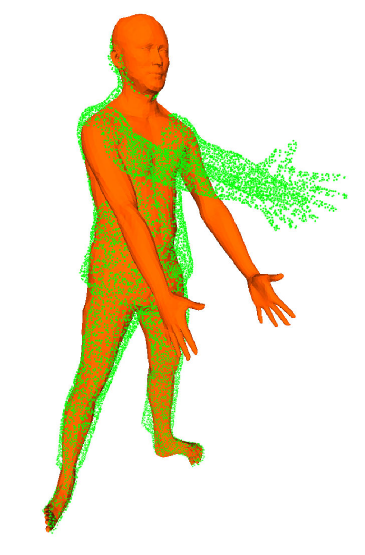} };
\node[] (a) at (2.8/1.6,13.5/1.6) {\includegraphics[width=0.2\linewidth]{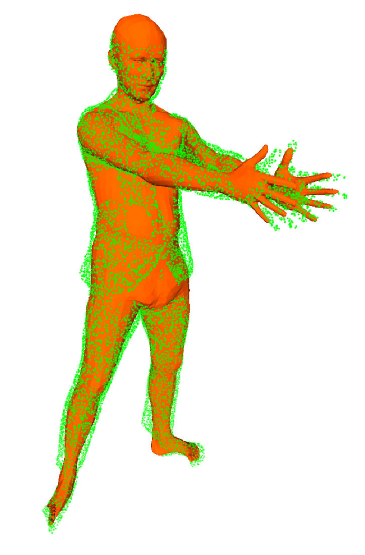} };
\node[] (a) at (5.6/1.6,13.5/1.6) {\includegraphics[width=0.2\linewidth]{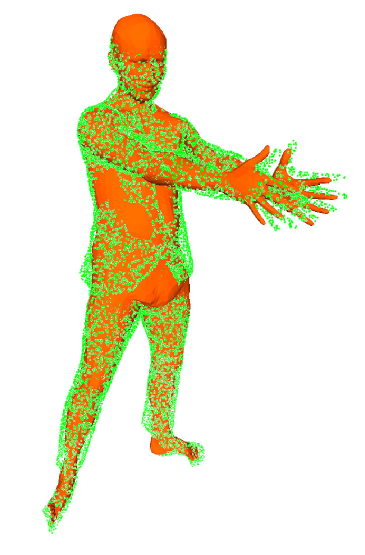} };
\node[] (a) at (8.4/1.6,13.5/1.6) {\includegraphics[width=0.2\linewidth]{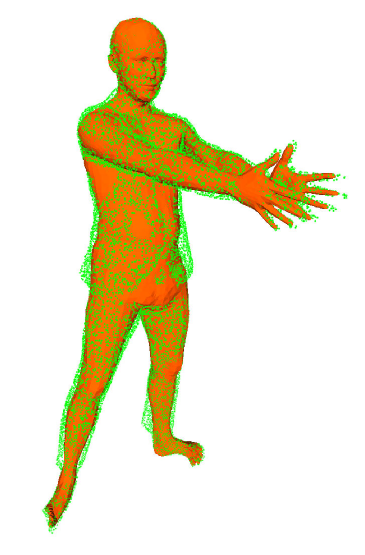} };
\node[] (a) at (11.2/1.6,13.5/1.6) {\includegraphics[width=0.2\linewidth]{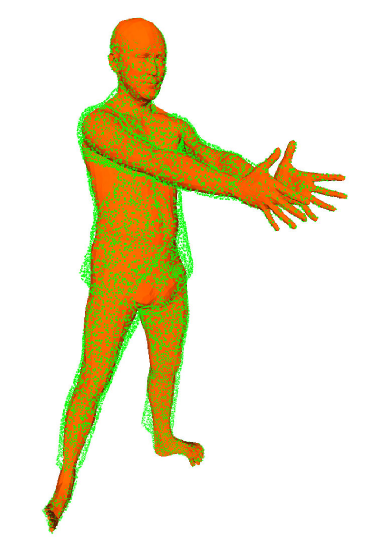} };



\node[] (a) at (0/1.6+0.2,-2.3/1.6) {\footnotesize Initial};
\node[] (a) at (2.8/1.6,-2.3/1.6) {\footnotesize CD};
\node[] (a) at (5.6/1.6,-2.3/1.6) {\footnotesize P2F};
\node[] (a) at (8.4/1.6,-2.3/1.6) {\footnotesize Ours};
\node[] (a) at (11.2/1.6,-2.3/1.6) {\footnotesize GT};

\draw [dashed] (-0.8,4.3) -- (7.8,4.3);

\end{tikzpicture}
}
\vspace{-0.6cm}
\caption{\small Visual comparisons of parametric model estimation results.  
Scanned point clouds and SMPL models are presented in \textcolor{green}{green} and \textcolor{orange}{orange}, respectively. The upper two rows show results by completely scanned point clouds, whereas the bottom two rows by partially scanned point clouds. \color{magenta}{\faSearch~} Zoom in to see details.
} \label{FIG:SMPL:REG}
\vspace{-0.5cm}
\end{figure}

\begin{table}[!h]
    \centering
    \caption{\revise{Running time and GPU memory costs of different distance metrics in the parametric model estimation task.}}
    \renewcommand\arraystretch{1.0}
    \revise{\begin{tabularx}{1\linewidth}{>{\hsize=1.8\hsize}X|>{\centering\arraybackslash\hsize=0.7\hsize}X>{\centering\arraybackslash\hsize=0.7\hsize}X>{\centering\arraybackslash\hsize=0.7\hsize}X}
    \toprule
        Method &  CD & P2F &  Ours  \\
    \hline
        Running Time (ms/Iter.) & 60 & 64 & 65 \\
        GPU Memory (MB)  & 1813 & 1810& 1809\\
    \bottomrule
    \end{tabularx}}
    \label{SMPL:TABLE:TIME}
\end{table}

\subsection{Ablation Study}
We conducted comprehensive ablation studies to help better understand our DDM. According to the different representations of 3D geometric models under evaluation, we divided our ablation studies into three major categories, (1) point cloud to point cloud, (2) triangle mesh to triangle mesh, and (3) triangle mesh to point cloud, with each one conducted on the corresponding task.

\subsubsection{Point Cloud to Point Cloud}
We opted for the rigid registration task to conduct an ablation study on our DDM when both models are represented with point clouds. We assessed registration accuracy under varying hyperparameters, including the number of reference points $M$, $\beta$ in Eq. \eqref{equ:s(q)} that modulates the confidence score, assigning smaller weights to differences in non-overlapping areas, and the standard deviation of Gaussian noise $\sigma$ controlling the distribution of generated reference points. As shown in Table \ref{ABLATION:P2P}, with the increment of the value of $M$, registration accuracy improves slightly; however, an excessive number of reference points escalates time and memory costs, making it impractical.  there is a significant decline in registration accuracy when $\beta=0$, indicating its importance and necessity. Conversely, a large $\beta$ leads to a slight drop in accuracy because all reference points exhibit similar scores, making it challenging to distinguish differences introduced by overlapping and non-overlapping areas.  Finally, the value of $\sigma$ has a subtle impact on the registration accuracy.

\begin{table}
\centering
\caption{Rigid registration accuracy and calculation cost under different settings. The default setting is indicated by \underline{underlining}. 
}  \label{ABLATION:P2P}
\renewcommand\arraystretch{1.0}
    \resizebox{1\linewidth}{!}{
    \begin{tabular}{l|c c c| c c}
        \toprule
        Setting & SR (\%) & RE ($^\circ$) & TE (cm) & Time (ms) & GPU (MB) \\
        \hline
        $M=N_{\rm 2}\times$1 & 83.20 &1.511  & 6.047 & 18.518 & 1721 \\
        $M=N_{\rm 2}\times$5 & 82.71 & 1.477 & 5.980 & 19.876 & 1741 \\
        \underline{$M=N_{\rm 2}\times$10} & 82.77 & 1.485 & 6.098 & 21.872 & 1745 \\ 
        $M=N_{\rm 2}\times$20 & 82.65 & 1.495 & 6.033 & 24.271 & 1779 \\
        \hline
        $\beta=10$ & 82.65 & 1.499 & 6.152 & 22.011 & 1745 \\
        \underline{$\beta=20$} & 82.77 & 1.485 & 6.098 & 21.872 & 1745 \\
        $\beta=30$ & 81.78 &1.586 & 6.264 & 21.882 & 1745\\
        \hline
        $\sigma=0.01$ & 83.45 & 1.461 & 6.024 & 21.643 & 1745 \\
        \underline{$\sigma=0.05$}  & 82.77 & 1.485 & 6.098 & 21.872 & 1745 \\
        $\sigma=0.1$ & 82.21 & 1.488  & 5.968 & 22.023 & 1745 \\
        \bottomrule
    \end{tabular} }
    
\end{table}

\subsubsection{Triangle Mesh to Triangle Mesh}
We utilized the non-rigid registration task to conduct an ablation study on our DDM, where both 3D models are represented with triangle meshes. We varied the values of $M$ and $\sigma$ to examine how these two hyperparameters impact registration accuracy. The results are presented in Table \ref{ABLATION:T2T}. Evidently, employing more reference points leads to higher accuracy, but it also escalates computational costs, particularly in terms of time. Therefore, in practical applications, employing an excessive number of reference points is unnecessary.  When the value of $\sigma$ is extremely small, the generated reference points are distributed very closely to the surface, and thus, fail to effectively capture the differences between the two DDFs, leading to diminished accuracy.

\begin{table}[h]
\centering
\caption{Non-rigid registration accuracy and calculation cost under different setting. The default setting is indicated by \underline{underlining}. 
} \label{ABLATION:T2T}
\renewcommand\arraystretch{1.0}
\begin{tabular}{l|c| c c}
    \toprule
    Setting & RMSE ($\times 10^{-2}$) & Time (ms) & GPU (MB)\\
    \hline
    $M=2\times 10^{4}$  & 0.857 & 99.90& 3257 \\
    \underline{$M=4\times 10^{4}$}  & 0.612 & 110.74 & 3265 \\  
    $M=6\times 10^{4}$  & 0.580 & 151.51 & 3267 \\
    \hline
    $\sigma=0.05$ &  0.646 & 110.65 & 3265 \\
    \underline{$\sigma=0.1$} &   0.612 & 110.74 & 3265 \\  
    $\sigma=0.2$  &  0.618 & 112.64 & 3265 \\
    \bottomrule

\end{tabular}
\end{table}

\subsubsection{Triangle Mesh and Point Cloud}
We opted for the parametric model estimation task to conduct an ablation study of our DDM, employing point cloud and triangle mesh models. We varied the values of $M$, $\beta$, and $\sigma$ to examine their impact on accuracy. The results are presented in Table \ref{ABLATION:T2P}. Evidently, employing a greater number of reference points leads to higher accuracy, but at the cost of increased computation, particularly in terms of time. Therefore, in practical applications, it is unnecessary to use an excessively large number of reference points. Considering the influence of clothing, the SMPL models and the scanned data do not perfectly overlap; consequently, overly large or small values of $\beta$ can diminish accuracy. When the value of $\sigma$ is extremely small, the reference points are  positioned extremely close to the surfaces, and they cannot measure the discrepancy between the two DDFs, resulting in decreased accuracy.

\begin{table}[h]
    \centering
    \caption{Comparison of parametric model estimation accuracy under different settings of DDM. The default setup is indicated by \underline{underlining}. 
    } \label{ABLATION:T2P}
    \renewcommand\arraystretch{1.0}
    \begin{tabular}{l|c | c c}
    \toprule
    Setting & V2V ($\times 10^{-2}$) & Time (ms) & GPU (MB)\\
    \hline
    $M=2\times 10^4$ & 1.855 & 45.248 & 1786 \\ 
    \underline{$M=6\times 10^4$} & 1.074 & 64.766 & 1810 \\  
    $M=12\times 10^4$ & 1.072 & 104.058 & 1846 \\ 
    \hline
    $\beta=0$ &  1.104 & 63.534 & 1810  \\ 
    \underline{$\beta=1$} &   1.074 & 64.766 & 1810  \\  
    $\beta=10$  &  1.362 & 64.137 & 1810  \\ 
    \hline
    $\sigma=0.01$ & 1.126 & 64.667 & 1810 \\
    \underline{$\sigma=0.05$} &  1.074& 64.766 & 1810  \\  
    $\sigma=0.1$  &  1.073& 64.245 & 1810 \\  
    \bottomrule
    \end{tabular}
    \label{tab:my_label}
\end{table}

\section{Conclusion} \label{CONCLUSION}
We have introduced DDM, a robust and versatile metric for efficiently and effectively measuring the discrepancy between 3D geometric models. Unlike existing methods that primarily focus on establishing direct correspondences between two models and subsequently aggregating point-wise distances between corresponding points,  DDM takes a different approach by measuring the discrepancy between the DDFs of the two models, which indirectly establish correspondence between two models while capturing the local surface geometry of 3D models. By integrating DDM into various 3D geometric modeling tasks, such as template surface fitting, rigid and non-rigid registration, scene flow estimation, and human pose optimization, we have demonstrated its substantial superiority in these specific tasks through extensive experiments. We believe that  the introduction of our DDM could significantly advance the progress in the realm of 3D geometric modeling and processing.

\if 0
\appendices
\section{Proof of the First Zonklar Equation}
Appendix one text goes here.

\section{123}
Appendix two text goes here.

\ifCLASSOPTIONcompsoc
  \section*{Acknowledgments}
\else
  \section*{Acknowledgment}
\fi

The authors would like to thank...
\fi 

\ifCLASSOPTIONcaptionsoff
  \newpage
\fi

{\footnotesize 
\bibliographystyle{IEEEtran}
\bibliography{egbib}}




\end{document}